\documentclass{article}
\usepackage{wrapfig}
\usepackage{caption}
\usepackage{microtype}
\usepackage{graphicx}
\usepackage{xurl}
\usepackage{subcaption}
\usepackage{longtable}
\usepackage{booktabs}
\usepackage{array}
\usepackage{ragged2e}
\usepackage{enumitem}
\usepackage{hyperref}

\newcommand{\deltarow}[1]{\textcolor{gray}{#1}}
\usepackage[table]{xcolor}
\usepackage{arydshln} 
\definecolor{tblDelta}{RGB}{250,250,250}
\definecolor{tblHead}{RGB}{232,232,232}   
\definecolor{tblGroup}{RGB}{245,245,245}  
\definecolor{tblOdd}{RGB}{246,246,246}    
\definecolor{tblEven}{RGB}{236,236,236}   
\definecolor{tblHi}{RGB}{247, 242, 255}
\definecolor{overallCol}{RGB}{247, 242, 255} 
\usepackage{amsthm}
\newtheorem{lemma}{Lemma} 

\usepackage{gradient-text}  
\usepackage[most]{tcolorbox}
\usepackage{xcolor}
\usepackage{fontawesome5} 

\newcounter{takeawayctr}

\definecolor{TFFrame1}{HTML}{d65d48} %
\definecolor{TFBack1}{HTML}{FEFCFB}  %

\newtcolorbox{takeawaybox}{
  enhanced,
  colback=TFBack1,
  colframe=TFFrame1,
  boxrule=0.5pt,
  arc=1.5mm,
  left=1.6mm,right=1.6mm,top=0.9mm,bottom=0.9mm,
  before skip=3.8pt,
  after skip=4pt,
}

\newcommand{\TheoreticalFinding}[1]{%
  \refstepcounter{takeawayctr}%
  \begin{takeawaybox}
    \small
    \textcolor{orange!80!black}{\faLightbulb}\hspace{0.35em}%
    \textbf{Theoretical Finding \thetakeawayctr:}~#1
  \end{takeawaybox}
}

\definecolor{TFFrame}{HTML}{56519b} 
\definecolor{TFBack}{HTML}{FAFAFE}  

\newcounter{theoryctr}

\newtcolorbox{theorybox}{
  enhanced,
  colback=TFBack,
  colframe=TFFrame,
  boxrule=0.5pt,
  arc=1.5mm,
  left=1.6mm,right=1.6mm,top=0.9mm,bottom=0.9mm,
  before skip=3.8pt,
  after skip=4pt,
}

\newcommand{\EmpiricalFinding}[1]{%
  \refstepcounter{theoryctr}%
  \begin{theorybox}
    \small
    \textcolor{TFFrame}{\faLightbulb}\hspace{0.35em}%
    \textbf{Empirical Finding \thetheoryctr:}~#1
  \end{theorybox}
}

\usepackage{graphicx}
\usepackage[export]{adjustbox} 
\usepackage{xcolor}
\usepackage{booktabs}
\usepackage{tabularx}
\usepackage{amsmath}
\usepackage[most]{tcolorbox}
\usepackage{pifont}
\usepackage{array}
\usepackage{xparse}
\newcommand{\posnum}{\textcolor{green!60!black}{1}}
\newcommand{\negnum}{\textcolor{red!70!black}{0}}

\newcommand{\scorefmt}[1]{%
  \ifdim #1 pt > 0pt \textcolor{green!55!black}{#1}\else \textcolor{red!70!black}{#1}\fi
}
\newcolumntype{Y}{>{\raggedright\arraybackslash}X}

\ExplSyntaxOn
\cs_new_protected:Npn \cs_md_bold_render:n #1
  {
    \tl_set:Nn \l_tmpa_tl {#1}
    \regex_replace_all:nnN
      { \*\*([^\*]+)\*\* }
      { \c{textbf}\cB\{\1\cE\} }
      \l_tmpa_tl
    \tl_use:N \l_tmpa_tl
  }
\NewDocumentCommand{\RenderMD}{m}{\cs_md_bold_render:n {#1}}
\ExplSyntaxOff

\newcommand{\StepLabel}[1]{\parbox[t]{\linewidth}{\ttfamily\strut Step-#1\strut}}
\newcommand{\ContentCell}[1]{%
  \parbox[t]{\linewidth}{%
    \raggedright
    \setlength{\parindent}{0pt}%
    \setlength{\parskip}{2pt}%
    \footnotesize
    \begingroup\obeylines\RenderMD{#1}\endgroup
  }%
}
\newcommand{\StepRowPretty}[3]{%
  \StepLabel{#1} & \ContentCell{#2} & \scorefmt{#3} &
  \ifdim #3 pt > 0pt \posnum \else \negnum \fi \\
}

\newtcolorbox{casestudybox}[1]{%
  enhanced,
  colback=white,
  colframe=blue!55!black,
  boxrule=0.9pt,
  arc=2mm,
  left=3mm,right=3mm,top=2mm,bottom=2mm,
  title=\textbf{#1},
  breakable
}

\usepackage[preprint]{icml2026}

\usepackage{amsmath}
\usepackage{amssymb}
\usepackage{mathtools}
\usepackage{amsthm}

\usepackage[capitalize,noabbrev]{cleveref}

\theoremstyle{plain}

\theoremstyle{definition}

\theoremstyle{remark}

\usepackage[textsize=tiny]{todonotes}

\icmltitlerunning{Training Data Efficiency in Multimodal Process Reward Models}

\begin{document}

\twocolumn[
  \icmltitle{
  Training Data Efficiency in Multimodal Process Reward Models 
  }




  \begin{icmlauthorlist}
    \icmlauthor{Jinyuan Li}{yyy}
    \icmlauthor{Chengsong Huang}{yyy}
    \icmlauthor{Langlin Huang}{yyy}
    \icmlauthor{Shaoyang Xu}{sss}
    \icmlauthor{Haolin Liu}{vvv}
    \icmlauthor{Wenxuan Zhang}{sss}
    \icmlauthor{Jiaxin Huang}{yyy}
  \end{icmlauthorlist}

  \icmlaffiliation{yyy}{Washington University in St. Louis}
   \icmlaffiliation{sss}{Singapore University of Technology and Design}
   \icmlaffiliation{vvv}{University of Virginia}

  \icmlcorrespondingauthor{Jinyuan Li}{ljinyuan@wustl.edu}
  \icmlcorrespondingauthor{Jiaxin Huang (Corresponding Author)}{jiaxinh@wustl.edu}

  \icmlkeywords{Machine Learning, ICML}

  \vskip 0.3in
]



\printAffiliationsAndNotice{}  

\begin{abstract}
Multimodal Process Reward Models (MPRMs) are central to step-level supervision for visual reasoning in MLLMs. 
Training MPRMs typically requires large-scale Monte Carlo (MC)-annotated corpora, incurring substantial training cost. 
This paper studies the data efficiency for MPRM training.
Our preliminary experiments reveal that MPRM training quickly saturates under random subsampling of the training data, indicating substantial redundancy within existing MC-annotated corpora.
To explain this, we formalize a theoretical framework and reveal that informative gradient updates depend on two factors: label mixtures of positive/negative steps and label reliability (average MC scores of positive steps). 
Guided by these insights, we propose the \emph{Balanced-Information Score} (BIS), which prioritizes both mixture and reliability based on existing MC signals at the rollout level, without incurring any additional cost. 
Across two backbones (InternVL2.5-8B and Qwen2.5-VL-7B) on VisualProcessBench, BIS-selected subsets consistently match and even surpass the full-data performance at small fractions. 
Notably, the BIS subset reaches full-data performance using only 10\% of the training data, improving over random subsampling by a relative 4.1\%. 
Our code is released
\href{https://github.com/JinYuanLi0012/Balanced-Info-MPRM}%
{\gradientRGB{Balanced-Info-MPRM}{214,93,72}{86,81,155}}.

\end{abstract}

\begin{figure}[t]
\centering
\includegraphics[width=\columnwidth]{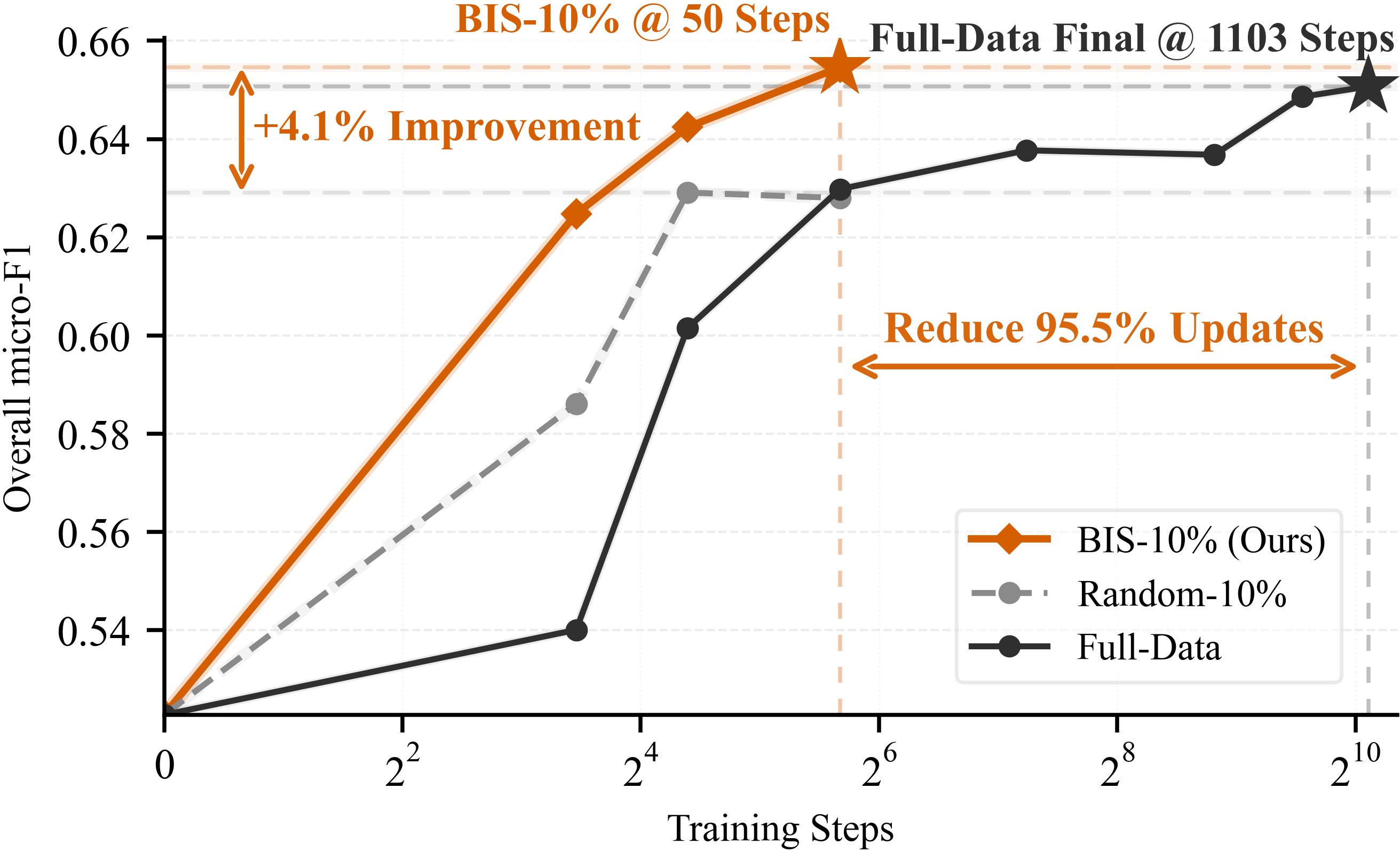}
\caption{
Overall micro-F1 on VisualProcessBench using InternVL2.5-8B trained via full training set or different subsets. 
Our BIS-10\% effectively matches the final performance of Full-Data setting using 10$\times$ fewer rollouts.
}
\label{fig:intro-bis}
\end{figure}

\section{Introduction}
\looseness=-1
Process Reward Models (PRMs)~\cite{ma2023let,zhu2025retrieval,tan2025aurora} provide step-level supervision for reasoning by scoring intermediate steps instead of only the final answer. 
In multimodal reasoning, Multimodal PRMs (MPRMs) are increasingly used for Multimodal Large Language Models (MLLMs)~\cite{wang2024enhancing,wang2024cogvlm,bai2023qwen,bai2025qwen2,liu2023visual,team2024gemini,team2025gemma}
to conduct complex visual reasoning tasks both during training and at test time~\cite{wang2025athena,wang2025visualprm,zhang2025gm,luo2024improve,du2025mm,tu2025vilbench,cao2025dreamprm,dong-etal-2025-progressive}. 
Common practices for training MPRMs generally rely on large-scale Monte Carlo (MC)-annotated rollouts (e.g., VisualPRM400K-v1.1~\cite{wang2025visualprm}, with 565K rollouts and 3.17M annotated steps), which makes training computationally expensive. 
In this paper, we study the practical bottleneck in \textbf{training data efficiency} for MPRMs:
how does MPRM performance scale with the rollout budget, and how can we select informative subsets that preserve full-data performance?

\looseness=-1
Our preliminary study suggests substantial redundancy in MC-annotated MPRM training data. 
We randomly subsample the training data at varying fractions $\rho$ and find that performance quickly saturates at small $\rho$, with a moderate gap to the full dataset. This trend persists even when the subset is trained longer to match full-data training steps. 
We further compare several size-matched heuristic subsets and find that selecting rollouts that mix correct and incorrect steps is more informative than random selection, whereas rollouts with the lowest average MC scores tend to contain noisy pseudo-positive labels and hurt performance. 
This suggests two key criteria for high-quality rollouts: \textbf{mixture} and \textbf{reliability}. 

To substantiate this intuition, we formalize a teacher--student abstraction framework for theoretical analysis, and connect the interplay between gradient signal, label noise and data redundancy. 
We model MC estimation noise via a probabilistic label-flip model and show how it affects training gradients. This modeling supports the view that MPRM training is primarily limited by gradient noise rather than data scarcity. 
Moreover, our theory explains why mixture and reliability capture rollout quality: \textbf{mixture} tracks model uncertainty, while \textbf{reliability}, measured by MC scores, captures the noise level in positive steps. Their contributions interact multiplicatively in shaping informative gradients.

Building on these insights, we introduce the \emph{Balanced-Information Score} (BIS), a rollout-level criterion that instantiates the ``mixed but reliable'' principle. 
BIS quantifies both label mixture (of positive and negative steps) and reliability (average MC score over positive steps).
It is model-agnostic and only uses the MC signals stored in the dataset, without requiring extra model calls.
Extensive experiments with two backbones (InternVL2.5-8B~\cite{chen2024expanding} and Qwen2.5-VL-7B~\cite{bai2025qwen2}) on VisualProcessBench~\cite{wang2025visualprm} show that
BIS recovers full-data performance at small subset ratios, with the largest gain over random sub-sampling in low-budget regimes.
In particular, Figure~\ref{fig:intro-bis} shows that a BIS-selected 10\% subset trained for only 50 steps suffices to reach and even surpass the full-data performance on InternVL2.5-8B, saving $95.5\%$ computational cost.
Taken together, these findings provide a practical recipe with grounded analysis for reducing training compute for MPRMs without sacrificing model performance. 

\vspace{+2mm}
\section{Preliminary Study}
\label{sec:PreliminaryExperiments}
\subsection{Background and General Setup} 
\label{sec:mprm-training}
Previous MPRM research mainly improves supervision pipelines or training frameworks (detailed in Appendix~\ref{app:related_work}). 
In contrast, we study post-hoc rollout selection with no extra supervision or compute. 
We adopt the standard MPRM training setup and keep it fixed throughout. 
Following prior works~\cite{wang2024math,zhang-etal-2025-lessons}, we use the VisualPRM400K-v1.1 dataset~\cite{wang2025visualprm},
where each reasoning step is annotated with an MC-estimated success rate from $N{=}16$ sampled continuations. Step labels are binarized, so $y_t{=}1$ if the MC score $>0$ (at least one continuation reaches correct final answer), and $y_t{=}0$ otherwise. 
Specifically, for a reasoning rollout with $T$ steps, a special token \texttt{\textless prm\textgreater} is appended after each step $t$. The model is trained to predict the step-level probability (``Yes''/``No'') tokens using the cross-entropy loss 
$\mathcal{L} = - \sum_{t=1}^{T} ( y_t \log p_t + (1-y_t) \log (1-p_t) )$, 
where $p_t$ is the probability of predicting the token ``Yes''.
We use InternVL2.5-8B~\cite{chen2024expanding} in preliminary study,
and evaluate the MPRM performance on VisualProcessBench~\cite{wang2025visualprm}, a human-annotated step-level benchmark spanning five sources (MathVision~\cite{wang2024measuring}, MathVerse~\cite{zhang2024mathverse}, MMMU~\cite{yue2024mmmu}, DynaMath~\cite{zoudynamath}, and WeMath~\cite{qiao2025we}), and follow its protocol to report per-source macro-F1 and micro-averaged F1 over all sources. 
Training details, data statistics are provided in Appendix~\ref{app:training-details}, \ref{app:vpbench-stats} and \ref{app:bis-histograms-analysis}.

\begin{figure*}[t]
\centering
\captionsetup[subfigure]{skip=-2pt}
\vspace{-2.5mm}
\begin{minipage}{1\textwidth}
    \centering
    \begin{subfigure}[t]{0.33\textwidth}
        \centering
        \includegraphics[width=\textwidth]{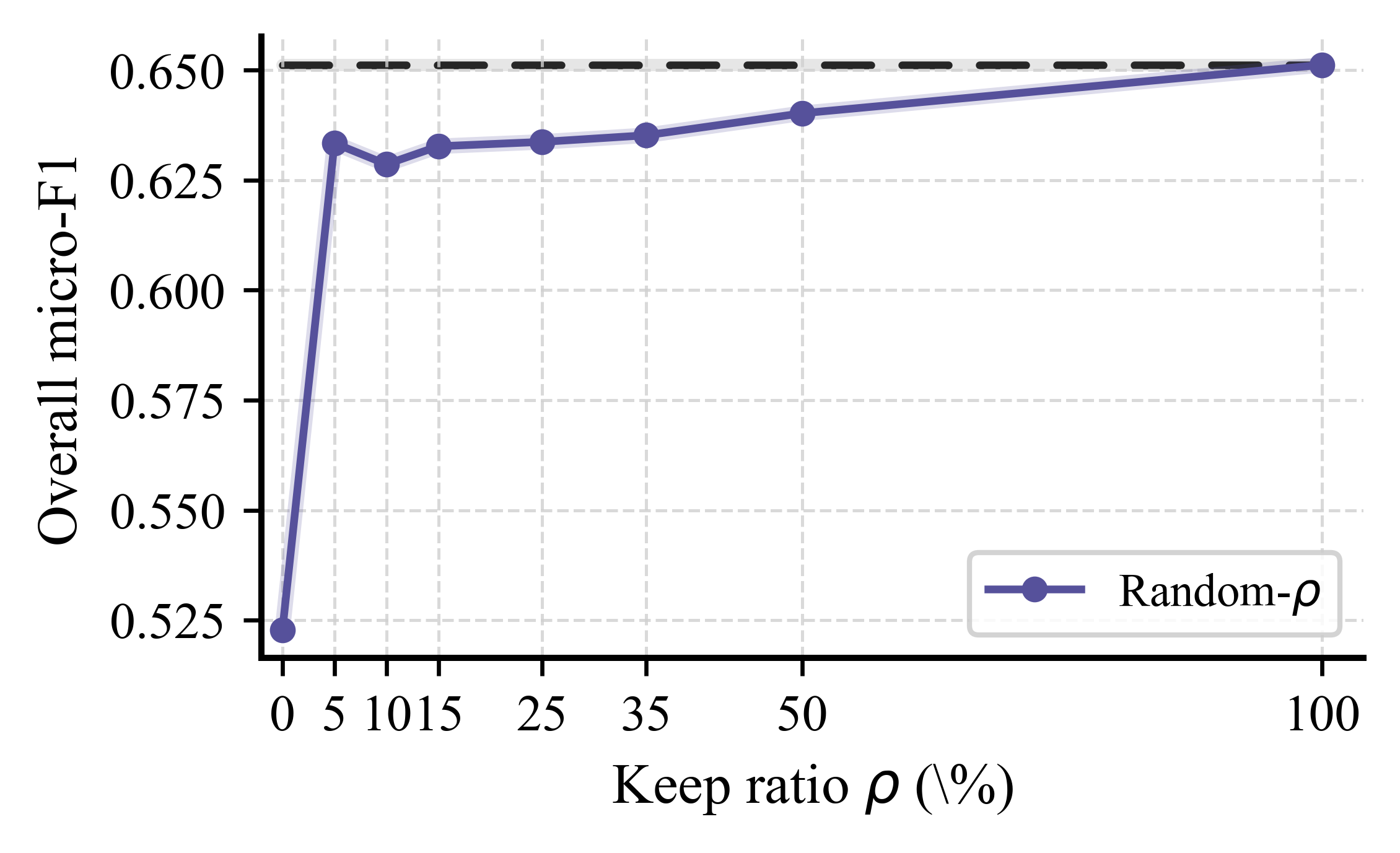} 
        \caption{}
        \label{fig:random-scaling}
    \end{subfigure}
    \hfill
    \begin{subfigure}[t]{0.33\textwidth}
        \centering
        \includegraphics[width=\textwidth]{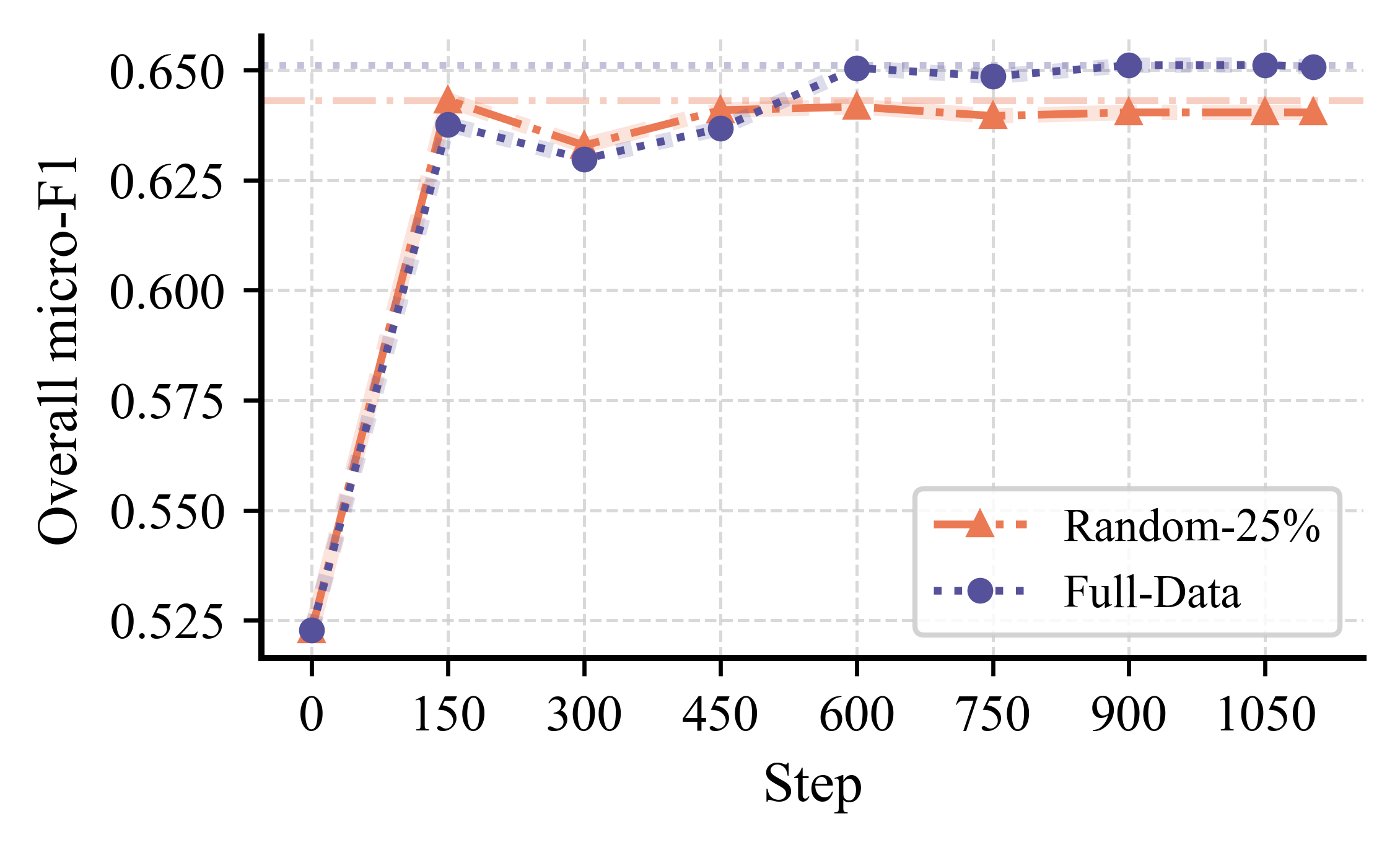}
        \caption{}
        \label{fig:overall-full-vs-random25}
    \end{subfigure}
    \hfill
    \begin{subfigure}[t]{0.33\textwidth}
        \centering
        \includegraphics[width=\textwidth]{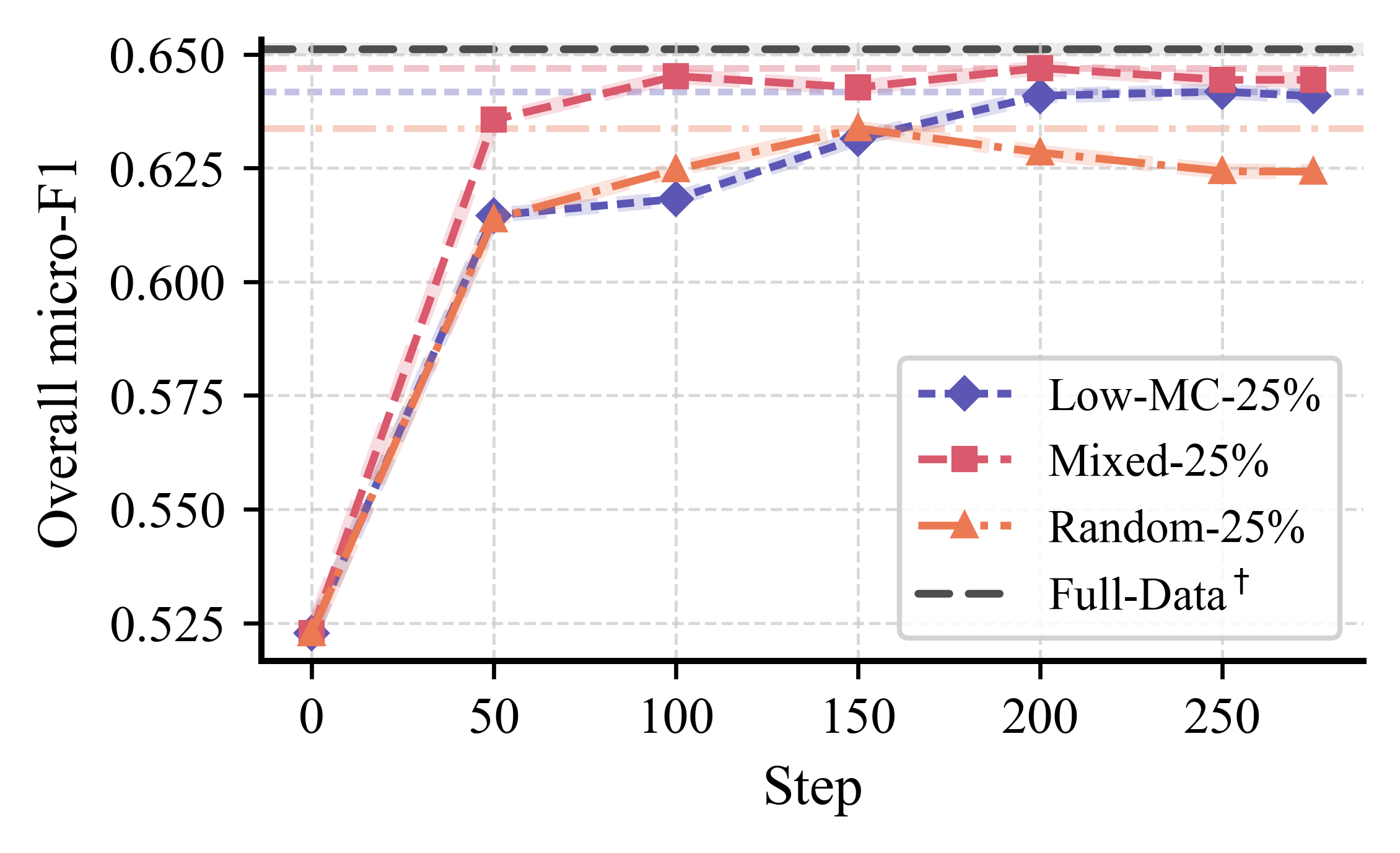}
        \caption{}
        \label{fig:overall-subsets25}
    \end{subfigure}
\end{minipage}

\vspace{-2mm}
\caption{Overall VisualProcessBench micro-F1 under different data regimes. (a) Single-pass scaling with random sub-sampling; per-source macro-F1 curves are shown in Figure~\ref{fig:random-scaling-per-source}. (b) Training on Full-Data vs.\ Random-25\% under matched updates; per-source macro-F1 curves are shown in Figure~\ref{fig:random25-per-source}. (c) Training on three 25\% subsets for one epoch; per-source macro-F1 curves are shown in Figure~\ref{fig:subset-curves}. Full-Data$^\dagger$ denotes the best checkpoint of a one-epoch Full-Data run (4$\times$ more optimization steps than 25\% subsets).
}
\vspace{-1mm}
\label{fig:overall-main}

\end{figure*}

\subsection{Random Sub-Sampling: Evidence of Redundancy}
\label{sec:Random Sub-sampling: Evidence of Strong Redundancy}
\EmpiricalFinding{MPRM performance quickly saturates under random subsampling, indicating strong redundancy in the training data.  \label{ef1}} 
To assess how MPRM performance scales with process-supervision data, we use the full training corpus as \emph{Full-Data} and evaluate random subsampling. 
For any keep ratio $\rho$, \emph{Random-$\rho$} retains a fraction $\rho$ of rollouts from each of the 38 source subsets, preserving their relative composition.

\begin{table}[t]
\caption{Dataset statistics for different training-set settings. ``Steps'' denote reasoning steps with annotated labels.} 
\label{tab:dataset-stats}
\centering
\scriptsize
\setlength{\tabcolsep}{3pt}

\begingroup
\arrayrulecolor{black!45}
\setlength{\arrayrulewidth}{0.4pt}

\rowcolors{2}{tblOdd}{white}

\begin{tabular}{lrrrr}
\toprule
\rowcolor{tblHead}
\textbf{Metric} & \textbf{Full-Data} & \textbf{Random-25\%} & \textbf{Low-MC-25\%} & \textbf{Mixed-25\%} \\
\midrule
\# rollouts            & 565{,}096     & 141{,}288   & 141{,}210   & 141{,}253 \\
\# reasoning steps                & 3{,}174{,}394 & 794{,}756   & 796{,}940   & 795{,}752 \\
Avg.\ steps/rollout       & 5.62          & 5.63        & 5.64        & 5.63      \\
Avg.\ words/step       & 27.8          & 27.8        & 29.9        & 27.6      \\
Error-step ratio        & 3.57\%        & 3.61\%      & 12.57\%     & 11.02\%   \\
Avg.\ MC/step           & 0.8566        & 0.8590      & 0.6010      & 0.7160    \\
\bottomrule
\end{tabular}
\endgroup
\end{table} 
We train with single-pass fine-tuning and report micro-F1 for different Random-$\rho$ subsets in Figure~\ref{fig:random-scaling}. 
Performance improves with $\rho$ but quickly plateaus, exhibiting pronounced diminishing returns. 
This suggests substantial redundancy in the MC-annotated rollouts, as discarding a large fraction of rollouts only modestly degrades performance.

To probe the plateau at moderate $\rho$, we take $\rho=25\%$ and compare Random-25\% with Full-Data under a matched compute budget. We match the number of training steps by training Random-25\% for four epochs, making its training cost comparable to one epoch of Full-Data. Table~\ref{tab:dataset-stats} summarizes the corpus statistics for these settings. 

We compare their learning curves in Figure~\ref{fig:overall-full-vs-random25}. 
Although the Full-Data model eventually performs better, the gap to Random-25\% remains moderate, confirming substantial redundancy in the training data. 
We also report per-source results in Appendix~\ref{app:random25-details}. 
Additionally, under matched updates, Random-25\% slightly overfits and its performance degrades late in training. 
In the remaining experiments, we use single-pass fine-tuning, where each rollout is seen exactly once.

Given the redundancy above, a natural next question is: \emph{is there a principled data selection method that substantially filters training data while preserving full-data performance?}

\subsection{Characterizing Informative Rollouts}
\label{sec:Characterizing Informative Rollouts Under a Fixed Budget}
\EmpiricalFinding{Effective supervision comes from \emph{mixed} rollouts that contain both correct and incorrect steps while maintaining \emph{reliable} positive labels. \label{ef2}}
We now shift focus from \emph{how many} rollouts to use to \emph{which} rollouts to keep. 
To study the impact of increased exposure to negative steps, we construct three subsets of VisualPRM400K: \emph{Random-25\%}, \emph{Low-MC-25\%}, and \emph{Mixed-25\%}. 

\textbf{Random-25\%} randomly samples 25\% of rollouts from each source to preserve the original dataset distribution.

\textbf{Low-MC-25\%} is constructed by ranking rollouts within each source by their average MC score per step and retaining the bottom 25\%. 
As a result, the average MC per step drops to $0.601$ and the incorrect-step ratio rises to $12.57\%$, far higher than in Random-25\% ($3.61\%$). 
Many low-MC steps have only a few successful continuations out of $N=16$, yet are still labeled as positive under the standard binarization rule, making them prone to pseudo-positive labels.

\textbf{Mixed-25\%} prioritizes rollouts with both positive and negative steps. 
Since mixed rollouts are only $7.67\%$, when a source has fewer than 25\% mixed rollouts, we fill the remainder by randomly sampling from the rest. It has a similar incorrect-step ratio to Low-MC-25\% ($11.02\%$ vs.\ $12.57\%$) but a higher average MC score ($0.716$ vs.\ $0.601$), exposing the model to many negative steps while still anchoring them with reasonable amount of reliable positive labels. 

Table~\ref{tab:dataset-stats} summarizes the statistics for these subsets. 
Using the same training protocol, we fine-tune each 25\% subset for one epoch and plot the overall VisualProcessBench micro-F1 over training steps in Figure~\ref{fig:overall-subsets25}. 
From this comparison, we observe two patterns as follows:

\textbf{First,} at their best checkpoints, the three subsets satisfy
$\text{Mixed-25\%} > \text{Low-MC-25\%} > \text{Random-25\%}$. 
Both Low-MC-25\% and Mixed-25\% outperform Random-25\%, indicating that, under a fixed data budget, exposing the model to more incorrect steps is beneficial.

\textbf{Second,} Mixed-25\% consistently yields the strongest performance even though its incorrect-step ratio is comparable to Low-MC-25\% while its average MC score is notably higher.
This suggests that neither maximizing negative steps nor minimizing average MC scores alone is sufficient. 
Extremely low-MC steps tend to be noisy pseudo-positives (labeled positive despite very low success rates), whereas rollouts that combine reasonably reliable positive steps with clear errors provide more useful supervision.

These observations motivate us to find a rollout-scoring mechanism that prioritizes two aspects: (1) emphasizing mixed rollouts (containing both correct and incorrect steps) while (2) avoiding noisy rollouts that contain many extremely low MC-score steps.

\section{Theoretical Analysis}
\label{sec:theory}

Before introducing our scoring mechanism, we provide a theoretical analysis to explain empirical findings \ref{ef1} and \ref{ef2}. We formalize the interplay among data redundancy, label noise, and gradient behavior, which also guides the design of an effective data-selection score.

\vspace{-1mm}

\subsection{Teacher--Student Abstraction}
\label{sec:theory-setup}

We model MPRM training using a linear teacher--student framework:
the teacher represents the ideal model that knows true step-level correctness, while the student model learns from noisy MC-annotated labels.
We model step-level label prediction as logistic regression on the representation space $\phi$ for simplicity.
For the $j$-th step in rollout $x$, let $\phi_{x,j}\in\mathbb{R}^d$ denote the hidden representation at the \texttt{\textless prm\textgreater} token position and $Y^{\mathrm{true}}_{x,j}\in\{0,1\}$ its binary label.

An ideal ``teacher'' MPRM is
{%
  \setlength{\abovedisplayskip}{4pt}%
  \setlength{\belowdisplayskip}{4pt}%
\begin{equation}
q^*(\phi)
\;=\;
\Pr(Y^{\mathrm{true}}=1 \mid \phi)
\;=\;
\sigma\!\bigl(\langle w^*, \phi \rangle\bigr),
\label{eq:teacher}
\end{equation}
}%
where $w^*\in\mathbb{R}^d$ is the optimal parameter and $\sigma$ is the sigmoid function. The student MPRM (our learned model)~is
{%
  \setlength{\abovedisplayskip}{4pt}%
  \setlength{\belowdisplayskip}{4pt}%
\begin{equation}
q_w(\phi)
\;=\;
\sigma\!\bigl(\langle w, \phi \rangle\bigr),
\end{equation}
}%
trained by minimizing the expected logistic loss
{%
  \setlength{\abovedisplayskip}{4pt}%
  \setlength{\belowdisplayskip}{4pt}%
\small
\begin{equation}
\hspace*{-0.8em}%
\mathcal{L}(w)=\mathbb{E}_{(\phi,Y)}\!\left[-Y\log q_w(\phi)-(1-Y)\log(1-q_w(\phi))\right].
\label{eq:logistic-loss}
\end{equation}
}%
In the MC-annotated training set, each step is associated with an MC score $s_{x,j}\in[0,1]$ from $N$ sampled continuations and a binary label $Y^{\mathrm{mc}}_{x,j} = \mathbb{I}[s_{x,j} > 0]$. 
For the theoretical analysis, we do not model the MC sampling explicitly; instead, we model $(\phi_{x,j},Y^{\mathrm{true}}_{x,j})$ as i.i.d.\ samples from the teacher model in Eq.~\eqref{eq:teacher}. 
The observed MC score $s_{x,j}$ (and the corresponding binarized training label $Y^{\mathrm{mc}}_{x,j}$) provides a noisy estimate of the underlying correctness probability $q^*(\phi_{x,j})$. 
Under this formulation, the student $q_w$ is trained with the logistic loss in Eq.~\eqref{eq:logistic-loss} on the observed MC labels $Y^{\mathrm{mc}}_{x,j}$, matching the objective in Section~\ref{sec:mprm-training}.

\subsection{Understanding the Plateau of Random Subsets}
\label{sec:theory-scaling}
\TheoreticalFinding{MPRM training mostly suffers from noisy gradients instead of insufficient training data.}
In this part, we aim to explain why randomly sub-sampled subsets across varying $\rho$ recover much of the full-dataset performance. 
In the teacher--student setup of Section~\ref{sec:theory-setup}, we consider training the student model to minimize a logistic loss $\mathcal{L}(w)$, with $w^*$ being the optimal parameter (achieved with infinite data and infinite training), and $w_T$ the parameters after $T$ finite stochastic gradient descent (SGD) steps.

Under the assumptions in Appendix~\ref{app:theory-scaling}, standard non-asymptotic analyses of SGD for logistic regression~\cite{bach2013non,bottou2007tradeoffs} yield a bound on the excess risk (the gap in expected loss between $w_T$ and $w^*$) of the form
{%
  \setlength{\abovedisplayskip}{4pt}%
  \setlength{\belowdisplayskip}{4pt}%
\begin{equation}
\mathbb{E}\bigl[\mathcal{L}(w_T)\bigr] - \mathcal{L}(w^*)
\;\lesssim\;
C_{\mathrm{data}}\,N_{\mathrm{eff}}^{-1/2}
\;+\;
C_{\mathrm{opt}}\,T^{-1/2}
\label{eq:scaling-decomp}
\end{equation}
}%
This bound contains two components: 
(1) a \emph{data complexity term} $C_{\mathrm{data}}\,N_{\mathrm{eff}}^{-1/2}$, which decays with the effective sample size $N_{\mathrm{eff}}$, and (2) an \emph{optimization error term} $C_{\mathrm{opt}}\,T^{-1/2}$, which decays with the number $T$ of SGD updates. 
$C_{\mathrm{data}}, C_{\mathrm{opt}} > 0$ are problem-dependent constants that do not scale with $N_{\mathrm{eff}}$ or $T$. 
A detailed derivation of Eq.~\eqref{eq:scaling-decomp} is given in Appendix~\ref{app:theory-scaling}.


\paragraph{Why larger datasets help less than expected.}
In MC-annotated training data, many steps receive noisy labels, especially when only a few out of $N$ continuations succeed. 
This label noise increases the stochastic-gradient noise level, which effectively enlarges the constant $C_{\mathrm{opt}}$ in the optimization term~\cite{moulines2011non}. 
Meanwhile, for VisualPRM400K-v1.1 with $3.17$M annotated steps, the data complexity term $C_{\mathrm{data}}N_{\mathrm{eff}}^{-1/2}$ is relatively small. 
Taken together, the optimization term dominates the total error, and further increasing $N_{\mathrm{eff}}$ yields only marginal gains.

Now consider a random subset Random-$\gamma$ that keeps each data point with probability $\gamma\in(0,1)$, so the effective sample size becomes $\gamma N_{\mathrm{eff}}$ while the problem-dependent constants remain comparable. Let $T_\gamma$ denote the number of SGD updates. In the \emph{matched-update} setting we keep the update budget fixed, $T_\gamma = T$, and Eq.~\eqref{eq:scaling-decomp} gives
{%
  \setlength{\abovedisplayskip}{4pt}%
  \setlength{\belowdisplayskip}{4pt}%
\[
\mathbb{E}[\mathcal{L}(w_{T_\gamma})]-\mathcal{L}(w^*)
\;\lesssim\;
\gamma^{-\tfrac{1}{2}} C_{\mathrm{data}}N_{\mathrm{eff}}^{-1/2}
+
C_{\mathrm{opt}}T^{-\tfrac{1}{2}}.
\]
}%
Random sub-sampling therefore amplifies only the (already small) data term by $\gamma^{-1/2}$, while leaving the optimization term unchanged; once $C_{\mathrm{data}}N_{\mathrm{eff}}^{-1/2}\!\ll\! C_{\mathrm{opt}}T^{-1/2}$, changing $\gamma$ has only a modest effect on the total error, explaining why Random-25\% closely tracks Full-Data in Figure~\ref{fig:overall-full-vs-random25}.

In the \emph{single-pass} setting we have $T_\gamma = \gamma T$, so Eq.~\eqref{eq:scaling-decomp} gives
{%
  \setlength{\abovedisplayskip}{4pt}%
  \setlength{\belowdisplayskip}{4pt}%
\[
\mathbb{E}[\mathcal{L}(w_{T_\gamma})]-\mathcal{L}(w^*)
\;\lesssim\;
C_{\mathrm{data}}(\gamma N_{\mathrm{eff}})^{-\tfrac{1}{2}}
+
C_{\mathrm{opt}}(\gamma T)^{-\tfrac{1}{2}}.
\]
}%
Let $B := C_{\mathrm{data}}N_{\mathrm{eff}}^{-\tfrac{1}{2}} + C_{\mathrm{opt}}T^{-\tfrac{1}{2}}$. The right-hand side can be written as $B_\gamma := \gamma^{-\tfrac{1}{2}} B$, so both terms are scaled by the $\gamma^{-1/2}$. For the full-data configuration with $\sim\!3.17$M annotated steps and $\sim\!1.1$k updates, we operate in a low-error regime where $B\ll\epsilon_{\mathrm{tar}}$. Here $\epsilon_{\mathrm{tar}}$ is the target error level. When $B$ is already far below $\epsilon_{\mathrm{tar}}$, multiplying it by the constant factor $\gamma^{-1/2}$ still gives $B_\gamma \le \epsilon_{\mathrm{tar}}$, so both Full-Data and Random-$\gamma$ single-pass training remain within the desired accuracy range. In such a regime, a constant factor $\gamma^{-1/2}$ in the bound is not enough to induce a large performance gap, which matches the small empirical difference we observe in Figure~\ref{fig:overall-subsets25} and reinforces that improving gradient quality, rather than merely enlarging $N_{\mathrm{eff}}$, is key to escaping the current optimization floor.

\subsection{Why Mixed but Reliable Rollouts Are Informative?}
\label{sec:theory-rollout-info}

Here, we explain why \textit{mixture} and \textit{reliability} characterize informative rollouts: label mixture tracks teacher uncertainty, while MC scores quantify the reliability of step labels.

\paragraph{Step-level Information from Teacher Uncertainty}\leavevmode\par
\TheoreticalFinding{Ideal teacher-model uncertainty $q^*(\phi)\bigl(1-q^*(\phi)\bigr)$ \emph{quantifies} per-step information.}
Under the teacher--student framework in Section~\ref{sec:theory-setup},
the gradient of logistic loss for a step with representation
$\phi\in\mathbb{R}^d$ and label $Y\in\{0,1\}$ is
{%
  \setlength{\abovedisplayskip}{4pt}%
  \setlength{\belowdisplayskip}{4pt}%
\[
g(\phi,Y;w)
=
\bigl(q_w(\phi)-Y\bigr)\,\phi,
\qquad
q_w(\phi) = \sigma(\langle w,\phi\rangle).
\]
}%
Since we study offline data selection, which is fixed throughout training, we measure how informative a step is under the teacher distribution rather than an evolving student, yielding a student-independent criterion. 
At the teacher's optimal parameter $w^*$, the second moment of the gradient has the form (derivation in Appendix~\ref{app:theory-grad-var})
{%
  \setlength{\abovedisplayskip}{4pt}%
  \setlength{\belowdisplayskip}{4pt}%
\begin{equation}
\mathbb{E}\bigl[\|g(\phi,Y;w^*)\|^2 \mid \phi\bigr]
=
q^*(\phi)\bigl(1-q^*(\phi)\bigr)\,\|\phi\|^2,
\label{eq:grad-var}
\end{equation}
}%
where $q^*(\phi)=q_{w^*}(\phi)$. 
$\mathbb{E}\bigl[\|g(\phi,Y;w^*)\|^2 \,|\, \phi\bigr]$ quantifies the expected per-step learning signal. 
Thus, for step-level MPRM training, 
the most informative steps are those where the teacher is most uncertain ($q^*(\phi)\approx 1/2$), when the term $q^*(\phi)\bigl(1-q^*(\phi)\bigr)$ reaches its maximum.

\paragraph{Effect of Label Noise at Step Level}\leavevmode\par
\TheoreticalFinding{Extremely low-MC positive steps behave like label-reversed samples and produce gradients that carry little true signal and harm training.}
To analyze how MC label noise affects learning, we adopt a symmetric label noise approximation: the true label $Y$ is flipped independently with probability $\eta\in[0,1/2)$ to produce a noisy label $\tilde Y$, where restricting to $\eta<1/2$ is without loss of generality since larger rates can be reduced to $1-\eta$ by flipping the label semantics. In practice the effective noise is step-dependent, and we adopt this constant noise distribution to derive the explicit formula for the second moment of the noisy gradient at $w^*$, where $\tilde g(\phi,\tilde Y;w^*) = (q^*(\phi)-\tilde Y)\,\phi$. A direct computation (Appendix~\ref{app:theory-noise}) yields 
{%
  \setlength{\abovedisplayskip}{3pt}%
  \setlength{\belowdisplayskip}{3pt}%
  \begin{multline}
  \mathbb{E}\bigl[\|\tilde g(\phi,\tilde Y;w^*)\|^2 \mid \phi\bigr] \\
  = \Bigl((1-4\eta)\,q^*(\phi)\bigl(1-q^*(\phi)\bigr)
  + \eta\Bigr)\,\|\phi\|^2.
  \label{eq:noise-factor}
  \end{multline}
}%
Relative to the clean case in Eq.~\eqref{eq:grad-var}, the uncertainty term $q^*(\phi)\bigl(1-q^*(\phi)\bigr)$ is shrunk by $(1-4\eta)$, with an additional $q^*$-independent noise $\eta\|\phi\|^2$. When $\eta$ is large, gradients are increasingly noise-dominated and carry little useful signal. 
Empirically, steps with extremely low-MC scores but positive labels are typically unstable: only a few out of all continuations succeed for incidental reasons (e.g., later self-correction), so these steps act like label-flipped negative steps rather than real positive steps.

\paragraph{Rollout Label Mixture Estimates Teacher Uncertainty}\leavevmode\par 
\TheoreticalFinding{Rollout label mixture $\hat p_x(1-\hat p_x)$ is an $\mathcal{O}(1/n)$-biased estimator of the unobserved teacher-level uncertainty $\theta_x=\bar q_x(1-\bar q_x)$ under noise-free labels.}

We now relate rollout label mixture to the teacher's positive-label probabilities as follows. 
For a rollout $x$ with $n$ steps we define average step-wise information:
{%
  \setlength{\abovedisplayskip}{2.5pt}%
  \setlength{\belowdisplayskip}{2.5pt}%
\[
A(x)
:=
\frac{1}{n}\sum_{j=1}^n q_{x,j}\bigl(1-q_{x,j}\bigr).
\]
}%
Under the bounded-norm assumption in Appendix~\ref{app:theory-mix}, the unweighted and norm-weighted quantities differ by at most global multiplicative constants. 
For exposition, we begin with the teacher-consistent idealization $Y_{x,j}\mid q_{x,j}\sim \mathrm{Bernoulli}(q_{x,j})$, under which $\{Y_{x,j}\}_{j=1}^n$ are unbiased samples from $\{q_{x,j}\}$. 
Let $\hat p_x := \frac{1}{n}\sum_{j=1}^n Y_{x,j}$ be the empirical positive-label fraction, and let $\bar q_x := \frac{1}{n}\sum_{j=1}^n q_{x,j}$ be the step-average teacher probability. 
Then $\hat p_x(1-\hat p_x)$ is maximized when labels are balanced; it becomes $0$ when the steps are all-positive or all-negative, so it directly measures label mixture within rollout $x$. 
By Lemma~\ref{lem:label-var} in Appendix~\ref{app:theory-mix}, conditioning on $\{q_{x,j}\}$ yields
{%
  \setlength{\abovedisplayskip}{2pt}%
  \setlength{\belowdisplayskip}{2pt}%
\begin{equation}
\mathbb{E}[\hat p_x(1-\hat p_x)\mid\{q_{x,j}\}]
=
\bar q_x(1-\bar q_x)
-
\frac{1}{n^2}\sum_{j=1}^n q_{x,j}(1-q_{x,j}).
\label{eq:label-var-cond}
\end{equation}
}%
By Jensen's inequality for $t\mapsto t(1-t)$, we have
{%
  \setlength{\abovedisplayskip}{3pt}%
  \setlength{\belowdisplayskip}{3pt}%
\begin{equation}
A(x)
=
\frac{1}{n}\sum_{j=1}^n q_{x,j}\bigl(1-q_{x,j}\bigr)
\;\le\;
\bar q_x(1-\bar q_x)
=:\theta_x.
\label{eq:A-leq-theta}
\end{equation}
}%
Combining \eqref{eq:label-var-cond} and \eqref{eq:A-leq-theta} yields the
sandwich bound
{%
  \setlength{\abovedisplayskip}{3pt}%
  \setlength{\belowdisplayskip}{3pt}%
\begin{equation}
\theta_x\Bigl(1-\frac{1}{n}\Bigr)
\;\le\;
\mathbb{E}\bigl[\hat p_x(1-\hat p_x)\mid\{q_{x,j}\}\bigr]
\;\le\;
\theta_x.
\label{eq:mixture-sandwich}
\end{equation}
}%
Thus $\hat p_x(1-\hat p_x)$ is an observable estimate for the teacher-level uncertainty $\theta_x$, with only $\mathcal{O}(1/n)$ bias, and $A(x)\le\theta_x$ by construction. 
Consequently, in this noise-free setting, rollouts that are nearly all-positive or all-negative have small $\hat p_x(1-\hat p_x)$, suggesting smaller $\theta_x$ (and hence smaller expected $A(x)$). 
In contrast, rollouts with mixed labels tend to have larger $\theta_x$, which allows $A(x)$ to be larger and can yield gradient updates with stronger learning signal. 

\looseness=-1
Now consider the symmetric flip noise model: $\tilde Y_{x,j}$ is obtained by flipping $Y_{x,j}$ with a constant rate $\eta$. 
Then $\tilde Y_{x,j}\mid q_{x,j}\sim \mathrm{Bernoulli}(\tilde q_{x,j})$ with $\tilde q_{x,j}=(1-2\eta)q_{x,j}+\eta$ (Appendix~\ref{app:theory-mix} Eq.~\eqref{eq:tilde-q-def}). 
Averaging over steps gives $\bar{\tilde q}_x=(1-2\eta)\bar q_x+\eta$, and the induced rollout-level mixture satisfies
{%
  \setlength{\abovedisplayskip}{4pt}%
  \setlength{\belowdisplayskip}{4pt}%
\begin{equation}
\tilde\theta_x
:=
\bar{\tilde q}_x(1-\bar{\tilde q}_x)
=
(1-2\eta)^2\,\theta_x+\eta(1-\eta).
\label{eq:theta-tilde-theta}
\end{equation}
}%
Eq.~\eqref{eq:theta-tilde-theta} decomposes $\tilde\theta_x$ into a scaled uncertainty term $(1-2\eta)^2\theta_x$ plus an offset $\eta(1-\eta)$. 
Since $\tilde\theta_x\approx \theta_x$ only for small $\eta$ (Appendix~\ref{app:theory-mix} Eq.~\eqref{eq:tilde-theta-close}), we next complement mixture with a reliability signal to identify low-noise rollouts.

\paragraph{MC Scores as Effective Noise Indicators}\leavevmode\par
\TheoreticalFinding{MC scores monotonically reflect label reliability: low-MC positives exhibit high effective noise.}
Mixture alone is not sufficient because positive labels can be noisy.
We now model the MC annotation process and connect MC scores to the label-noise level. 
For step $j$ in rollout $x$, let $r_{x,j}\in[0,1]$ be the probability
that a single continuation from this step reaches the correct final
answer, given its representation $\phi_{x,j}$. 
The MC annotator generates $N$ independent continuations, records the number of successful ones $K_{x,j}$, and stores the score $s_{x,j} = K_{x,j}/N$. 
Under the Binomial model
$K_{x,j} \mid r_{x,j} \sim \mathrm{Binomial}(N,r_{x,j})$ we have
{%
  \setlength{\abovedisplayskip}{3pt}%
  \setlength{\belowdisplayskip}{3pt}%
\[
\mathbb{E}[s_{x,j}\mid r_{x,j}] = r_{x,j},
\quad
\mathrm{Var}(s_{x,j}\mid r_{x,j}) = \frac{1}{N}r_{x,j}(1-r_{x,j}),
\]
}%
so $s_{x,j}$ is an unbiased estimator of $r_{x,j}$ and concentrates around it as $N$ grows. 
Under the standard binarization rule $Y_{x,j}=\mathbb{I}[K_{x,j}>0]$, the resulting step-level probability of the positive label is 
{%
  \setlength{\abovedisplayskip}{4pt}%
  \setlength{\belowdisplayskip}{4pt}%
  \[
\Pr(Y_{x,j}=1 \mid \phi_{x,j})
= 1 - (1 - r_{x,j})^N
\]
}%
which is strictly increasing in $r_{x,j}$. 
Since $r_{x,j}$ is determined by $\phi_{x,j}$, we model this probability using the teacher $q^*(\phi_{x,j})=\sigma(\langle w^*,\phi_{x,j}\rangle)$ in Eq.~\eqref{eq:teacher}. These observations link the MC score $s_{x,j}$, the binarized label $Y_{x,j}$, and the teacher probability $q^*(\phi_{x,j})$ through the underlying success probability $r_{x,j}$. We next quantify how this link translates into an effective noise level for positive labels.

To formalize reliability, we fix a threshold $\tau\in(0,1)$ and define a step to be \emph{$\tau$-reliable} if its one-shot success probability is at least $\tau$, i.e., $Z_{x,j}:=\mathbb{I}[r_{x,j}\ge \tau]$. We then relate this reliability notion to what the MC annotator actually observes. Specifically, we let the unobserved success probability $r_{x,j}$ vary across steps and model it with a Beta distribution $r_{x,j}\sim\mathrm{Beta}(a,b)$. Conditional on $r_{x,j}$, the number of successful continuations among the $N$ MC samples satisfies $K_{x,j}\mid r_{x,j}\sim\mathrm{Binomial}(N,r_{x,j})$. Under this Beta--Binomial model, observing $K_{x,j}=k$ yields the posterior $r_{x,j}\mid K_{x,j}=k \sim \mathrm{Beta}(a+k,b+N-k)$. 
This induces an effective noise level for positive steps, defined as
{%
  \setlength{\abovedisplayskip}{4pt}%
  \setlength{\belowdisplayskip}{4pt}%
\begin{equation*}
\begin{aligned}
\eta_{\mathrm{eff}}(k) &:= \Pr(Z_{x,j}=0 \mid K_{x,j}=k) \\
&\mkern-5mu = \Pr(r_{x,j} < \tau \mid K_{x,j}=k) = I_{\tau}(a+k, \mkern-2mub+N-k)
\end{aligned}
\end{equation*}
}%
where $I_{\tau}(\cdot,\cdot)$ denotes the regularized incomplete beta function. For $K_{x,j}>0$, $\eta_{\mathrm{eff}}(k)$ is exactly the posterior probability
that a positive step with $K_{x,j}=k$ is pseudo-positive, i.e., $\tau$-unreliable. 
Moreover, $\eta_{\mathrm{eff}}(k)$ is strictly decreasing in $k$ (Lemma~\ref{lem:eta-eff-monotone} in Appendix~\ref{app:eta-eff}). Consequently, low-MC positives ($K_{x,j}>0$ but small $s_{x,j}=K_{x,j}/N$) have large $\eta_{\mathrm{eff}}$ and are likely to be $\tau$-unreliable, so they behave like high-noise samples. 
Under the label-flipping noise model of Eq.~\eqref{eq:noise-factor}, this corresponds to operating at a larger noise rate $\eta\approx \eta_{\mathrm{eff}}(K_{x,j})$, which increases the noise term $\eta\|\phi_{x,j}\|^2$ and decreases the signal term $q^*(\phi_{x,j})(1-q^*(\phi_{x,j}))\|\phi_{x,j}\|^2$.  

Averaging $s_{x,j}$ over positive steps in a rollout thus yields a natural rollout-level reliability estimate: since $\eta_{\mathrm{eff}}(k)$ decreases monotonically with the MC score $s_{x,j}=K_{x,j}/N$, rollouts whose positive steps have higher average MC scores also have smaller average $\eta_{\mathrm{eff}}(K_{x,j})$, so their gradient updates are less affected by label noise.

\paragraph{Mixture and Reliability Couple Multiplicatively}\leavevmode\par
\TheoreticalFinding{Rollouts are most informative when \emph{both} label mixture and reliability are high.}
Finally, we combine the previous two parts and analyze how label mixture and reliability jointly shape the rollout-level signal term in Eq.~\eqref{eq:noise-factor}.   
We average Eq.~\eqref{eq:noise-factor} over $n$ steps  in rollout $x$, let $q_{x,j}:=q^*(\phi_{x,j})$ and the rollout-level $q$-dependent signal term takes the form
{%
  \setlength{\abovedisplayskip}{3pt}%
  \setlength{\belowdisplayskip}{3pt}%
\[
S(x)
:= \frac{1}{n}\sum_{j=1}^n
(1-4\eta_{x,j})\,q_{x,j}(1-q_{x,j})\,\|\phi_{x,j}\|^2 .
\]
}%
where $\eta_{x,j}$ is the step-dependent noise rate, and can be approximated by $\eta_{\mathrm{eff}}(K_{x,j})$.
The signal term $S(x)$ is determined by the multiplication of two factors: the term $q_{x,j}(1-q_{x,j})$ favors rollouts with uncertain steps; while $(1-4\eta_{x,j})$ favors smaller label noise $\eta_{x,j}$ (larger $K_{x,j}$ or $s_{x,j}$ based on the monotonicity).
Therefore, $S(x)$ of a rollout is large only when the multiplication of uncertainty and reliability is large.
In practice, we approximate the teacher uncertainty
$\theta_x := \bar q_x(1-\bar q_x)$ using the observable label mixture $\hat p_x(1-\hat p_x)$. 
Section~\ref{sec:Balanced-Information Score (BIS)} distills this into the Balanced-Information Score used in our data selection method.

\section{Balanced-Information Score}
\label{sec:Balanced-Information Score (BIS)}

Motivated by the empirical results above and the theoretical analysis, we introduce the Balanced-Information Score (BIS) as a rollout-level scoring mechanism that prioritizes informative rollouts for MPRM training.

\paragraph{Setting.}
Consider an MPRM training set where each rollout $x$ contains $n$ annotated steps with MC scores $\{s_j\}_{j=1}^n \subset [0,1]$. 
Following the standard binarization rule, each step is assigned a hard label $y_j = \mathbb{I}[s_j > 0]$.

\paragraph{Rollout-level Quantities.}

For a rollout $x$, define the positive-step ratio 
$p_{\text{pos}}(x) = \frac{1}{n} \sum_{j=1}^n y_j$, used to quantify label mixture,
and a rollout-level reliability measure $R(x)$:
{%
  \setlength{\abovedisplayskip}{3pt}%
  \setlength{\belowdisplayskip}{3pt}%
\[
R(x) \;=\;
\begin{cases}
\dfrac{1}{n_{\text{pos}}} \sum_{j: y_j = 1} s_j, & n_{\text{pos}} > 0, \\
1, & n_{\text{pos}} = 0,
\end{cases}
\]
}%
where $n_{\text{pos}} = \sum_{j=1}^n y_j$.
Rollouts with $n_{\text{pos}} = 0$ contain no positive labeled steps to average over, so we fix $R(x)=1$.

\paragraph{Balanced-Information Score.}
We define the Balanced-Information Score (BIS) of rollout $x$ as:
{%
  \setlength{\abovedisplayskip}{4pt}%
  \setlength{\belowdisplayskip}{4pt}%
\[
\text{BIS}(x)
\;=\;
\big(p_{\text{pos}}(x)\,\big(1 - p_{\text{pos}}(x)\big) + \alpha\big)\,R(x),
\]
}%
where the hyperparameter $\alpha > 0$ is a small smoothing constant that assigns a non-zero weight to low-mixture rollouts. 
The term $p_{\text{pos}}(1-p_{\text{pos}})$ favors mixed rollouts that contain both correct and incorrect steps, while $R(x)$ favors rollouts whose positive steps are reliably correct under MC estimation. 
Therefore, $\text{BIS}(x)$ is highest for rollouts that provide both clear negative signals and trustworthy positive anchors.

\begin{table*}[!t] 
\caption{Overall micro-F1 and per-source macro-F1 on VisualProcessBench for full-data training and sub-sampled rollouts under different keep ratios $\rho$. “Soft” uses the raw MC scores as continuous soft targets; ``Hard ($\tau$)'' uses binary labels with threshold $\tau$. Bold numbers denote the column-wise maximum within each subset group. ``Base” denotes the original backbone model without any additional training.} \vspace{-1mm}
\label{tab:new-subset-best}
\centering
\scriptsize
\setlength{\tabcolsep}{1.2pt}
\renewcommand{\arraystretch}{0.94}
\label{tab:subset-best}
\begin{minipage}{0.50\textwidth}
\centering
\begin{tabular}{l>{\columncolor{overallCol}}cccccc}
\toprule
\rowcolor{tblHead}
\textbf{Dataset} & \textbf{Overall} & \textbf{MathVision} & \textbf{MathVerse} & \textbf{MMMU} & \textbf{DynaMath} & \textbf{WeMath} \\
\midrule
\multicolumn{7}{c}{\textcolor{black!60}{\emph{\textbf{InternVL2.5-8B}}}} \\

\arrayrulecolor{black!40}\hdashline
\addlinespace[0.35em]

Base   & 52.28   &  52.40    &  52.04    &  50.21 &  54.85  &  49.95 \\

\arrayrulecolor{black!40}\hdashline
\addlinespace[0.35em]

\rowcolor{tblGroup}
\multicolumn{7}{c}{\textcolor{black!60}{\emph{\textbf{Full-Data (565k -- 1{,}100 steps)}}}} \\

Hard  ($\tau{=}0$)   & 65.12   & 65.77      & 65.43      & 61.84 & 66.17  & 63.56 \\
Hard  ($\tau{=}1/N$)    & 64.26  & 64.91 & 63.91 & 61.07  & 65.44  & 64.80   \\
Hard  ($\tau{=}2/N$)  & 63.02 & 61.61 & 62.92 & 59.70 & 65.83  & 65.09     \\
Soft  & 61.54   & 60.78      & 60.47      & 62.05 & 62.91  & 64.38 \\

\arrayrulecolor{black!40}\hdashline
\addlinespace[0.35em]

\rowcolor{tblGroup}
\multicolumn{7}{c}{\textcolor{black!60}{\textbf{\emph{5\% Subsets (28k -- 55 steps)}}}} \\
\rowcolors{1}{tblOdd}{tblEven}
Random-5\%  &   63.34  & 64.95      & 62.42      & 58.94 & \textbf{65.91} & 62.57 \\
\rowcolors{1}{tblOdd}{tblEven}
BIS-5\%     & \textbf{64.51} & \textbf{66.66} & \textbf{64.53} & \textbf{60.30} & 63.80 & \textbf{64.40} \\

\arrayrulecolor{black!40}\hdashline

\addlinespace[0.35em]

\rowcolor{tblGroup}
\multicolumn{7}{c}{\textcolor{black!60}{\textbf{\emph{10\% Subsets (56k -- 110 steps)}}}} \\

\rowcolors{1}{tblOdd}{tblEven}
Random-10\%  & 62.86  & 64.65   & 62.14   & 60.25 & 62.98 & 63.25 \\
\rowcolors{1}{tblOdd}{tblEven}
BIS-10\%     & \textbf{65.46} & \textbf{66.90} & \textbf{65.07} & \textbf{63.35} & \textbf{65.56} & \textbf{65.10} \\

\arrayrulecolor{black!40}\hdashline
\addlinespace[0.35em]

\rowcolor{tblGroup}
\multicolumn{7}{c}{\textcolor{black!60}{\textbf{\emph{15\% Subsets (84k -- 165 steps)}}}} \\
\rowcolors{1}{tblOdd}{tblEven}
Random-15\%  & 63.27 & 65.06      & 63.00   & 57.23 & 64.17 & 63.96 \\
\rowcolors{1}{tblOdd}{tblEven}
BIS-15\%     & \textbf{64.98} & \textbf{67.09} & \textbf{64.44} & \textbf{61.80} &  \textbf{64.58} & \textbf{65.40} \\

\arrayrulecolor{black!40}\hdashline
\addlinespace[0.35em]

\rowcolor{tblGroup}
\multicolumn{7}{c}{\textcolor{black!60}{\textbf{\emph{25\% Subsets (141k -- 275 steps)}}}} \\
\rowcolors{1}{tblOdd}{tblEven}
Random-25\%  & 63.37   & 64.49      & 62.60      & 58.32 & \textbf{65.83} & 63.67 \\
\rowcolors{1}{tblOdd}{tblEven}
BIS-25\%     & \textbf{65.46} & \textbf{67.98} & \textbf{64.86} & \textbf{60.49} & 65.72 & \textbf{65.59} \\

\arrayrulecolor{black!40}\hdashline
\addlinespace[0.35em]

\rowcolor{tblGroup}
\multicolumn{7}{c}{\textcolor{black!60}{\textbf{\emph{35\% Subsets (198k -- 385 steps)}}}} \\
\rowcolors{1}{tblOdd}{tblEven}
Random-35\%  &  63.52  & 65.50     & 63.49      & 58.01 & 64.00 & 63.08 \\
\rowcolors{1}{tblOdd}{tblEven}
BIS-35\%     & \textbf{64.98} & \textbf{67.25} & \textbf{64.47} & \textbf{59.61} & \textbf{65.79} & \textbf{64.82} \\

\arrayrulecolor{black!40}\hdashline
\addlinespace[0.35em]

\rowcolor{tblGroup}
\multicolumn{7}{c}{\textcolor{black!60}{\textbf{\emph{50\% Subsets (283k -- 550 steps)}}}} \\
\rowcolors{1}{tblOdd}{tblEven}
Random-50\%  & 64.02   & 64.55  & \textbf{63.94} & 60.25 & 65.38 & 64.14 \\
\rowcolors{1}{tblOdd}{tblEven}
BIS-50\%     & \textbf{65.00} & \textbf{65.84} & 63.79 & \textbf{63.03} & \textbf{66.66} & \textbf{66.03} \\
\bottomrule
\end{tabular}
\end{minipage}
\hfill
\begin{minipage}{0.49\textwidth}
\centering
\begin{tabular}{l>{\columncolor{overallCol}}cccccc}
\toprule
\rowcolor{tblHead}
\textbf{Dataset} & \textbf{Overall} & \textbf{MathVision} & \textbf{MathVerse} & \textbf{MMMU} & \textbf{DynaMath} & \textbf{WeMath} \\
\midrule

\multicolumn{7}{c}{\textcolor{black!60}{\emph{\textbf{Qwen2.5-VL-7B}}}} \\

\arrayrulecolor{black!40}\hdashline
\addlinespace[0.35em]

Base   &   49.68  &   50.22    &  49.58    &  49.85 &  49.62  &  48.51 \\

\arrayrulecolor{black!40}\hdashline
\addlinespace[0.35em]

\rowcolor{tblGroup}
\multicolumn{7}{c}{\textcolor{black!60}{\textbf{\emph{Full-Data (565k -- 1{,}100 steps)}}}} \\

Hard  ($\tau{=}0$)   & 65.57   & 66.05     & 65.29     & 63.23 & 66.24  & 66.40 \\
Hard  ($\tau{=}1/N$) & 65.27   & 66.17  &  64.53 & 63.83  & 66.60  & 64.55  \\
Hard  ($\tau{=}2/N$) & 62.72   & 62.33  & 62.18  & 62.56  & 63.37  & 64.65  \\
Soft                 & 62.23   & 62.17  & 61.25  & 61.44  & 62.88  &  65.55 \\

\arrayrulecolor{black!40}\hdashline
\addlinespace[0.35em]

\rowcolor{tblGroup}
\multicolumn{7}{c}{\textcolor{black!60}{\textbf{\emph{5\% Subsets (28k -- 55 steps)}}}} \\
\rowcolors{1}{tblOdd}{tblEven}
Random-5\%  &  53.54    & 54.38      & 52.82     & 55.85 & 52.80 & 53.04 \\
\rowcolors{1}{tblOdd}{tblEven}
BIS-5\%     & \textbf{64.42} & \textbf{66.69} & \textbf{64.20} & \textbf{62.72} & \textbf{63.34} & \textbf{63.12} \\

\arrayrulecolor{black!40}\hdashline
\addlinespace[0.35em]

\rowcolor{tblGroup}
\multicolumn{7}{c}{\textcolor{black!60}{\textbf{\emph{10\% Subsets (56k -- 110 steps)}}}} \\

\rowcolors{1}{tblOdd}{tblEven}

Random-10\%  & 61.99  & 63.42  & 61.85  & 58.37  & 62.24 & 61.97  \\

\rowcolors{1}{tblOdd}{tblEven}
BIS-10\%     & \textbf{64.63} & \textbf{66.05} & \textbf{64.14} & \textbf{61.33} & \textbf{64.64} & \textbf{66.05} \\

\arrayrulecolor{black!40}\hdashline
\addlinespace[0.35em]

\rowcolor{tblGroup}
\multicolumn{7}{c}{\textcolor{black!60}{\textbf{\emph{15\% Subsets (84k -- 165 steps)}}}} \\
\rowcolors{1}{tblOdd}{tblEven}
Random-15\%  & 59.82  & 59.32    & 60.58   & 54.97 & 61.32 & 60.36 \\
\rowcolors{1}{tblOdd}{tblEven}
BIS-15\%     & \textbf{65.29} & \textbf{66.50} & \textbf{64.03} & \textbf{66.40} &  \textbf{64.80} & \textbf{66.58} \\

\arrayrulecolor{black!40}\hdashline
\addlinespace[0.35em]

\rowcolor{tblGroup}
\multicolumn{7}{c}{\textcolor{black!60}{\textbf{\emph{25\% Subsets (141k -- 275 steps)}}}} \\

\rowcolors{1}{tblOdd}{tblEven}

Random-25\%  & 64.44   &  65.87  & 63.38   & 61.99  & \textbf{66.32}  & 63.44  \\

\rowcolors{1}{tblOdd}{tblEven}
BIS-25\%     & \textbf{65.53} & \textbf{66.84} & \textbf{64.48} & \textbf{63.60} & 66.19 & \textbf{66.66} \\

\arrayrulecolor{black!40}\hdashline
\addlinespace[0.35em]

\rowcolor{tblGroup}
\multicolumn{7}{c}{\textcolor{black!60}{\textbf{\emph{35\% Subsets (198k -- 385 steps)}}}} \\
\rowcolors{1}{tblOdd}{tblEven}
Random-35\%  &  64.77  & 66.45     & 64.43      & 60.47 & 65.34 & 64.84 \\
\rowcolors{1}{tblOdd}{tblEven}
BIS-35\%     & \textbf{65.69} & \textbf{66.77} & \textbf{65.50} & \textbf{63.37} & \textbf{65.65} & \textbf{65.99} \\

\arrayrulecolor{black!40}\hdashline
\addlinespace[0.35em]

\rowcolor{tblGroup}
\multicolumn{7}{c}{\textcolor{black!60}{\textbf{\emph{50\% Subsets (283k -- 550 steps)}}}} \\
\rowcolors{1}{tblOdd}{tblEven}
Random-50\%  &  64.54   & 66.52  & 64.27 & 59.74 & \textbf{65.73} & 62.89 \\
\rowcolors{1}{tblOdd}{tblEven}
BIS-50\%     & \textbf{65.02} & \textbf{67.22} & \textbf{64.30} & \textbf{60.80} & 65.62 & \textbf{64.99} \\

\bottomrule
\end{tabular}
\end{minipage}
\end{table*}

\paragraph{Subset Construction with Keep Ratio $\rho$.}
Given a global keep-ratio $\rho \in (0,1)$ applied uniformly across all sources, we assign BIS to every MC-annotated rollout to build a data-efficient training set.
Within each source dataset, we rank rollouts by $\text{BIS}(x)$ in descending order and keep the top $\rho$ fraction.
We then concatenate the selected rollouts over all sources to form the BIS-selected subset.
This procedure only relies on existing step-level MC scores and does not require additional supervision or extra model calls. 
The downstream MPRM can be trained from this $\rho$-subset with the same training setup of Section~\ref{sec:mprm-training}.

\section{Experiments}
\subsection{Training Setup}
\label{sec:exp-setting}

We conduct experiments with two backbones: InternVL2.5-8B~\cite{chen2024expanding} and Qwen2.5-VL-7B~\cite{bai2025qwen2}, and evaluate different methods on VisualProcessBench~\cite{wang2025visualprm}. 
To study data efficiency, we sub-sample rollouts with keep ratios $\rho\in\{5,10,15,25,35,50\}\%$ and compare BIS-$\rho$ against Random-$\rho$ under matched budgets. 
We additionally include heuristic subset baselines as ablations at selected budgets. 
All models are trained for a single pass over their retained rollouts, with training steps and learning-rate scheduling scaled proportionally to $\rho$. 
We also report Best-of-$N$ evaluation with MPRM reranking. 
Full training details are provided in Appendix~\ref{app:training-details}.

\subsection{Main Results}

\paragraph{BIS recovers full-data performance at small ratios and consistently outperforms random sub-sampling.}

Table~\ref{tab:subset-best} compares BIS-$\rho$ to Random-$\rho$ under identical rollout budgets. 
Across both backbones, BIS reaches full-data performance at small $\rho$ and remains consistently stronger than random sub-sampling, with the largest gains at small rollout budgets. 
For InternVL2.5-8B, BIS already matches the full-data performance at $\rho{=}10\%$, reaching an overall micro-F1 of 65.46\%, a +2.6 points gain over the random baseline Random-10\%, while using only one tenth of the rollouts and updates. 
For Qwen2.5-VL-7B, BIS shows even larger advantages in the extremely low-budget regime: it improves over random sub-sampling by +10.9 points at $\rho{=}5\%$ and +5.5 points at $\rho{=}15\%$, and already reaches the full-data reference at $\rho{=}25\%$. 
We report the complete training dynamics for all keep ratios in Appendix~\ref{app:Training Dynamics}, including both overall micro-F1 and per-source macro-F1, and show that BIS maintains clear advantages over random sub-sampling throughout training. 
BIS also shows a clear scaling trend with $\rho$: performance improves rapidly at small budgets, peaks at a moderate keep ratio, and can slightly drop afterwards, as increasing $\rho$ mainly adds lower-BIS rollouts that are less informative under our “mixture $\times$ reliability” criterion. 
Since $\rho{=}25\%$ performs strongly for both backbones, we use it for analysis in the following experiments.

\paragraph{Effect of labeling scheme.}
We further study the impact of the threshold for binarizing MC scores into hard labels and show the results in Table~\ref{tab:subset-best}. First, training with soft labels is clearly inferior to default hard-label scheme. This is consistent with MC scores being noisy and coarsely discretized estimates, thus encouraging the MPRM to fit sampling noises.
Second, increasing the binarization threshold to above 0 consistently degrades performance, suggesting that low-MC labels conflate hard cases with noisy pseudo-positives and that stricter thresholding can mislabel hard cases as negatives. 
BIS therefore avoids this ambiguous low-MC regime by prioritizing mixed yet reliable rollouts.

\paragraph{BIS improves best-of-$N$ reranking.}
We further evaluate MPRM in a practical best-of-$N$ reranking setting with $N=16$ candidates per problem on four different benchmarks (MM-K12~\cite{du2025mm}, OlympiadBench~\cite{he2024olympiadbench}, MathVerse~\cite{zhang2024mathverse}, and MathVista~\cite{lu2024mathvista}) in Table~\ref{tab:BoN}. 
The full evaluation protocol is described in Appendix~\ref{app:training-details}. 
Consistent with the main results, MPRMs trained on BIS-selected subsets achieve the strongest best-of-$N$ performance across all benchmarks, outperforming both Random-25\% and the full-data MPRM. This suggests that BIS yields robust improvements in MPRM effectiveness beyond benchmark-specific behavior.

\paragraph{BIS favors moderate-$R(x)$ rollouts.}

Figure~\ref{fig:bis-overall-hist} shows the distribution of the reliability term $R(x)$ on VisualPRM400K-v1.1 for all rollouts and the BIS-25\% subset. 
Additional per-source statistics are provided in Appendix~\ref{app:bis-histograms-analysis}. 
The black curve demonstrates the selected data coverage over the full data. 
BIS strongly suppresses the low-reliability rollouts where $R(x)$ are small which tend to include noisy step labels. 
Meanwhile, the coverage peaks at moderate $R(x)$ (around 0.2--0.6) and decreases as $R(x)$ becomes very large, showing that BIS does not simply maximize $R(x)$. 
This is because BIS jointly considers reliability and mixture: rollouts with large $R(x)$ may still have low mixture, resulting in a low overall BIS score. 
Consistently, the right panel shows that BIS favors rollouts with higher mixture, explaining why high-$R(x)$ rollouts are not always preferred.

\begin{table}[t]
\caption{Best-of-$N$ evaluation on four benchmarks with MPRM reranking trained on different training sets.}
\label{tab:BoN}
\centering
\scriptsize
\setlength{\tabcolsep}{2.4pt}

\begingroup

\arrayrulecolor{black!45}
\setlength{\dashlinedash}{2.2pt}  
\setlength{\dashlinegap}{2.2pt}   
\setlength{\arrayrulewidth}{0.4pt}

\begin{tabular}{lcccc}
\toprule
\rowcolor{tblHead}
\textbf{Model}       & \textbf{MM-K12} & \textbf{OlympiadBench} & \textbf{MathVerse} & \textbf{MathVista}    \\
\midrule

\rowcolors{1}{tblOdd}{tblEven}
InternVL2.5-8B  & 33.13  &  8.65    & 35.31 &  52.77  \\
+MPRM$_{\text{Full-Data}}$   &  39.00 \deltarow{$\uparrow$\,5.87} & 12.00 \deltarow{$\uparrow$\,3.35} & 39.41 \deltarow{$\uparrow$\,4.10} &  57.50 \deltarow{$\uparrow$\,4.73}  \\
+MPRM$_{\text{Random-25\%}}$ & 39.40 \deltarow{$\uparrow$\,6.27} & 11.33 \deltarow{$\uparrow$\,2.68} & 39.41 \deltarow{$\uparrow$\,4.10} & 58.20 \deltarow{$\uparrow$\,5.43}    \\
\rowcolor{tblHi}
+MPRM$_{\text{BIS-25\%}}$    & \textbf{41.00 \deltarow{$\uparrow$\,7.87}} & \textbf{12.67 \deltarow{$\uparrow$\,4.02}} & \textbf{40.89 \deltarow{$\uparrow$\,5.58}}  & \textbf{59.00 \deltarow{$\uparrow$\,6.23}}   \\

\bottomrule

\end{tabular}
\endgroup
\end{table} \vspace{-1mm}

\begin{table}[t]
\caption{Ablations of BIS under a 25\% rollout budget.}
\label{tab:bis-ablation}
\centering
\scriptsize
\setlength{\tabcolsep}{1.4pt}

\begingroup
\arrayrulecolor{black!45}
\setlength{\dashlinedash}{2.2pt}
\setlength{\dashlinegap}{2.2pt}
\setlength{\arrayrulewidth}{0.4pt}

\begin{tabular}{lcccccc}
\toprule
\rowcolor{tblHead}
\textbf{Subset}       & \textbf{Overall} & \textbf{MathVision} & \textbf{MathVerse} & \textbf{MMMU}   & \textbf{DynaMath} & \textbf{WeMath} \\
\midrule

\rowcolor{tblHi}
BIS-25\%    & \textbf{65.46} & \textbf{67.98} & \textbf{64.86} & 60.49 & \textbf{65.72} & \textbf{65.59} \\
\addlinespace[0.25em]
\arrayrulecolor{black!40}\hdashline
\arrayrulecolor{black!45}
\addlinespace[0.25em]
Mixed-25\%  & 64.70 & 66.32 & 64.78 & 58.65 & 65.66 & 64.51 \\
\rowcolor{tblDelta}
\multicolumn{1}{l}{\deltarow{\quad$\Delta$}} &
\deltarow{$\downarrow$\,0.76} & \deltarow{$\downarrow$\,1.66} & \deltarow{$\downarrow$\,0.08} &
\deltarow{$\downarrow$\,1.84} & \deltarow{$\downarrow$\,0.06} & \deltarow{$\downarrow$\,1.08} \\
\addlinespace[0.25em]
\arrayrulecolor{black!40}\hdashline
\arrayrulecolor{black!45}
\addlinespace[0.25em]
Reliable-25\%  & 62.75 & 62.12 & 63.14 & \textbf{60.52} & 63.85 & 63.14 \\
\rowcolor{tblDelta}
\multicolumn{1}{l}{\deltarow{\quad$\Delta$}} &
\deltarow{$\downarrow$\,2.71} & \deltarow{$\downarrow$\,5.86} & \deltarow{$\downarrow$\,1.72} &
\deltarow{$\uparrow$\,0.03} & \deltarow{$\downarrow$\,1.87} & \deltarow{$\downarrow$\,2.45} \\
\addlinespace[0.25em]
\arrayrulecolor{black!40}\hdashline
\arrayrulecolor{black!45}
\addlinespace[0.25em]
Low-MC-25\%  & 64.18 & 66.31 & 64.31 & 59.40 & 64.16 & 62.97 \\
\rowcolor{tblDelta}
\multicolumn{1}{l}{\deltarow{\quad$\Delta$}} &
\deltarow{$\downarrow$\,1.28} & \deltarow{$\downarrow$\,1.67} & \deltarow{$\downarrow$\,0.55} &
\deltarow{$\downarrow$\,1.09} & \deltarow{$\downarrow$\,1.56} & \deltarow{$\downarrow$\,2.62} \\
\bottomrule \vspace{-3mm}
\end{tabular}
\endgroup
\end{table}

\paragraph{Ablation study: both mixture and reliability matter.}
\label{sec:bis-main-exp}
Table~\ref{tab:bis-ablation} ablates the two components of BIS under the same 25\% rollout budget. 
Mixed-25\% and Low-MC-25\% are heuristic subsets of Section~\ref{sec:Characterizing Informative Rollouts Under a Fixed Budget}, while Reliable-25\% retains the top 25\% of rollouts ranked by $R(x)$. 
We observe that only using the mixture score (Mixed-25\%) is competitive but still consistently weaker than the BIS both on average and on separate sources. 
Low-MC-25\% shows a similar trend and remains weaker than BIS, indicating that heuristic filtering alone is insufficient to match BIS selection. 
In contrast, using only the reliability score (Reliable-25\%) is clearly insufficient: it lags behind BIS on nearly all benchmarks and shows an advantage only on MMMU where performances are very close. 
This aligns with the analysis in Section~\ref{sec:theory-rollout-info} that the best performance is obtained only when both mixture and reliability are considered, where the mixture term provides contrast between positive and negative labels, and the reliability term steers away from noisy low-MC positive labels. We also plot the complete learning curves in Appendix~\ref{app:subset-details}, and further demonstrate that BIS-25\% yields the highest or near-highest performance at almost all steps.

\vspace{-1mm}

\begin{figure}[t]
\centering
\includegraphics[width=1\columnwidth]{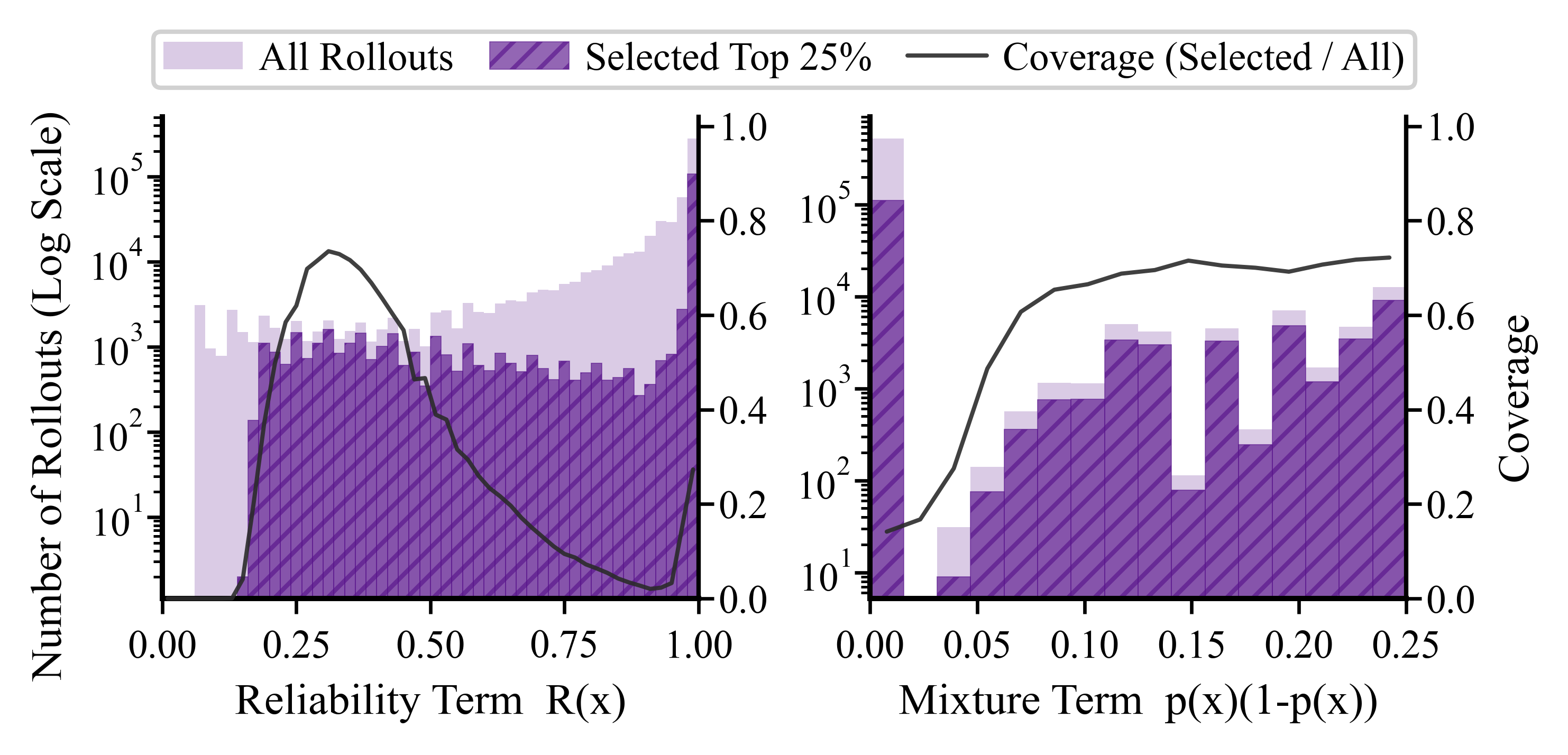}
\caption{Distributions of the reliability term $R(x)$ (left) and mixture term $p(x)(1-p(x))$ (right) on VisualPRM400K-v1.1, comparing all rollouts and the BIS-25\% subset. The black curve shows the coverage (Selected / All).} 
\label{fig:bis-overall-hist}

\end{figure}

\paragraph{BIS is robust to $\alpha$.}
We ablate the smoothing constant $\alpha$ for the mixture term,
which controls the lower-bound when $p_{\text{pos}}(1-p_{\text{pos}})$ is small. Table~\ref{tab:BIS-Alpha-ablations} indicates that performance is broadly stable across different $\alpha$ values, with a consistent best choice at $\alpha=0.05$.
Overly small $\alpha$ can underweight low-mixture yet reliable trajectories and reduce the diversity of retained supervision, whereas overly large $\alpha$ weakens the mixture term and shifts selection toward reliability-only ranking. The intermediate value best balances these effects.

\begin{table}[t]
\caption{Sensitivity of BIS to the smoothing constant $\alpha$ under a fixed 25\% rollout budget.}
\label{tab:BIS-Alpha-ablations}
\centering
\scriptsize
\setlength{\tabcolsep}{1.9pt}

\begingroup
\arrayrulecolor{black!45}
\setlength{\dashlinedash}{2.2pt}
\setlength{\dashlinegap}{2.2pt}
\setlength{\arrayrulewidth}{0.4pt}

\begin{tabular}{lcccccc}
\toprule
\rowcolor{tblHead}
\textbf{Subset}       & \textbf{Overall} & \textbf{MathVision} & \textbf{MathVerse} & \textbf{MMMU}   & \textbf{DynaMath} & \textbf{WeMath} \\
\midrule
$\alpha=0.02$  & 64.84 & 66.63 & 64.69 & \textbf{60.49} & 64.78 & 65.27 \\
\rowcolor{tblHi}
$\alpha=0.05$    & \textbf{65.46} & \textbf{67.98} & \textbf{64.86} & \textbf{60.49} & \textbf{65.72} & \textbf{65.59} \\
$\alpha=0.08$  & 64.86 & 67.10 & 64.47 & 59.96 & 64.88 & 65.35 \\
\bottomrule
\end{tabular}
\endgroup
\end{table}

\section{Conclusion}
We study how to select Monte Carlo–annotated multimodal reasoning rollouts for training MPRMs. 
We found that randomly discarding most rollouts only mildly degrades performance, indicating that current training sets contain substantial redundancy. 
Our theoretical analysis explains that informative gradient updates concentrate on uncertain yet reliably labeled steps, while low-MC pseudo-positives mainly add variance. We propose the Balanced-Information Score (BIS), which ranks rollouts by label mixture and reliability using only the MC signals already stored in the dataset. Empirical results demonstrate that BIS-selected subsets match or surpass full-data MPRM performance with as little as 10\% of rollouts. Overall, our study provides a data-centric principle for curating future MPRM corpora. 

\section*{Impact Statement}
This work aims to improve the data and compute efficiency of training multimodal process reward models. Successful adaptation of our method to practical model training will benefit the society by reducing energy cost. 

\section*{Acknowledgments}
We would like to thank Han Li and Zhangchen Xu for their valuable insights. 
This research was supported in part by the NVIDIA Academic Grant Program and WashU Ignite Interdisciplinary Grants.



\bibliography{example_paper}
\bibliographystyle{icml2026}

\newpage
\onecolumn

\vspace*{1em}

\begin{center}
    {\huge \textbf{Appendix}}
\end{center}

\vspace{2em}

\begin{center}
    {\Large \textbf{Table of Contents}}
\end{center}
\vspace{1em}

\newcommand{\cleardotfill}{\leaders\hbox to 0.8em{\hss.\hss}\hfill}

\begin{large}
\begin{itemize}[leftmargin=3.5em, labelsep=1.2em, itemsep=1em]

    \item[\textbf{A}] \hyperref[app:related_work]{Related Work} \cleardotfill \pageref{app:related_work}
    
    \begin{itemize}[leftmargin=1.8em, itemsep=0.2em, topsep=0.2em]
        \item[A.1] \hyperref[app:related_work1]{Multimodal Process Reward Models under MC Process Supervision} \cleardotfill \pageref{app:related_work1}
        
        \item[A.2] \hyperref[app:related_work2]{Data-efficient Process Supervision} \cleardotfill \pageref{app:related_work2}
    \end{itemize}

    \item[\textbf{B}] \hyperref[app:vpbench-stats]{VisualProcessBench Statistics} \cleardotfill \pageref{app:vpbench-stats}

    \item[\textbf{C}] \hyperref[app:training-details]{Experimental Setup and Implementation Details} \cleardotfill \pageref{app:training-details}

    \item[\textbf{D}] \hyperref[app:random25-details]{Extended Results for Random Sub-sampling} \cleardotfill \pageref{app:random25-details}
    \item[\textbf{E}] \hyperref[app:subset-details]{Extended Results for 25\% Subsets} \cleardotfill \pageref{app:subset-details}

    \item[\textbf{F}] \hyperref[app:theory]{Theoretical Details} \cleardotfill \pageref{app:theory}
    
    \begin{itemize}[leftmargin=1.8em, itemsep=0.2em, topsep=0.2em]
        \item[F.1] \hyperref[app:theory-scaling]{A Canonical Logistic Case for the Scaling Decomposition} \cleardotfill \pageref{app:theory-scaling}
        \item[F.2] \hyperref[app:theory-grad-var]{Step-wise Gradient Variance} \cleardotfill \pageref{app:theory-grad-var}
        \item[F.3] \hyperref[app:theory-noise]{Symmetric Label Noise} \cleardotfill \pageref{app:theory-noise}
        \item[F.4] \hyperref[app:theory-mix]{Rollout-Level Mixture and Representation Norms} \cleardotfill \pageref{app:theory-mix}
        \item[F.5] \hyperref[app:eta-eff]{MC-induced Pseudo-positive Probability and Monotonicity} \cleardotfill \pageref{app:eta-eff}
    \end{itemize}

    \item[\textbf{G}] \hyperref[app:Training Dynamics]{Training Dynamics} \cleardotfill \pageref{app:Training Dynamics}
    \item[\textbf{H}] \hyperref[app:bis-histograms-analysis]{Per-source BIS Histograms} \cleardotfill \pageref{app:bis-histograms-analysis}

     \item[\textbf{I}] \hyperref[sec:casestudy]{Case Studies} \cleardotfill \pageref{sec:casestudy}

\end{itemize}
\end{large}

\newpage
\appendix

\section{Related Work} \label{app:related_work}
\subsection{Multimodal Process Reward Models under Automated Monte Carlo Process Supervision} \label{app:related_work1}
Recent work~\cite{zheng2025survey} shows that MPRMs improve multimodal reasoning both as dense rewards for reinforcement learning fine-tuning~\cite{luounlocking,wang2025skywork,liu2024diving,fan2025sophiavl} and as stepwise verifiers for inference-time trajectory ranking~\cite{zhang2025gm,cao2025dreamprm,cao2025dreamprm15,wang2025athena,tu2025vilbench,hu2025prm}. Unlike outcome rewards that score only the final answer~\cite{lightman2024lets,zhang2025generative,zhang2025basereward,zhang2025r1,wang2024interpretable}, an MPRM~\cite{chen2025vrprm,ong2025training,kuang2025tim} provides dense, step-level supervision by mapping each intermediate multimodal reasoning state to a real-valued ``on-track'' score conditioned on the input images and text. Most standard MPRM corpora are built from Monte Carlo (MC) estimates computed on reasoning prefixes, with VisualPRM400K~\cite{wang2025visualprm} as a representative example. One common approach samples multiple continuations from each prefix and uses the empirical success rate to score step correctness~\cite{wang2024math}. A complementary line replaces plain sampling with structured search via Monte Carlo Tree Search, improving the stability of error localization and supervision signals~\cite{luo2024improve,wang2024openr,chen2024alphamath}. Despite their differences, MC-based annotators are inherently noisy under finite sampling and long-horizon multimodal reasoning, yielding unstable labels and low-success ``pseudo-positives'' that tightening the binarization threshold cannot simply fix and can even hurt MPRM performance~\cite{wang2025visualprm}. Taken together, MC-annotated supervision is plentiful but highly uneven in information content, making rollout-level prioritization crucial for efficient and stable MPRM training.

\subsection{Data-efficient Process Supervision} \label{app:related_work2}
Recent work on data-efficient process supervision for PRMs can be broadly grouped into three complementary categories. The first line of work optimizes the annotation pipeline itself~\cite{han2025uncertainty,sun2025efficient,zhang2025groundedprm,wang2025athena,zhang-etal-2025-lessons}. These methods improve Monte Carlo or search-based annotators with techniques such as Monte Carlo tree search, tool grounding, and consensus-based filtering, enabling each trajectory to yield higher-quality step labels from fewer or cheaper model calls. A second line of work focuses on learning robust process supervision from weak, noisy, or indirect feedback~\cite{dingscan,xiong2025stepwiser,khalifa2025process,cui2025process,chen2024alphamath}. These approaches design objectives and model forms that allow PRMs to learn effectively from imperfect MC labels and, in some cases, even from outcome-only signals, thereby reducing reliance on expensive, high-quality process annotations. Our work belongs to a third line that focuses on data selection and supervision allocation. DreamPRM~\cite{cao2025dreamprm} and DreamPRM-1.5~\cite{cao2025dreamprm15} adjust dataset and example weights via bi-level optimization, while ACTPRM~\cite{duan2025efficient} and SCAN~\cite{dingscan} select which samples to query or refine with expensive Monte Carlo estimation under a limited annotation budget. In contrast, our rollout-level BIS is a scalar score computed post hoc from existing MC statistics, enabling data selection with no extra model calls, relabeling, or changes to the underlying PRM architecture or training objective.

\clearpage

\section{VisualProcessBench Statistics}
\label{app:vpbench-stats}
Table~\ref{tab:vpbench-stats} summarizes the composition of VisualProcessBench~\cite{wang2025visualprm}, including the number of problems per source dataset, the distribution of source solutions across base models, the breakdown of correct/incorrect/neutral steps, and basic length statistics. In total, the benchmark contains 2{,}866 solution trajectories with 26{,}950 annotated steps, providing a reasonably large and diverse testbed for step-level evaluation.

\begin{table}[h]
\centering
\caption{Statistics of VisualProcessBench~\cite{wang2025visualprm}.}
\label{tab:vpbench-stats}
\begingroup
\setlength{\tabcolsep}{4.5pt}
\renewcommand{\arraystretch}{1.2}
\arrayrulecolor{black!45}
\setlength{\arrayrulewidth}{0.4pt}
\newcommand{\tblpad}{10pt}

\begin{tabular}{l r}
\toprule
\rowcolor{tblHead}\textbf{Item} & \textbf{Value} \\
\midrule

\rowcolor{tblGroup}
\multicolumn{2}{c}{\textcolor{black!60}{\emph{Dataset Composition}}} \\
Total Samples & 2866 \\
\quad-- MMMU~\cite{yue2024mmmu} & 267 \\
\quad-- MathVision~\cite{wang2024measuring} & 712 \\
\quad-- MathVerse~\cite{zhang2024mathverse} & 1026 \\
\quad-- DynaMath~\cite{zoudynamath} & 570 \\
\quad-- WeMath~\cite{qiao2025we} & 291 \\

\arrayrulecolor{black!40}\hdashline
\addlinespace[0.35em]

\arrayrulecolor{black!45}
\rowcolor{tblGroup}
\multicolumn{2}{c}{\textcolor{black!60}{\emph{Source Solutions}}} \\
Source Solutions & 2866 \\
\quad-- GPT-4o~\cite{openai_gpt4o_system_card_2024} & 870 \\
\quad-- Claude-3.5-Sonnet~\cite{anthropic_claude3_model_family_2024} & 865 \\
\quad-- QvQ-72B-Preview~\cite{qwen_qvq_2024} & 825 \\
\quad-- InternVL2.5-78B~\cite{chen2024expanding} & 306 \\

\arrayrulecolor{black!40}\hdashline
\addlinespace[0.35em]

\arrayrulecolor{black!45}
\rowcolor{tblGroup}
\multicolumn{2}{c}{\textcolor{black!60}{\emph{Steps}}} \\
Total Steps & 26950 \\
\quad-- Correct Steps & 16585 \\
\quad-- Incorrect Steps & 7691 \\
\quad-- Neutral Steps & 2674 \\

\arrayrulecolor{black!40}\hdashline
\addlinespace[0.35em]

\arrayrulecolor{black!45}
\rowcolor{tblGroup}
\multicolumn{2}{c}{\textcolor{black!60}{\emph{Length Statistics}}} \\
Query Word Length Quartile & (22, 24, 50) \\
Response Word Length Quartile & (137, 193, 552) \\
Step Word Length Quartile & (13, 31, 67) \\
Number of Steps per Solution & 9.4 \\
\bottomrule
\end{tabular}
\endgroup
\end{table}

\clearpage
\section{Experimental Setup and Implementation Details}
\label{app:training-details}

This appendix provides the full experimental setup omitted from the main text for brevity, including the model and training data, the MPRM training objective, optimization and hardware settings, and the evaluation protocol.

\begin{table}[h]
\centering
\caption{Training, model, and hardware hyperparameters (shared across all data-selection conditions).}
\label{tab:training-hparams}
\vspace{-1mm}

\begingroup
\setlength{\tabcolsep}{3pt}
\arrayrulecolor{black!45}
\setlength{\arrayrulewidth}{0.4pt}

\begin{tabular}{p{0.30\textwidth}p{0.62\textwidth}}
\toprule
\rowcolor{tblHead}
\textbf{Item} & \textbf{Value} \\
\midrule

\rowcolor{tblGroup}
\multicolumn{2}{c}{\textcolor{black!60}{\emph{Optimization}}} \\

Optimizer & AdamW \\
Learning Rate & $1\times 10^{-5}$ \\
Weight Decay & $0.05$ \\
AdamW Betas & $(0.9, 0.999)$ \\
AdamW $\epsilon$ & $1\times 10^{-8}$ \\
LR Schedule & Linear Warmup + Cosine Decay \\
Warmup Ratio & $0.05$ \\
Gradient Clipping & Enabled via DeepSpeed (\texttt{gradient\_clipping=auto}, using the Trainer default \texttt{max\_grad\_norm}) \\
Precision & \texttt{bf16} \\

\arrayrulecolor{black!40}\hdashline
\addlinespace[0.35em]

\arrayrulecolor{black!45}
\rowcolor{tblGroup}
\multicolumn{2}{c}{\textcolor{black!60}{\emph{Batching and Training Budget}}} \\

Per-device Batch Size & 2 \\
Gradient Accumulation Steps & 64 \\
Global Batch Size & $B = 512$ \\
Epochs & 1 (single pass; default for all experiments unless otherwise noted) \\
Max Sequence Length & 8192 (truncate from the end) \\
Optimization Steps & $T = \lceil N / B \rceil$ for a pool of $N$ rollouts \\

\arrayrulecolor{black!40}\hdashline
\addlinespace[0.35em]

\arrayrulecolor{black!45}
\rowcolor{tblGroup}
\multicolumn{2}{c}{\textcolor{black!60}{\emph{Model, Input, and Hardware}}} \\

Backbone & InternVL2.5-8B \& Qwen2.5-VL-7B \\
Trainable Modules & LLM + multimodal fusion MLP (vision backbone frozen) \\
Image Size & InternVL2.5-8B: 448, dynamic resolution enabled; max 6 patches. \\
 & Qwen2.5-VL-7B: dynamic resizing (min/max=784/200704). \\
GPUs  & 4 $\times$ NVIDIA H100 80GB \\

\bottomrule
\end{tabular}
\endgroup
\end{table}

\paragraph{Model and Training Data}

We use InternVL2.5-8B~\cite{chen2024expanding} as the default backbone, following prior MPRM work~\cite{wang2025visualprm,du2025mm}. In the main experiments, we additionally report results with a second backbone, Qwen2.5-VL-7B~\cite{bai2025qwen2}. For both models, we freeze the vision encoder and fine-tune the language model together with the multimodal projector modules. We use the default vision setup and input preprocessing of each backbone; the corresponding model and input hyperparameters are summarized in Table~\ref{tab:training-hparams}. We train on VisualPRM400K-v1.1\footnote{\url{https://huggingface.co/datasets/OpenGVLab/VisualPRM400K-v1.1-Raw}}~\cite{wang2025visualprm}, choosing v1.1 because it exposes per-step MC scores, whereas the v1 release only provides binarized labels. This dataset was sampled from 38 different data sources. Detailed training-data statistics are reported in Table~\ref{tab:step-mc-by-source-readable}.

\paragraph{MPRM Training Objective}
Each training example consists of the question, the associated images, and a step-by-step solution, where every step is followed by a special token \texttt{\textless prm\textgreater}. The tokenizer inserts this special token into the text stream and the data loader attaches the corresponding binary label to its position.
During training, the model is supervised only on these \texttt{\textless prm\textgreater} positions: the logits at each placeholder are restricted to the two reward tokens (\texttt{``Yes''}, \texttt{``No''}) and optimized with a two-way cross-entropy loss, so that the probability of \texttt{``Yes''} serves as the score for that step.

\paragraph{Optimization and Implementation}
Table~\ref{tab:training-hparams} summarizes the hyperparameters and implementation details shared by all experiments. The learning rate is linearly warmed up over the first $5\%$ of optimization steps and then cosine-decayed to zero over the remaining steps. Under our default single-pass protocol (i.e., one epoch over the selected training pool), the total number of optimization steps is recomputed for each data regime as $T=\lceil N/B \rceil$, where $N$ is the number of rollouts in the pool and $B$ is the global batch size. Training is implemented in PyTorch~\cite{paszke2019pytorch} with HuggingFace \texttt{Trainer} from Transformers~\cite{wolf2020transformers} and DeepSpeed~\cite{rasley2020deepspeed} ZeRO-3 for memory efficiency.

\paragraph{Evaluation Protocol}
For each VisualProcessBench~\cite{wang2025visualprm} instance, we concatenate the question with the provided step-by-step rationale and insert \texttt{\textless prm\textgreater} after every step, mirroring training. The model produces a scalar score per step (the \texttt{``Yes''} probability at the corresponding placeholder). Given a threshold $\tau$, we classify steps with scores $\ge \tau$ as positive and those $<\tau$ as negative, ignoring neutral labels.
Following the benchmark protocol, we select a single global threshold per model on a held-out split by sweeping $\tau$ and maximizing the micro-averaged F1 across all sources; we then report the overall micro-F1 in the main text and provide per-source macro-F1 breakdowns in the appendix.

\paragraph{Best-of-$N$ Reranking Protocol} 
We report best-of-$N$ reranking results with $N=16$ for all benchmarks in Table~\ref{tab:BoN}.
For each problem, we first sample $16$ candidate step-by-step rollouts using InternVL2.5-8B with standard stochastic decoding
(temperature $=0.7$, top-$p=0.9$, top-$k=30$, and max\_new\_tokens $=2048$).
Each candidate is formatted as a sequence of reasoning steps followed by a final answer.
To rerank candidates, we apply the MPRM to obtain a scalar score at the step level.
Given a candidate rollout $\tau$ with $T$ reasoning steps and step scores $\{s_t\}_{t=1}^{T}$,
we compute a trajectory-level score by averaging over steps:
\[
S(\tau) = \frac{1}{T}\sum_{t=1}^{T} s_t.
\]
We then select the candidate with the highest $S(\tau)$ as the final prediction for evaluation.

\clearpage
\section{Extended Results for Random Sub-sampling (Section~\ref{sec:Random Sub-sampling: Evidence of Strong Redundancy})}
\label{app:random25-details}

\begin{figure}[H]
\centering
\includegraphics[width=\textwidth]{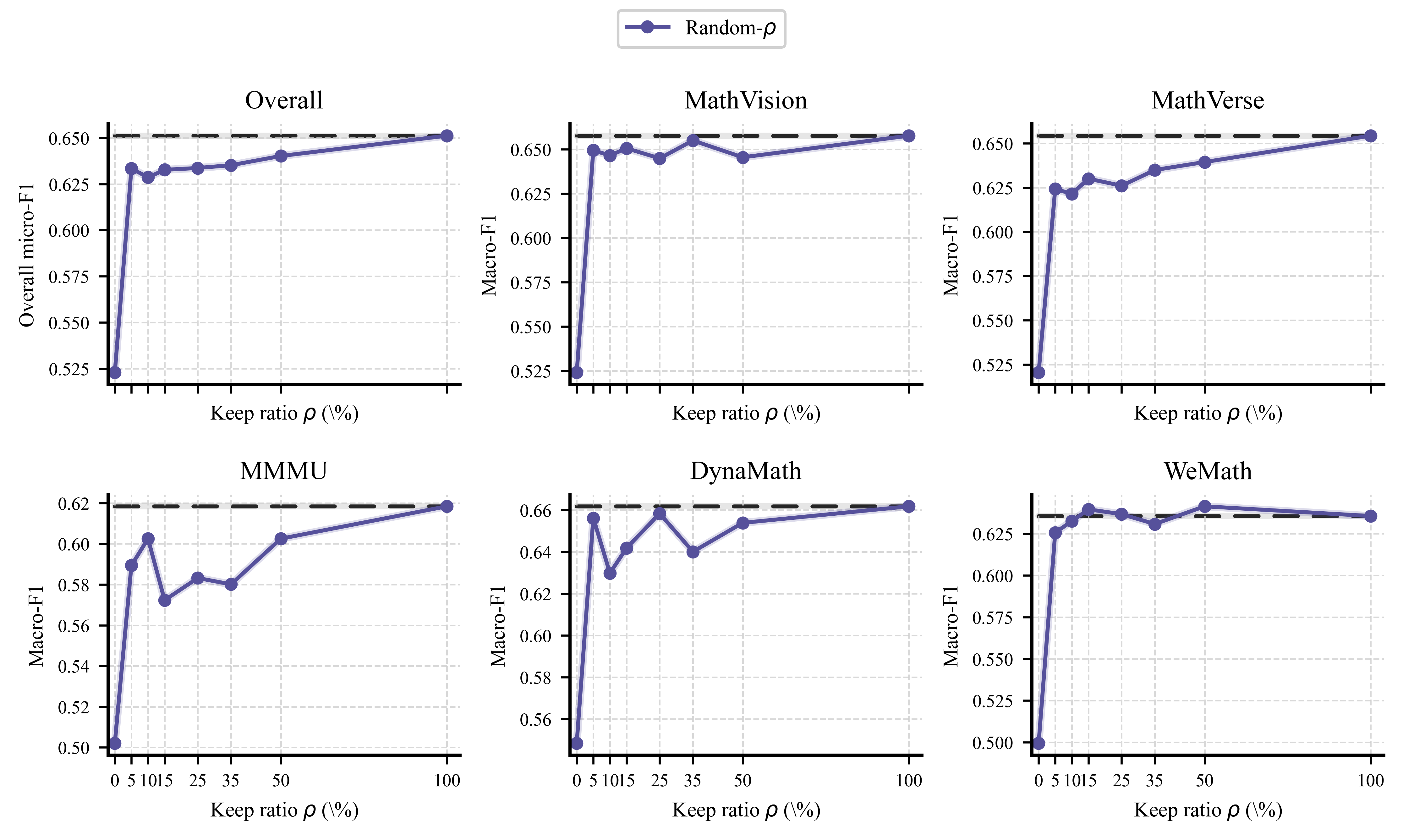}
\caption{Single-pass scaling with random sub-sampling (Random-$\rho$) on VisualProcessBench.
The top-left panel reports Overall micro-F1; the remaining panels show macro-F1 on each source dataset.
The dashed horizontal line marks the Full-Data ($\rho=100\%$) model.}
\label{fig:random-scaling-per-source}
\end{figure}

Figure~\ref{fig:random-scaling-per-source} extends the single-pass scaling plot in Figure~\ref{fig:random-scaling} by breaking down the Random-$\rho$ behavior on VisualProcessBench by source. The top-left panel reproduces the Overall micro-F1 curve, while the remaining panels show macro-F1 on each benchmark. Across sources, performance rises sharply when $\rho$ increases from $0$ to a small fraction (e.g., $5\%$), and then quickly saturates: further enlarging the random pool beyond the low two-digit range yields only mild additional gains. This per-source view mirrors the redundancy-dominated scaling discussed in Section~\ref{sec:Random Sub-sampling: Evidence of Strong Redundancy}.

Figure~\ref{fig:random25-per-source} focuses on the $\rho=25\%$ working point and compares Full-Data and Random-25\% under a matched update budget. The top-left panel shows Overall micro-F1, and the remaining panels report macro-F1 on each VisualProcessBench source. Across sources, the full-data run has a systematic but moderate edge over the Random-25\% subset, and the gap is smaller than one might expect after discarding 75\% of the rollouts. This pattern is consistent with a regime in which additional rollouts yield diminishing returns.

\begin{figure}[t]
\centering
\includegraphics[width=\textwidth]{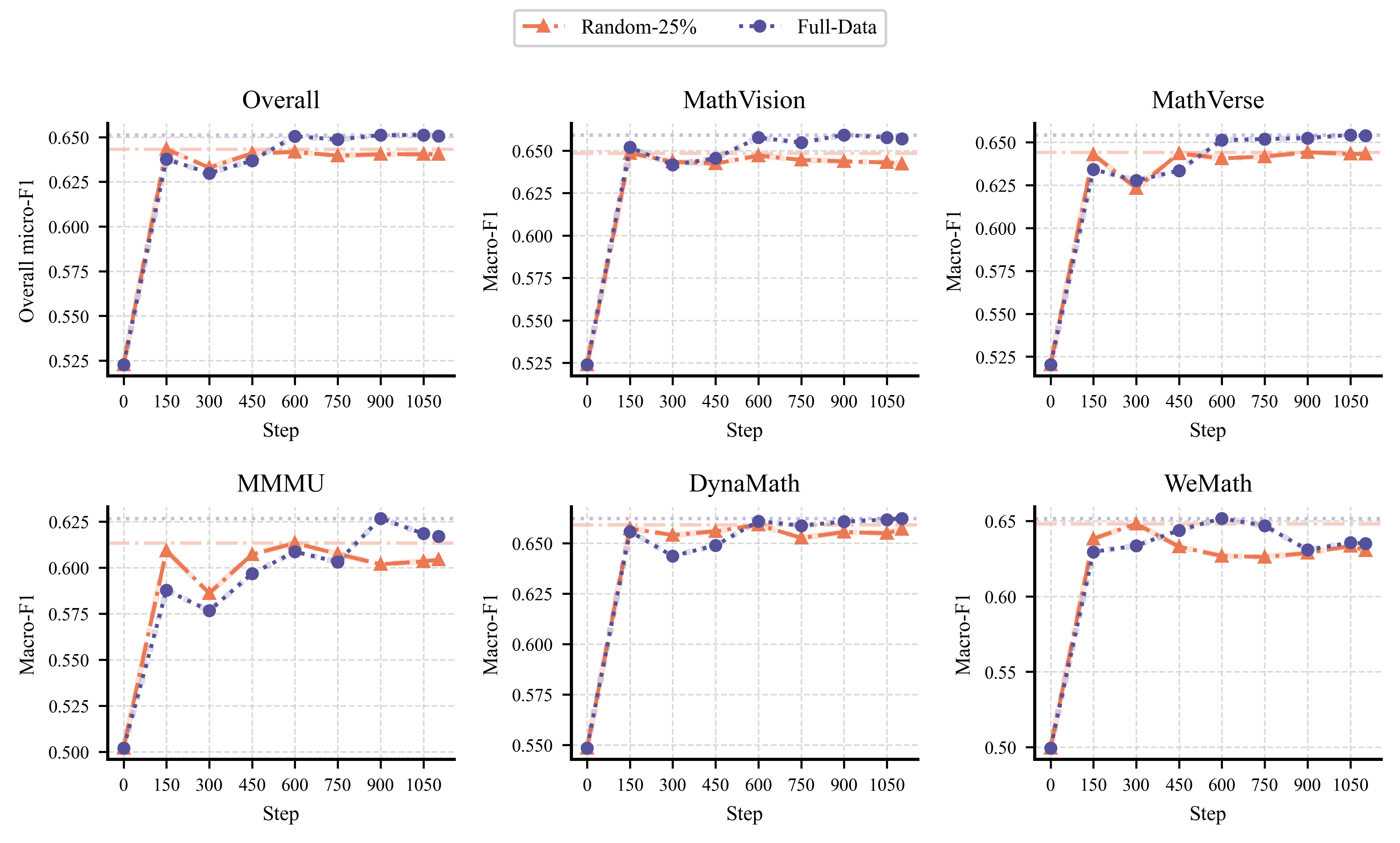}
\caption{VisualProcessBench performance vs.\ training step when training on the Full-Data and Random-25\% settings of VisualPRM400K-v1.1. The top-left panel shows the Overall micro-F1 aggregated over all sources, while the remaining panels show macro-F1 on each individual VisualProcessBench source.}
\label{fig:random25-per-source}
\end{figure}

\clearpage
\section{Extended Results for 25\% Subsets
(Sections~\ref{sec:Characterizing Informative Rollouts Under a Fixed Budget} and~\ref{sec:bis-main-exp}) }
\label{app:subset-details}

\begin{figure}[H]
\centering
\includegraphics[width=\textwidth]{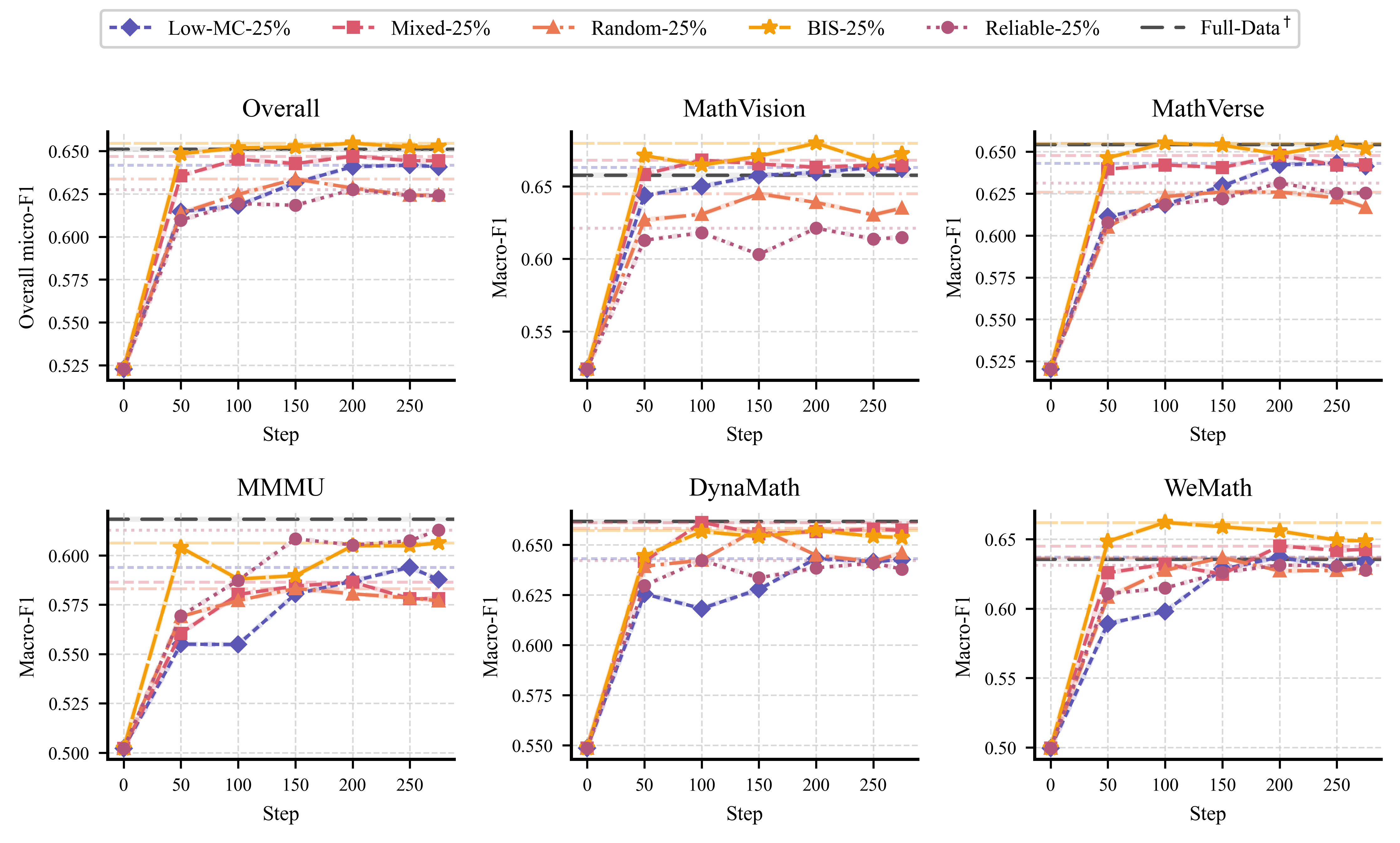}
\caption{
Evaluation performance vs.\ training step on VisualProcessBench for
four 25\% subsets of VisualPRM400K-v1.1.
The top-left panel shows overall micro-F1; the remaining panels show
macro-F1 on each source dataset.
All 25\% subset models are trained for a single pass over their respective training pools.
Full-Data$^\dagger$ shows the best checkpoint from a one-epoch Full-Data run (4$\times$ more optimization steps), shown only as a reference.
BIS-25\% consistently outperforms other subsets on overall and on most individual sources across the
training trajectory, not only at a single checkpoint. 
}
\label{fig:subset-curves}
\end{figure}

Table~\ref{tab:bis-ablation} in the main text reports overall micro-F1 and per-source macro-F1 on VisualProcessBench under the same 25\% rollout budget, comparing BIS-25\% against three baselines: Mixed-25\%, Reliable-25\%, and Low-MC-25\%. 
Here we complement Table~\ref{tab:bis-ablation} with the full training curves of these 25\% subsets on VisualProcessBench, and additionally include Random-25\% as a standard sub-sampling baseline. 
Figure~\ref{fig:subset-curves} plots overall micro-F1 and per-source macro-F1 as a function of training step.
These curves provide a dynamic view of how BIS re-allocates the fixed update budget compared with Random-25\%, Low-MC-25\%, Mixed-25\%, and Reliable-25\%. 
Across sources, BIS-25\% yields the highest or near-highest curve at
almost all steps. 
Combined with the aggregate scores in Table~\ref{tab:bis-ablation},
these extended results corroborate that BIS is strictly more effective than these 25\% baselines under the same rollout and update budget.

\clearpage
\section{Theoretical Details} \label{app:theory}
\subsection{A Canonical Logistic Case for the Scaling Decomposition}
\label{app:theory-scaling}

This section makes the decomposition in Eq.~\eqref{eq:scaling-decomp} precise in the logistic teacher--student setting of Section~\ref{sec:theory-setup},
and derives the $\mathcal{O}(N_{\mathrm{eff}}^{-1/2})$ data term and
$\mathcal{O}(T^{-1/2})$ optimization term step by step.

\paragraph{Setup.}
Each training step is a pair $(\phi,Y)$ with
$\phi \in \mathbb{R}^d$ and $Y \in \{0,1\}$.
The population logistic loss is
\[
\mathcal{L}(w)
\;=\;
\mathbb{E}_{(\phi,Y)}\bigl[
- Y \log q_w(\phi)
- (1-Y)\log\bigl(1-q_w(\phi)\bigr)
\bigr],
\quad
q_w(\phi) = \sigma(\langle w,\phi\rangle),
\]
and $w^*$ denotes its minimizer.
Let $N_{\mathrm{eff}}$ denote the number of i.i.d.\ training steps
after thinning the pool, and
\[
\mathcal{L}_{N_{\mathrm{eff}}}(w)
\;=\;
\frac{1}{N_{\mathrm{eff}}}\sum_{i=1}^{N_{\mathrm{eff}}}
\ell(w; \phi_i,Y_i),
\quad
\ell(w;\phi,Y)
=
- Y \log q_w(\phi)
- (1-Y)\log\bigl(1-q_w(\phi)\bigr)
\]
be the empirical logistic loss.
We write
\[
\widehat{w}_{N_{\mathrm{eff}}}
\;\in\;
\arg\min_{w\in\mathcal{W}} \mathcal{L}_{N_{\mathrm{eff}}}(w)
\]
for an empirical minimizer, and $w_T$ for the SGD iterate
after $T$ updates on this finite sample.

\paragraph{Assumptions.}
We assume:
\begin{itemize}
\item[(A1)] (\emph{Bounded features}) There exists $B>0$ such that
$\|\phi\|_2 \le B$ almost surely.
\item[(A2)] (\emph{Well-specified logistic teacher})
There exists $w^* \in \mathbb{R}^d$ such that
$\Pr(Y=1\mid\phi) = \sigma(\langle w^*,\phi\rangle)$ almost surely.
\item[(A3)] (\emph{Strong convexity and smoothness on a bounded domain})
Assume $\mathcal{L}$ is $\mu$-strongly convex and $L$-smooth on a closed, convex, bounded set $\mathcal{W}\subset\mathbb{R}^d$ containing $w^*$,
with $\sup_{w\in\mathcal{W}}\|w\|_2\le R$. Such a condition can be obtained, for example, by adding a small $\ell_2$ penalty on $w$.
\item[(A4)] (\emph{SGD with decaying steps})
We run projected SGD so that $w_t\in\mathcal{W}$ for all $t$. 
Stochastic gradients are computed on i.i.d.\ samples and have bounded
second moment, and the step sizes satisfy
$\eta_t = \eta_0/\sqrt{t}$ with $\eta_0$ small enough so that
$\eta_t \le 1/L$.
\end{itemize}

\paragraph{Goal.}
We bound the excess population loss
\[
\mathbb{E}\bigl[\mathcal{L}(w_T)\bigr] - \mathcal{L}(w^*)
\]
where the expectation is over both the draw of the training set
and the randomness of SGD, and show that it decomposes into a
$\mathcal{O}(N_{\mathrm{eff}}^{-1/2})$ data term and a
$\mathcal{O}(T^{-1/2})$ optimization term.

\paragraph{Step 1: Decomposition into data and optimization terms.}
Insert and subtract $\widehat{w}_{N_{\mathrm{eff}}}$:
\begin{align}
\mathbb{E}\bigl[\mathcal{L}(w_T)\bigr] - \mathcal{L}(w^*)
&=
\mathbb{E}\bigl[\mathcal{L}(w_T) - \mathcal{L}(\widehat{w}_{N_{\mathrm{eff}}})\bigr]
\;+\;
\mathbb{E}\bigl[\mathcal{L}(\widehat{w}_{N_{\mathrm{eff}}}) - \mathcal{L}(w^*)\bigr]
\nonumber\\[2pt]
&\eqqcolon
\text{Opt}(T,N_{\mathrm{eff}})
\;+\;
\text{Data}(N_{\mathrm{eff}}).
\label{eq:app-decomp}
\end{align}
The first term measures optimization error after $T$ SGD updates on
a fixed finite sample; the second term measures the gap between the
empirical and population optima due to finite data.

\paragraph{Step 2: Bounding the finite-data term.}
By definition,
\[
\text{Data}(N_{\mathrm{eff}})
=
\mathbb{E}\bigl[\mathcal{L}(\widehat{w}_{N_{\mathrm{eff}}}) - \mathcal{L}(w^*)\bigr].
\]
Using the standard optimism of empirical risk minimization (ERM) argument,
\begin{align} 
\mathcal{L}(\widehat{w}_{N_{\mathrm{eff}}}) - \mathcal{L}(w^*)
&=
\bigl(\mathcal{L}(\widehat{w}_{N_{\mathrm{eff}}}) - \mathcal{L}_{N_{\mathrm{eff}}}(\widehat{w}_{N_{\mathrm{eff}}})\bigr)
+
\bigl(\mathcal{L}_{N_{\mathrm{eff}}}(\widehat{w}_{N_{\mathrm{eff}}}) - \mathcal{L}_{N_{\mathrm{eff}}}(w^*)\bigr)
+
\bigl(\mathcal{L}_{N_{\mathrm{eff}}}(w^*) - \mathcal{L}(w^*)\bigr)
\nonumber\\
&\le
\bigl(\mathcal{L}(\widehat{w}_{N_{\mathrm{eff}}}) - \mathcal{L}_{N_{\mathrm{eff}}}(\widehat{w}_{N_{\mathrm{eff}}})\bigr)
+
\bigl(\mathcal{L}_{N_{\mathrm{eff}}}(w^*) - \mathcal{L}(w^*)\bigr)
\nonumber\\
&\le
2
\sup_{w \in \mathcal{W}}
\bigl|\mathcal{L}(w) - \mathcal{L}_{N_{\mathrm{eff}}}(w)\bigr|.
\nonumber
\end{align}
Taking expectations and applying uniform convergence then yields
\begin{equation}
\text{Data}(N_{\mathrm{eff}})
\;\le\;
2\,
\mathbb{E}\biggl[
\sup_{w \in \mathcal{W}}
\bigl|\mathcal{L}(w) - \mathcal{L}_{N_{\mathrm{eff}}}(w)\bigr|
\biggr].
\label{eq:app-data-pre}
\end{equation}
Under (A1)–(A3), the logistic loss $\ell(w;\phi,Y)$ is Lipschitz in $w$
on $\mathcal{W}$ and the class
$\{\ell(w;\cdot,\cdot) : w \in \mathcal{W}\}$ has bounded Rademacher
complexity.
Standard uniform convergence bounds for Lipschitz losses in generalized
linear models~\cite{shalev2014understanding} then imply the existence of a constant $C_{\mathrm{data}}>0$
such that
\begin{equation}
\mathbb{E}\biggl[
\sup_{w \in \mathcal{W}}
\bigl|\mathcal{L}(w) - \mathcal{L}_{N_{\mathrm{eff}}}(w)\bigr|
\biggr]
\;\le\;
\frac{C_{\mathrm{data}}}{\sqrt{N_{\mathrm{eff}}}}.
\label{eq:app-uniform-rate}
\end{equation}
Combining~\eqref{eq:app-data-pre} and~\eqref{eq:app-uniform-rate} and
absorbing the factor~2 into the constant gives
\begin{equation}
\text{Data}(N_{\mathrm{eff}})
\;\le\;
\frac{C_{\mathrm{data}}}{\sqrt{N_{\mathrm{eff}}}}.
\label{eq:app-data-final}
\end{equation}
Here $C_{\mathrm{data}}$ depends on the feature bound $B$ and the domain radius $R$ (and thus grows with $BR$).

\paragraph{Step 3: Bounding the optimization term.}
We now control the optimization error
\(
\text{Opt}(T,N_{\mathrm{eff}})
=
\mathbb{E}[\mathcal{L}(w_T) - \mathcal{L}(\widehat{w}_{N_{\mathrm{eff}}})]
\).
Conditioned on the fixed sample
$\{(\phi_i,Y_i)\}_{i=1}^{N_{\mathrm{eff}}}$, let
\[
F(w)
:=
\mathcal{L}_{N_{\mathrm{eff}}}(w)
=
\frac{1}{N_{\mathrm{eff}}}
\sum_{i=1}^{N_{\mathrm{eff}}}
\ell(w;\phi_i,Y_i),
\qquad
\widehat{w}_{N_{\mathrm{eff}}}
\in
\arg\min_{w\in\mathcal{W}} F(w).
\]
We assume $F$ is $\mu$-strongly convex and $L$-smooth on $\mathcal{W}$. Moreover, we assume that the empirical minimizer $\widehat{w}_{N_{\mathrm{eff}}}$ lies in the interior of $\mathcal{W}$, so that $\nabla F(\widehat{w}_{N_{\mathrm{eff}}})=0$.
The stochastic gradients
\(
g_t
\)
used by SGD satisfy
\(
\mathbb{E}[g_t \mid w_t] = \nabla F(w_t)
\)
and
\(
\mathbb{E}[\|g_t\|^2 \mid w_t] \le G^2
\)
for some $G>0$.
The SGD recursion on the empirical loss is
\[
w_{t+1}
=
\Pi_{\mathcal{W}}\!\left(w_t - \eta_t g_t\right),
\qquad
\eta_t = \eta_0 t^{-1/2},
\]
with $\eta_0$ small enough so that $\eta_t \le 1/L$ for all $t$.

Define the mean squared distance to the empirical minimizer as
\[
D_t := \mathbb{E}\big[\|w_t - \widehat{w}_{N_{\mathrm{eff}}}\|^2\big].
\]
A standard one-step expansion of $\|w_{t+1} - \widehat{w}_{N_{\mathrm{eff}}}\|^2$,
combined with the $\mu$-strong convexity and $L$-smoothness of $F$
and the bounded-variance assumption on $g_t$, implies that the
sequence $(D_t)$ satisfies a recursion of the form
\begin{equation}
D_{t+1}
\;\le\;
\bigl(1 - 2\mu\eta_t + 2L^2\eta_t^2\bigr)\,D_t
\;+\;
2G^2 \eta_t^2,
\nonumber
\end{equation}
see~\citet{moulines2011non} for a detailed derivation.
Since $\eta_t$ is non-increasing and $w_t\in\mathcal{W}$ for all $t$, and $\mathcal{W}$ is bounded so that $D_t\le \mathrm{diam}(\mathcal{W})^2$,
we may absorb the $2L^2\eta_t^2 D_t$ term and $2\sigma^2$ into a single constant $G^2$, yielding the simplified recursion
\begin{equation}
D_{t+1}
\;\le\;
(1 - 2\mu\eta_t)\,D_t
\;+\;
\eta_t^2 G^2.
\label{eq:sgd-recursion}
\end{equation}
Specializing to the step-size schedule $\eta_t = \eta_0 t^{-1/2}$,
\eqref{eq:sgd-recursion} can be rewritten as
\begin{equation}
D_{t+1}
\;\le\;
\Bigl(1 - \frac{c}{\sqrt{t}}\Bigr) D_t
\;+\;
\frac{C}{t},
\label{eq:canonical-recursion}
\end{equation}
for some constants $c,C>0$ depending only on $\mu,\eta_0,G$.

\begin{lemma}[One-dimensional SGD recursion]\label{lem:sgd-recursion}
Let $(D_t)_{t\ge1}$ be a nonnegative sequence satisfying
\eqref{eq:canonical-recursion} for all $t\ge1$, with $c,C>0$.
Then there exists a constant $C'>0$, depending only on $c,C$ and $D_1$,
such that
\begin{equation}
D_T \;\le\; \frac{C'}{\sqrt{T}}
\qquad\text{for all } T\ge1.
\nonumber
\end{equation}
\end{lemma}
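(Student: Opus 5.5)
The plan is to prove the bound by induction, with the hypothesis $D_t \le K/\sqrt{t}$ for a constant $K$ fixed in advance. The induction only runs after a burn-in time $t_0$ that depends on $c$ alone. The first $t_0$ terms are handled by a crude growth bound, and $K$ is then taken large enough to cover both phases.

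\emph{Burn-in.} For every $t\ge1$, nonnegativity of $D_t$ gives $(1-c/\sqrt t)D_t \le D_t$; this holds even when $1-c/\sqrt t<0$, because then the left side is nonpositive. The recursion \eqref{eq:canonical-recursion} therefore yields $D_{t+1}\le D_t + C/t \le D_t + C$, so $D_t \le D_1 + (t-1)C$. Set $t_0 := \lceil 1/c^2\rceil$. For $t\le t_0$ we then have $\sqrt t\, D_t \le \sqrt{t_0}\,(D_1 + t_0 C)$, which is a constant depending only on $c$, $C$ and $D_1$.

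\emph{Inductive step for $t\ge t_0$.} Assume $D_t\le K/\sqrt t$. Since $t\ge t_0\ge 1/c^2$, the factor $1-c/\sqrt t$ is nonnegative, so multiplying the hypothesis by it preserves the inequality:
\[
D_{t+1} \le \frac{K}{\sqrt t} - \frac{cK}{t} + \frac{C}{t}.
\]
We need the right side to be at most $K/\sqrt{t+1}$, which rearranges to
\[
K\Bigl(\tfrac{1}{\sqrt t}-\tfrac{1}{\sqrt{t+1}}\Bigr) \le \tfrac{cK-C}{t}.
\]
By the mean value theorem, $\tfrac{1}{\sqrt t}-\tfrac{1}{\sqrt{t+1}} \le \tfrac{1}{2t^{3/2}}$. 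It therefore suffices that $K/(2\sqrt t) \le cK - C$. Because $t\ge 1/c^2$ gives $1/(2\sqrt t)\le c/2$, this reduces to $cK/2\ge C$, i.e.\ $K\ge 2C/c$.

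\emph{Choice of constant.} Take
\[
K := \max\bigl\{\,2C/c,\ \sqrt{t_0}\,(D_1+t_0C)\,\bigr\}.
\]
The second entry covers the base case $t=t_0$ and all $t<t_0$. The first entry makes the inductive step valid for every $t\ge t_0$. Hence $D_T\le K/\sqrt T$ for all $T\ge1$, and $C':=K$ depends only on $c$, $C$ and $D_1$.

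The only delicate point is the sign issue: for small $t$ the contraction factor $1-c/\sqrt t$ can be negative or close to one, so the inductive step cannot start at $t=1$. This is why the burn-in phase up to $t_0\approx 1/c^2$ is separated off and controlled by the crude linear-growth bound.
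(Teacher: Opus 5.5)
Your inductive step fails when $c>1$, because the burn-in time uses the wrong threshold. The factor $1-c/\sqrt t$ is nonnegative iff $\sqrt t\ge c$, i.e.\ $t\ge c^2$. Your choice $t_0=\lceil 1/c^2\rceil$ guarantees $t^{-1/2}\le c$, which you use correctly later. It implies $t\ge c^2$ only when $c\le 1$, however.

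For $c>1$ you get $t_0=1$, and the factor is negative for $1\le t<c^2$. Multiplying $D_t\le K/\sqrt t$ by a negative number reverses the inequality, so $D_{t+1}\le K/\sqrt t-cK/t+C/t$ does not follow in that range. Nothing in the lemma excludes $c>1$; in the paper's derivation $c=2\mu\eta_0$, which may exceed $1$.

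Your stated constant actually fails there. Take $c=2$, $C=1$, $D_1=0$, $D_2=1$ and $D_t=0$ for $t\ge 3$. This sequence is nonnegative and satisfies \eqref{eq:canonical-recursion} for every $t\ge1$: at $t=1$, $1\le(1-2)\cdot 0+1$; at $t=2$, $0\le(1-\sqrt2)\cdot 1+\tfrac12$; for $t\ge 3$, $0\le 1/t$. Your recipe gives $t_0=1$ and $K=\max\{2C/c,\,D_1+C\}=1$. Your inductive step at $t=1$ would then assert $D_2\le K-cK+C=0$, whereas in fact $D_2=1>K/\sqrt2$.

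The repair is simple: take $t_0:=\lceil\max\{c^2,c^{-2}\}\rceil$ and keep $K=\max\{2C/c,\ \sqrt{t_0}\,(D_1+t_0C)\}$. Your burn-in bound $D_t\le D_1+(t-1)C$ holds for every $t$ regardless of the sign of the factor, so it still handles $t\le t_0$. For $t\ge t_0$ you now have both $1-c/\sqrt t\ge 0$ and $t^{-1/2}\le c$, and your computation with $K\ge 2C/c$ goes through verbatim.

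With this fix the induction is a valid, self-contained proof of the lemma. The paper itself supplies none; it only cites the non-asymptotic bound of Moulines and Bach (2011) with $\alpha=1/2$.
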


This lemma is a direct corollary of standard results for stochastic
approximation recursions; see, e.g., the standard mean-square error recursion and the corresponding non-asymptotic bound in~\citet{moulines2011non}
(with $\alpha = 1/2$) for an explicit derivation of the
$\mathcal{O}(T^{-1/2})$ rate. 
Applying Lemma~\ref{lem:sgd-recursion} to \eqref{eq:canonical-recursion}
yields
\begin{equation}
D_T
=
\mathbb{E}\bigl[\|w_T - \widehat{w}_{N_{\mathrm{eff}}}\|^2\bigr]
\;\le\;
\frac{C'}{\sqrt{T}}.
\label{eq:sgd-distance}
\end{equation}

By $L$-smoothness of $F$ (see Lemma 1.2.3 in~\citet{nesterov2018lectures}), we have for any $w$
\[
F(w) \;\le\; F(\widehat{w}_{N_{\mathrm{eff}}})
+ \bigl\langle \nabla F(\widehat{w}_{N_{\mathrm{eff}}}),
\,w-\widehat{w}_{N_{\mathrm{eff}}}\bigr\rangle
+ \frac{L}{2}\,\|w-\widehat{w}_{N_{\mathrm{eff}}}\|^2.
\]
Since $\widehat{w}_{N_{\mathrm{eff}}}$ is a minimizer of $F$,
$\nabla F(\widehat{w}_{N_{\mathrm{eff}}}) = 0$, and thus
\[
F(w_T) - F(\widehat{w}_{N_{\mathrm{eff}}})
\;\le\;
\frac{L}{2}\,\|w_T - \widehat{w}_{N_{\mathrm{eff}}}\|^2.
\]
Taking expectations and combining with
\eqref{eq:sgd-distance} gives
\begin{equation}
\mathbb{E}\bigl[
\mathcal{L}_{N_{\mathrm{eff}}}(w_T)
-
\mathcal{L}_{N_{\mathrm{eff}}}(\widehat{w}_{N_{\mathrm{eff}}})
\bigr]
\;\le\;
\frac{C_{\mathrm{opt}}}{\sqrt{T}},
\label{eq:app-sgd-empirical}
\end{equation}
for some constant $C_{\mathrm{opt}}>0$ depending only on
$\mu,L,G$ and the initialization. In particular, higher Monte Carlo noise typically increases the second-moment bound $G^2$, which increases $C_{\mathrm{opt}}$ and can make the optimization term dominant in a noise-limited regime.

To relate this bound on the empirical loss to the population loss, we
insert and subtract $\mathcal{L}_{N_{\mathrm{eff}}}$ and decompose
\begin{equation}
\mathbb{E}\bigl[\mathcal{L}(w_T) - \mathcal{L}(\widehat{w}_{N_{\mathrm{eff}}})\bigr]
=
\mathbb{E}\bigl[
\mathcal{L}_{N_{\mathrm{eff}}}(w_T)
-
\mathcal{L}_{N_{\mathrm{eff}}}(\widehat{w}_{N_{\mathrm{eff}}})
\bigr]
+
\mathbb{E}\bigl[
(\mathcal{L}(w_T)-\mathcal{L}_{N_{\mathrm{eff}}}(w_T))
-
(\mathcal{L}(\widehat{w}_{N_{\mathrm{eff}}})-\mathcal{L}_{N_{\mathrm{eff}}}(\widehat{w}_{N_{\mathrm{eff}}}))
\bigr].
\label{eq:app-opt-decomp}
\end{equation}
By the triangle inequality,
\begin{equation}
\bigl|
\mathbb{E}\bigl[
(\mathcal{L}(w_T)-\mathcal{L}_{N_{\mathrm{eff}}}(w_T))
-
(\mathcal{L}(\widehat{w}_{N_{\mathrm{eff}}})-\mathcal{L}_{N_{\mathrm{eff}}}(\widehat{w}_{N_{\mathrm{eff}}}))
\bigr]
\bigr|
\;\le\;
2\,\mathbb{E}\Bigl[
\sup_{w\in\mathcal{W}}
\bigl|\mathcal{L}(w)-\mathcal{L}_{N_{\mathrm{eff}}}(w)\bigr|
\Bigr].
\label{eq:app-opt-unif}
\end{equation}
Applying the uniform-convergence bound \eqref{eq:app-uniform-rate} to
\eqref{eq:app-opt-unif} and combining with
\eqref{eq:app-sgd-empirical} and \eqref{eq:app-opt-decomp} yields
\begin{equation}
\text{Opt}(T,N_{\mathrm{eff}})
=
\mathbb{E}\bigl[\mathcal{L}(w_T) - \mathcal{L}(\widehat{w}_{N_{\mathrm{eff}}})\bigr]
\;\le\;
\frac{C_{\mathrm{opt}}}{\sqrt{T}}
+
\frac{C'_{\mathrm{data}}}{\sqrt{N_{\mathrm{eff}}}},
\label{eq:app-opt-final}
\end{equation}
for some constant $C'_{\mathrm{data}}>0$. When we combine
\eqref{eq:app-opt-final} with the finite-sample term in
\eqref{eq:app-data-final}, the two $\mathcal{O}(N_{\mathrm{eff}}^{-1/2})$
contributions can be aggregated into a single constant, leading to the
overall decomposition in \eqref{eq:scaling-decomp}.

\paragraph{Step 4: Putting the pieces together.}
Substituting~\eqref{eq:app-data-final} and~\eqref{eq:app-opt-final}
into the decomposition~\eqref{eq:app-decomp} yields
\begin{equation}
\mathbb{E}\bigl[\mathcal{L}(w_T)\bigr] - \mathcal{L}(w^*)
\;=\;
\text{Opt}(T,N_{\mathrm{eff}})
+
\text{Data}(N_{\mathrm{eff}})
\;\le\;
\frac{C_{\mathrm{data}}}{\sqrt{N_{\mathrm{eff}}}}
\;+\;
\frac{C_{\mathrm{opt}}}{\sqrt{T}}.
\nonumber
\end{equation}
These $\mathcal{O}(N_{\mathrm{eff}}^{-1/2})$ and $\mathcal{O}(T^{-1/2})$ rates are conservative but sufficient for the scaling decomposition used in Section~\ref{sec:theory-scaling}.

\subsection{Step-wise Gradient Variance}
\label{app:theory-grad-var}

We derive the step-wise gradient variance expression used in
Equation~\eqref{eq:grad-var}.
Recall the teacher--student setup in Section~\ref{sec:theory-setup}.
For a step with representation $\phi\in\mathbb{R}^d$ and clean label
$Y\in\{0,1\}$, the logistic-loss gradient at parameter $w$ is
\[
g(\phi,Y;w)
=
\bigl(q_w(\phi)-Y\bigr)\,\phi,
\qquad
q_w(\phi) = \sigma(\langle w,\phi\rangle).
\]
At the teacher parameter $w^*$, write
\[
q^*(\phi) := q_{w^*}(\phi),
\qquad
Y \mid \phi \sim \mathrm{Bernoulli}\bigl(q^*(\phi)\bigr).
\]
Then
\[
\mathbb{E}\bigl[Y \mid \phi\bigr] = q^*(\phi),
\qquad
\mathbb{E}\bigl[Y^2 \mid \phi\bigr] = q^*(\phi).
\]

The conditional mean of the gradient at $w^*$ is
\[
\mathbb{E}\bigl[g(\phi,Y;w^*) \mid \phi\bigr]
=
\bigl(q^*(\phi) - \mathbb{E}[Y \mid \phi]\bigr)\,\phi
= 0,
\]
so the conditional second moment equals the conditional variance:
\begin{align*}
\mathbb{E}\bigl[\|g(\phi,Y;w^*)\|^2 \mid \phi\bigr]
&=
\mathbb{E}\bigl[(q^*(\phi)-Y)^2 \mid \phi\bigr]\,\|\phi\|^2 \\
&=
\Bigl(q^{*2}(\phi)
- 2 q^*(\phi)\,\mathbb{E}[Y \mid \phi]
+ \mathbb{E}[Y^2 \mid \phi]\Bigr)\,\|\phi\|^2 \\
&=
\bigl(q^*(\phi) - q^{*2}(\phi)\bigr)\,\|\phi\|^2 \\
&=
q^*(\phi)\bigl(1-q^*(\phi)\bigr)\,\|\phi\|^2,
\end{align*}
which is exactly the expression in Equation~\eqref{eq:grad-var}.
\qedhere

\subsection{Symmetric Label Noise}
\label{app:theory-noise}

We now derive the noisy-gradient expression used in
Equation~\eqref{eq:noise-factor}.
Fix $\phi\in\mathbb{R}^d$ and write $q^*(\phi)=q_{w^*}(\phi)$.
For brevity, set
\[
q := q^*(\phi) \in (0,1).
\]
Let the clean label $Y \mid \phi \sim \mathrm{Bernoulli}(q)$ be flipped
independently with probability $\eta\in[0,1/2)$ to form a noisy label
$\tilde Y$.
The noisy gradient at $w^*$ is
\[
\tilde g(\phi,\tilde Y;w^*)
=
(q - \tilde Y)\,\phi.
\]

Conditioned on $\phi$, we can express the distribution of the noisy
label $\tilde Y$ by conditioning on the clean label $Y$ and applying
the law of total probability:
\begin{align*}
\Pr(\tilde Y = 1 \mid \phi)
&= \Pr(\tilde Y = 1, Y = 1 \mid \phi)
 + \Pr(\tilde Y = 1, Y = 0 \mid \phi) \\
&= \Pr(\tilde Y = 1 \mid Y = 1, \phi)\Pr(Y = 1 \mid \phi)
 + \Pr(\tilde Y = 1 \mid Y = 0, \phi)\Pr(Y = 0 \mid \phi) \\
&= (1-\eta)\,q + \eta\,(1-q) \\
&= q(1-2\eta)+\eta
=: p_1,
\end{align*}
where $q = q^*(\phi)$ and $Y \mid \phi \sim \mathrm{Bernoulli}(q)$.
Thus
\[
\Pr(\tilde Y = 0 \mid \phi) = 1 - p_1 =: p_0.
\]
Consequently,
\[
q - \tilde Y =
\begin{cases}
q - 1, & \tilde Y = 1, \\
q,     & \tilde Y = 0.
\end{cases}
\]

and the conditional second moment of the noisy gradient is
\begin{align*}
\mathbb{E}\bigl[\|\tilde g(\phi,\tilde Y;w^*)\|^2 \mid \phi\bigr]
&=
\mathbb{E}\bigl[(q-\tilde Y)^2 \mid \phi\bigr]\,\|\phi\|^2 \\
&=
\bigl(p_1 (q-1)^2 + p_0 q^2\bigr)\,\|\phi\|^2.
\end{align*}

Substituting $p_1 = q(1-2\eta)+\eta$ and $p_0 = 1-p_1$ and expanding
gives
\begin{align*}
p_1 (q-1)^2 + p_0 q^2
&=
(1-4\eta)\,q(1-q) + \eta.
\end{align*}
Therefore
\begin{align*}
\mathbb{E}\bigl[\|\tilde g(\phi,\tilde Y;w^*)\|^2 \mid \phi\bigr]
&=
\Bigl((1-4\eta)\,q(1-q) + \eta\Bigr)\,\|\phi\|^2.
\end{align*}
Reinstating $q = q^*(\phi)$ yields
\[
\mathbb{E}\bigl[\|\tilde g(\phi,\tilde Y;w^*)\|^2 \mid \phi\bigr]
=
\Bigl((1-4\eta)\,q^*(\phi)\bigl(1-q^*(\phi)\bigr)
+ \eta\Bigr)\,\|\phi\|^2,
\]
which is the form stated in Equation~\eqref{eq:noise-factor}.
\qedhere

\subsection{Rollout-Level Mixture and Representation Norms}
\label{app:theory-mix}

We formalize two facts used in Section~\ref{sec:theory-rollout-info}:
(i) rollout-level label mixture is an approximately unbiased proxy for
the latent teacher mixture, and (ii) under bounded representation norms,
average $q(1-q)$ and average $q(1-q)\|\phi\|^2$ differ only by constant
factors.

\paragraph{Label variance decomposition.}
Fix a rollout $x$ with $n$ steps.
For step $j$ let $q_j := q^*_{x,j}\in[0,1]$ and
$Y_j \mid q_j \sim \mathrm{Bernoulli}(q_j)$, independently conditioned
on $\{q_j\}$.
Define
\[
\hat p
:=
\frac{1}{n}\sum_{j=1}^n Y_j,
\qquad
\bar q
:=
\frac{1}{n}\sum_{j=1}^n q_j,
\qquad
A(x)
:=
\frac{1}{n}\sum_{j=1}^n q_j(1-q_j).
\]

\begin{lemma}[Label variance decomposition]
\label{lem:label-var}
Conditioned on $\{q_j\}$, the empirical label variance satisfies
\[
\mathbb{E}\bigl[\hat p(1-\hat p) \,\big|\, \{q_j\}\bigr]
=
\bar q(1-\bar q)
-
\frac{1}{n^2}\sum_{j=1}^n q_j(1-q_j).
\]
\end{lemma}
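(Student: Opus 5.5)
The plan is a direct second-moment computation. Write $\hat p(1-\hat p) = \hat p - \hat p^2$, so that
\[
\mathbb{E}[\hat p(1-\hat p)\mid\{q_j\}] = \mathbb{E}[\hat p\mid\{q_j\}] - \mathbb{E}[\hat p^2\mid\{q_j\}].
\]
By linearity the first term equals $\bar q$. For the second term, use
\[
\mathbb{E}[\hat p^2\mid\{q_j\}] = \bigl(\mathbb{E}[\hat p\mid\{q_j\}]\bigr)^2 + \mathrm{Var}(\hat p\mid\{q_j\}),
\]
which reduces everything to computing the conditional variance of $\hat p$.

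Next, compute that variance. Conditioned on $\{q_j\}$, the $Y_j$ are independent Bernoulli$(q_j)$. Their variances therefore add:
\[
\mathrm{Var}(\hat p\mid\{q_j\}) = \frac{1}{n^2}\sum_{j=1}^n \mathrm{Var}(Y_j\mid q_j) = \frac{1}{n^2}\sum_{j=1}^n q_j(1-q_j).
\]
Here $\mathrm{Var}(Y_j\mid q_j)=q_j-q_j^2$ because $Y_j^2=Y_j$, the same identity used in Appendix~\ref{app:theory-grad-var}.

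Substituting back gives
\[
\mathbb{E}[\hat p(1-\hat p)\mid\{q_j\}] = \bar q - \bar q^2 - \frac{1}{n^2}\sum_j q_j(1-q_j),
\]
which is the claimed identity.

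If one prefers to avoid the variance shortcut, expand $\hat p^2 = n^{-2}\bigl(\sum_j Y_j + \sum_{j\ne k}Y_jY_k\bigr)$ instead. Taking conditional expectations with $\mathbb{E}[Y_jY_k]=q_jq_k$ for $j\ne k$ yields $n^{-2}\bigl(\sum_j q_j + (n\bar q)^2 - \sum_j q_j^2\bigr)$, which is the same expression.

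There is no real obstacle. The only point requiring care is the conditional independence of the $Y_j$ given $\{q_j\}$, which kills the cross terms. It is assumed in the lemma's setup and should be invoked explicitly at that step.
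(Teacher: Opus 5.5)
Your proposal is correct and is essentially the paper's proof. The paper expands $\hat p^2$ into diagonal and cross terms, uses conditional independence to get $\mathbb{E}[Y_jY_k\mid\{q_j\}]=q_jq_k$, and cancels the cross terms against the expansion of $\bar q^2$. That is exactly your alternative route, and your primary route via $\mathbb{E}[\hat p^2\mid\{q_j\}]=\bar q^2+\mathrm{Var}(\hat p\mid\{q_j\})$ is the same cancellation packaged as additivity of variances.
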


\begin{proof}
We have
\[
\hat p
=
\frac{1}{n}\sum_{j=1}^n Y_j,
\qquad
\hat p^2
=
\frac{1}{n^2}\sum_{j=1}^n Y_j^2
+
\frac{2}{n^2}\sum_{1\le j<k\le n} Y_j Y_k.
\]
Conditioned on $\{q_j\}$ the $Y_j$ are independent with
$\mathbb{E}[Y_j\mid q_j] = q_j$ and $\mathbb{E}[Y_j^2\mid q_j] = q_j$,
so
\[
\mathbb{E}[\hat p \mid \{q_j\}]
=
\bar q,
\qquad
\mathbb{E}[\hat p^2 \mid \{q_j\}]
=
\frac{1}{n^2}\sum_{j=1}^n q_j
+
\frac{2}{n^2}\sum_{1\le j<k\le n} q_j q_k.
\]
Using
\[
\bar q^2
=
\Bigl(\frac{1}{n}\sum_{j=1}^n q_j\Bigr)^2
=
\frac{1}{n^2}\sum_{j=1}^n q_j^2
+
\frac{2}{n^2}\sum_{1\le j<k\le n} q_j q_k
\]
to eliminate the cross terms yields
\[
\mathbb{E}[\hat p^2 \mid \{q_j\}]
=
\bar q^2
+
\frac{1}{n^2}\sum_{j=1}^n q_j
-
\frac{1}{n^2}\sum_{j=1}^n q_j^2
=
\bar q^2
+
\frac{1}{n^2}\sum_{j=1}^n q_j(1-q_j).
\]
Finally,
\[
\mathbb{E}\bigl[\hat p(1-\hat p)\,\big|\,\{q_j\}\bigr]
=
\mathbb{E}[\hat p \mid \{q_j\}]
-
\mathbb{E}[\hat p^2 \mid \{q_j\}]
=
\bar q(1-\bar q)
-
\frac{1}{n^2}\sum_{j=1}^n q_j(1-q_j),
\]
as claimed.
\end{proof}

Since $t\mapsto t(1-t)$ is concave on $[0,1]$, Jensen's inequality
gives
\[
A(x)
=
\frac{1}{n}\sum_{j=1}^n q_j(1-q_j)
\;\le\;
\bar q(1-\bar q)
=:\theta_x.
\]
Using Lemma~\ref{lem:label-var} and the identity
\[
\frac{1}{n^2}\sum_{j=1}^n q_j(1-q_j) = \frac{1}{n}A(x),
\]
we can rewrite
\[
\mathbb{E}\bigl[\hat p(1-\hat p)\,\big|\,\{q_j\}\bigr]
=
\theta_x - \frac{1}{n}A(x).
\]
Since $0 \le A(x) \le \theta_x$, this immediately yields the sandwich
bound
\[
\theta_x\Bigl(1-\frac{1}{n}\Bigr)
\;\le\;
\mathbb{E}\bigl[\hat p(1-\hat p)\,\big|\,\{q_j\}\bigr]
\;\le\;
\theta_x.
\]
Thus the conditional bias of $\hat p(1-\hat p)$ as an estimator of the
teacher-level mixture $\theta_x$ is at most $\theta_x/n \le 1/(4n)$,
and $\hat p(1-\hat p)$ is an approximately unbiased proxy for
$\theta_x$.
In particular, rollouts with larger $\hat p(1-\hat p)$ tend to have larger teacher-level mixture $\theta_x$ (in expectation, up to an $\mathcal{O}(1/n)$ bias). 
Since $A(x)\le \theta_x$, a larger $\theta_x$ simply provides more headroom
for $A(x)$ to be large, and therefore for the rollout to contain more
informative steps.

\paragraph{Symmetric flip noise and induced mixture.}
In Section~\ref{sec:theory-rollout-info} we also consider a symmetric flip noise approximation: the observed label $\tilde Y_j$ is obtained by independently flipping the clean label $Y_j$ with probability $\eta\in[0,1/2)$. Let $B_j\sim \mathrm{Bernoulli}(\eta)$ be independent of $Y_j$ and define $\tilde Y_j := Y_j \oplus B_j$.
Conditioned on $q_j$, we have
\begin{align}
\tilde q_j
\;:=\;
\mathbb{P}(\tilde Y_j=1\mid q_j)
&=
\mathbb{P}(Y_j=1,B_j=0\mid q_j)+\mathbb{P}(Y_j=0,B_j=1\mid q_j) \nonumber\\
&=
(1-\eta)q_j+\eta(1-q_j)
=
(1-2\eta)q_j+\eta.
\label{eq:tilde-q-def}
\end{align}
Thus $\tilde Y_j\mid q_j\sim \mathrm{Bernoulli}(\tilde q_j)$.
Averaging \eqref{eq:tilde-q-def} over steps gives
\[
\bar{\tilde q}
=
\frac{1}{n}\sum_{j=1}^n \tilde q_j
=
(1-2\eta)\bar q+\eta.
\]
Defining $\tilde\theta_x:=\bar{\tilde q}(1-\bar{\tilde q})$ and $\theta_x:=\bar q(1-\bar q)$, a direct expansion yields
\begin{equation}
\label{eq:theta-tilde-theta-app}
\begin{aligned}
\tilde\theta_x
&= \bigl((1-2\eta)\bar q+\eta\bigr)\Bigl(1-(1-2\eta)\bar q-\eta\Bigr) \\
&= \bigl((1-2\eta)\bar q+\eta\bigr)\bigl((1-\eta)-(1-2\eta)\bar q\bigr) \\
&= (1-2\eta)^2\,\bar q(1-\bar q) + \eta(1-\eta) \\
&= (1-2\eta)^2\,\theta_x+\eta(1-\eta).
\end{aligned}
\end{equation}

\paragraph{Closeness to the noise-free analysis for small $\eta$.}
Eq.~\eqref{eq:theta-tilde-theta-app} implies
\[
\tilde\theta_x-\theta_x
=
\bigl((1-2\eta)^2-1\bigr)\theta_x+\eta(1-\eta),
\]
so using $0\le \theta_x\le 1/4$ we obtain the uniform bound
\begin{equation}
|\tilde\theta_x-\theta_x|
\le
4\eta\,\theta_x+\eta(1-\eta)
\le
2\eta. \label{eq:tilde-theta-close}
\end{equation} 
Moreover, since $\tilde Y_{x,j}\mid \tilde q_{x,j}\sim \mathrm{Bernoulli}(\tilde q_{x,j})$ independently conditioned on $\{\tilde q_{x,j}\}$, applying Lemma~\ref{lem:label-var} with $q_{x,j}$ replaced by $\tilde q_{x,j}$ shows that $\hat p_{\sim,x}(1-\hat p_{\sim,x})$ estimates $\tilde\theta_x$ up to an additional $\mathcal{O}(1/n)$ bias.
Therefore, when $\eta$ is small (and $n$ is not too small), the mixture computed from observed labels is within $\mathcal{O}(\eta)+\mathcal{O}(1/n)$ of the noise-free target in expectation, so the analysis in the noise-free setting applies up to a small additive perturbation.

\paragraph{Effect of bounded representation norms.}
Define the full average step-wise information
\[
A_{\text{full}}(x)
:=
\frac{1}{n}\sum_{j=1}^n q_j(1-q_j)\,\|\phi_{x,j}\|^2.
\]
Assume the representations are uniformly bounded:
there exist constants $0<c_{\min}\le c_{\max}<\infty$ such that
$c_{\min}\le\|\phi_{x,j}\|^2\le c_{\max}$ for all steps.
Then
\[
c_{\min}\,A(x)
\;\le\;
A_{\text{full}}(x)
\;\le\;
c_{\max}\,A(x).
\]
Hence, up to global multiplicative constants, $A(x)$ and $A_{\text{full}}(x)$ measure the same notion of step-wise information. Qualitative comparisons between rollouts can therefore be phrased in terms of $A(x)$. In particular, increasing $A(x)$ increases a corresponding lower bound on
$A_{\text{full}}(x)$, and $A(x)$ serves as a constant-factor proxy for
$A_{\text{full}}(x)$ under this boundedness assumption.

\subsection{MC-induced Pseudo-positive Probability and Monotonicity}
\label{app:eta-eff}

\paragraph{Posterior and closed form.}
Let $r\in[0,1]$ denote the one-shot success probability of a step, and let
$K\mid r \sim \mathrm{Binomial}(N,r)$ be the number of successful continuations.
Fix $\tau\in(0,1)$ and define $\tau$-reliability by $Z:=\mathbb{I}[r\ge \tau]$.
We place a Beta prior $r\sim\mathrm{Beta}(a,b)$ with density
$p(r)\propto r^{a-1}(1-r)^{b-1}$ for $a,b>0$.
The Binomial likelihood is
\[
p(K=k\mid r) = \binom{N}{k} r^k(1-r)^{N-k}.
\]
By Bayes' rule,
\[
p(r\mid K=k)\ \propto\ p(K=k\mid r)\,p(r)
\ \propto\ r^{a+k-1}(1-r)^{b+N-k-1},
\]
which is the density of $\mathrm{Beta}(a+k,\;b+N-k)$. Writing
$\alpha_k:=a+k$ and $\beta_k:=b+N-k$, the normalized posterior density is
\[
f_k(r):=p(r\mid K=k)
= \frac{1}{B(\alpha_k,\beta_k)}\,r^{\alpha_k-1}(1-r)^{\beta_k-1},\qquad r\in(0,1),
\]
where $B(\alpha,\beta)=\int_0^1 t^{\alpha-1}(1-t)^{\beta-1}\,dt$ is the Beta function.
Define the conditional pseudo-positive probability (effective noise level)
\[
\eta_{\mathrm{eff}}(k)
:= \Pr(Z=0\mid K=k)
= \Pr(r<\tau\mid K=k)
= \int_0^\tau f_k(r)\,dr.
\]
Let $B(\tau;\alpha,\beta):=\int_0^\tau t^{\alpha-1}(1-t)^{\beta-1}\,dt$ be the
incomplete Beta function. Then
\[
\eta_{\mathrm{eff}}(k)
= \frac{B(\tau;\alpha_k,\beta_k)}{B(\alpha_k,\beta_k)}
=: I_\tau(\alpha_k,\beta_k),
\]
where $I_\tau(\alpha,\beta)$ is the regularized incomplete beta function.

\paragraph{Monotonicity.}
\begin{lemma}[Monotonicity of $\eta_{\mathrm{eff}}$]
\label{lem:eta-eff-monotone}
For any $a,b>0$, $N\ge 1$, and $\tau\in(0,1)$, the map
$k\mapsto \eta_{\mathrm{eff}}(k)=\Pr(r<\tau\mid K=k)$ is strictly decreasing on
$\{0,1,\dots,N\}$.
Equivalently, $\Pr(r\ge \tau\mid K=k)=1-\eta_{\mathrm{eff}}(k)$ is strictly increasing in $k$.
\end{lemma}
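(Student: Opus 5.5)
We will show that for each $k\in\{0,\dots,N-1\}$ the posterior $f_{k+1}$ strictly dominates $f_k$ in the \emph{likelihood-ratio order}; strict first-order stochastic dominance, and hence $\eta_{\mathrm{eff}}(k+1)<\eta_{\mathrm{eff}}(k)$, then follows. Using the closed form from the preceding paragraph, with $\alpha_k=a+k$ and $\beta_k=b+N-k$, the ratio of consecutive posterior densities on $(0,1)$ is
\[
\frac{f_{k+1}(r)}{f_k(r)}=\frac{B(\alpha_k,\beta_k)}{B(\alpha_{k+1},\beta_{k+1})}\cdot\frac{r}{1-r}.
\]
The factor $r/(1-r)$ is continuous and strictly increasing on $(0,1)$. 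All parameters stay positive for $0\le k\le N$, since $a,b>0$, so every density is well defined.

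Given this monotone likelihood ratio, we conclude as follows. Write $h(r):=f_{k+1}(r)-f_k(r)=f_k(r)\bigl(c\,\tfrac{r}{1-r}-1\bigr)$, where $c>0$ is the constant above. Because $f_k>0$ on $(0,1)$ and the bracket is strictly increasing, there is a unique crossing point $r_0\in[0,1]$ with $h<0$ on $(0,r_0)$ and $h>0$ on $(r_0,1)$. Both densities integrate to $1$, so $\int_0^1 h=0$, and $h$ cannot have constant sign; therefore $r_0\in(0,1)$.

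We then set $H(t):=\int_0^t h(r)\,dr=\eta_{\mathrm{eff}}(k+1)-\eta_{\mathrm{eff}}(k)$ evaluated at threshold $t=\tau$. We have $H(0)=H(1)=0$, $H$ is strictly decreasing on $[0,r_0]$, and strictly increasing on $[r_0,1]$. Hence $H(t)<0$ for every $t\in(0,1)$, and in particular $H(\tau)<0$. This gives $\eta_{\mathrm{eff}}(k+1)<\eta_{\mathrm{eff}}(k)$; chaining over $k$ yields strict monotonicity on $\{0,\dots,N\}$. The equivalent statement about $\Pr(r\ge\tau\mid K=k)$ follows by complementation.

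The only delicate point is \emph{strictness}. It rests on $h$ being nonzero almost everywhere, which holds because $c\,r/(1-r)-1$ vanishes only at the single point $r_0$, and on ruling out $r_0\in\{0,1\}$, which the zero-integral argument handles. An alternative route would use the standard identity $I_\tau(\alpha,\beta)-I_\tau(\alpha+1,\beta-1)=\frac{\tau^\alpha(1-\tau)^{\beta-1}}{\alpha B(\alpha,\beta)}\cdot(\text{positive})$. However, verifying that identity is messier than the MLR argument, so we will use the MLR argument.
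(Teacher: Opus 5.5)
Your argument is correct and is essentially the paper's proof: both rest on the monotone likelihood ratio $f_{k+1}(r)/f_k(r)\propto r/(1-r)$ between consecutive Beta posteriors. The only difference is the last step. The paper compares the ratio with its value at $r=\tau$ and splits into cases according to whether that value is at most or at least $1$, whereas you pivot on the point $r_0$ where the ratio equals $1$ and read off $H(t)<0$ for every $t\in(0,1)$ from the shape of $H$, which avoids the case split. Note also that $r_0=1/(1+c)$ with your constant $c$, so it lies in $(0,1)$ automatically and your zero-integral step is redundant.
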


\begin{proof}
Let $f_k$ be the posterior density above with parameters
$\alpha_k=a+k$ and $\beta_k=b+N-k$, i.e.,
\[
f_k(r)=\frac{1}{B(\alpha_k,\beta_k)}\,r^{\alpha_k-1}(1-r)^{\beta_k-1},\qquad r\in(0,1).
\]
Fix $k\in\{0,\dots,N-1\}$ and compare consecutive posteriors.
A direct calculation gives, for $r\in(0,1)$,
\begin{align*}
\frac{f_{k+1}(r)}{f_k(r)}
&=
\frac{B(\alpha_k,\beta_k)}{B(\alpha_{k+1},\beta_{k+1})}
\cdot
r^{\alpha_{k+1}-\alpha_k}(1-r)^{\beta_{k+1}-\beta_k} \\
&=
\frac{B(\alpha_k,\beta_k)}{B(\alpha_k+1,\beta_k-1)}\cdot \frac{r}{1-r}.
\end{align*}
Since $k\le N-1$ and $b>0$, we have $\beta_k=b+N-k\ge b+1>1$, so $\beta_k-1>0$ and
the Beta-function identity $B(\alpha+1,\beta-1)=\frac{\alpha}{\beta-1}B(\alpha,\beta)$ applies.
Thus we obtain the explicit form
\[
\frac{f_{k+1}(r)}{f_k(r)} = C_k\,\frac{r}{1-r},
\qquad
C_k:=\frac{\beta_k-1}{\alpha_k}
=\frac{b+N-k-1}{a+k},
\]
which is strictly increasing in $r$ on $(0,1)$ since $r/(1-r)$ is strictly increasing.

Fix $\tau\in(0,1)$ and define $c:=\frac{f_{k+1}(\tau)}{f_k(\tau)}=C_k\frac{\tau}{1-\tau}$.
Because $\frac{f_{k+1}(r)}{f_k(r)}$ is strictly increasing in $r$, we have
\[
\frac{f_{k+1}(r)}{f_k(r)} < c \quad \text{for } r\in(0,\tau),
\qquad
\frac{f_{k+1}(r)}{f_k(r)} > c \quad \text{for } r\in(\tau,1),
\]
and $\frac{f_{k+1}(\tau)}{f_k(\tau)}=c$ at $r=\tau$.
Multiplying by $f_k(r)$ and integrating yields
\[
\int_0^\tau f_{k+1}(r)\,dr \le c\int_0^\tau f_k(r)\,dr,
\qquad
\int_\tau^1 f_{k+1}(r)\,dr \ge c\int_\tau^1 f_k(r)\,dr.
\]
Let $A_k:=\int_0^\tau f_k(r)\,dr = \Pr(r<\tau\mid K=k)$. Since $\int_0^1 f_k=1$,
the second inequality becomes $1-A_{k+1}\ge c(1-A_k)$.

If $c\le 1$, then the first inequality gives $A_{k+1}\le cA_k\le A_k$.
If $c\ge 1$, then the second inequality implies $1-A_{k+1}\ge 1-A_k$, i.e. $A_{k+1}\le A_k$.
Thus in all cases $A_{k+1}\le A_k$, proving monotonic non-increase.

Finally, since the ratio $\frac{f_{k+1}(r)}{f_k(r)}$ is \emph{strictly} increasing in $r$,
the inequalities $\frac{f_{k+1}(r)}{f_k(r)}<c$ on $(0,\tau)$ and
$\frac{f_{k+1}(r)}{f_k(r)}>c$ on $(\tau,1)$ are strict on sets of positive Lebesgue measure.
Moreover, $f_k(r)>0$ for all $r\in(0,1)$ when $\alpha_k,\beta_k>0$.
Hence the integral inequalities above are strict, yielding $A_{k+1}<A_k$ for any $\tau\in(0,1)$.
Therefore $\eta_{\mathrm{eff}}(k)=A_k$ is strictly decreasing in $k$ on $\{0,1,\dots,N\}$.
\end{proof}

\clearpage
\section{Training Dynamics}
\label{app:Training Dynamics}

Figure~\ref{fig:Training_Dynamics_5_10},\ref{fig:Training_Dynamics_15_25} and \ref{fig:Training_Dynamics_35_50} report the training dynamics of BIS-$\rho$ and Random-$\rho$ under keep ratios $\rho\in\{5,10,15,25,35,50\}\%$. 
For each $\rho$, we track both the overall micro-F1 and the per-source macro-F1 on VisualProcessBench over training steps, and include the full-data reference as horizontal dashed lines. 
Across ratios and backbones, BIS not only achieves stronger final performance, but also improves faster: it reaches high accuracy in substantially fewer steps and maintains clear advantages over random sub-sampling throughout training.
This gap is most pronounced in the low-budget regime, where Random-$\rho$ often learns slowly and remains far below the full-data reference, while BIS-$\rho$ rapidly closes the gap and frequently approaches full-data performance early in training.

\begin{center}
    \nopagebreak 
    \captionsetup{hypcap=false} 
    \includegraphics[width=0.98\textwidth]{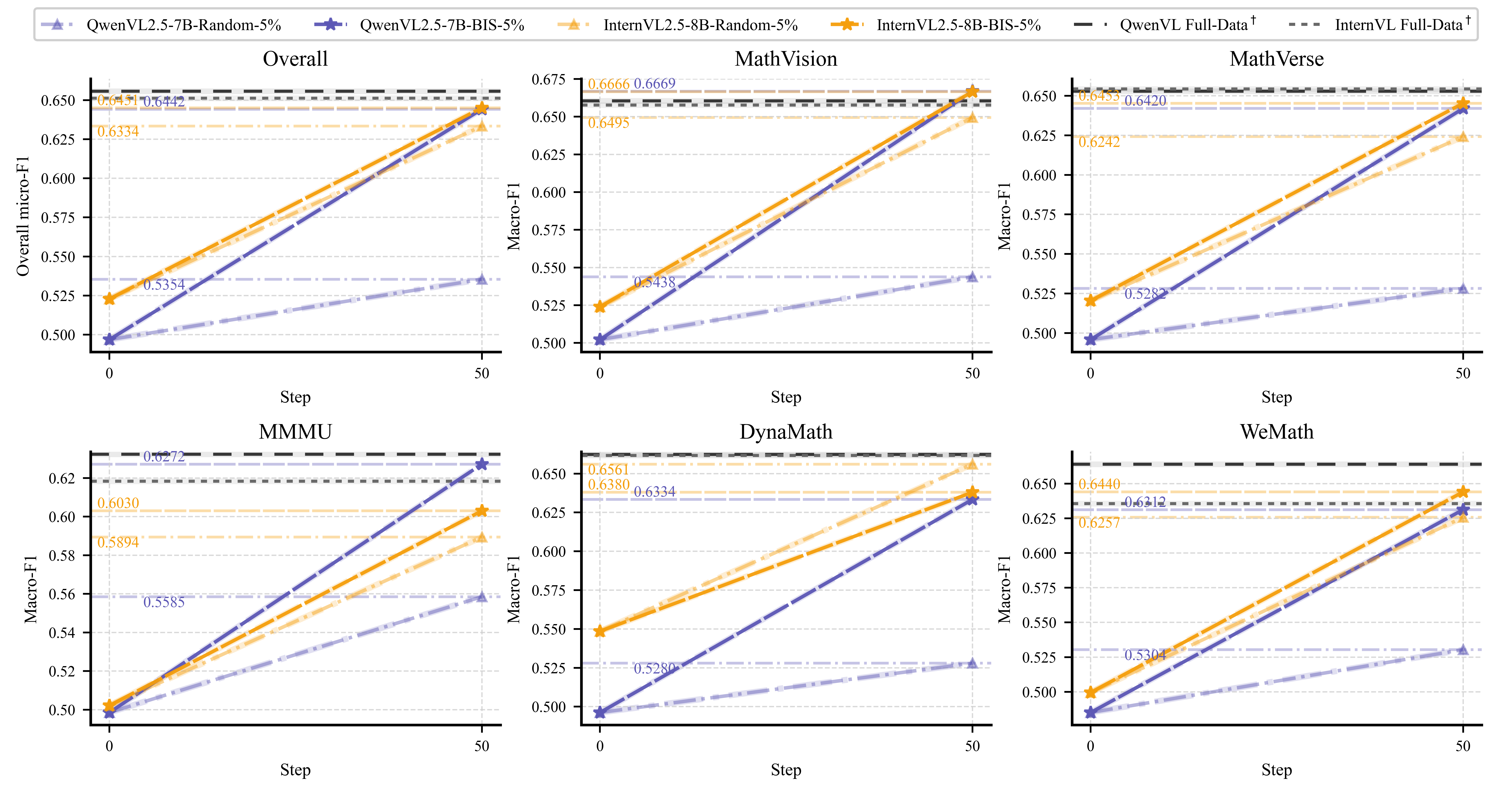}
    
    \includegraphics[width=0.98\textwidth]{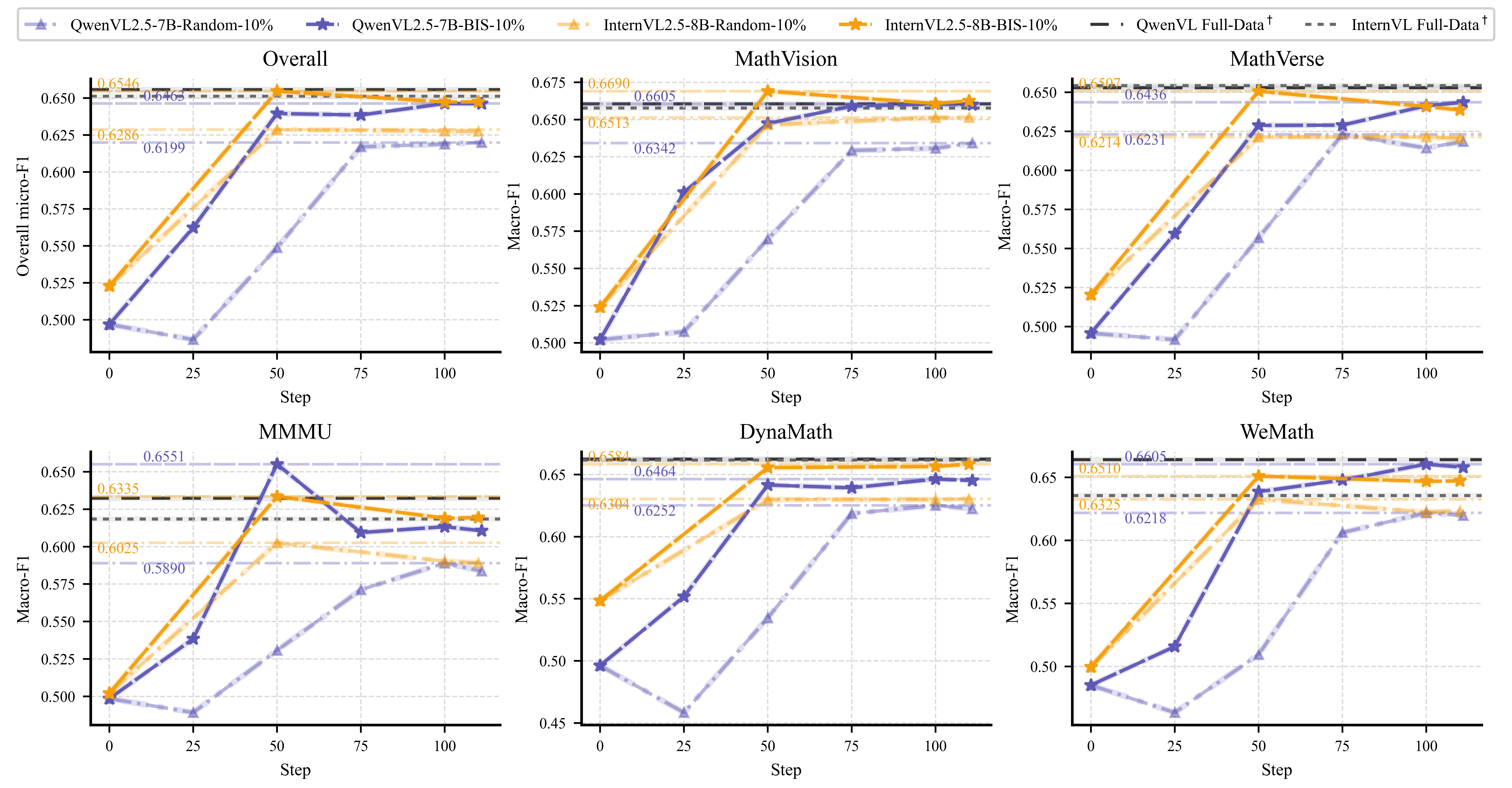}
    
    \captionof{figure}{Training dynamics on VisualProcessBench for $\rho\in\{5\%,10\%\}$, comparing BIS-$\rho$ and Random-$\rho$ for both InternVL2.5-8B and Qwen2.5-VL-7B (overall micro-F1 and per-source macro-F1).}
    \label{fig:Training_Dynamics_5_10}
\end{center}

\begin{figure*}[p]
\centering
\includegraphics[width=\textwidth]{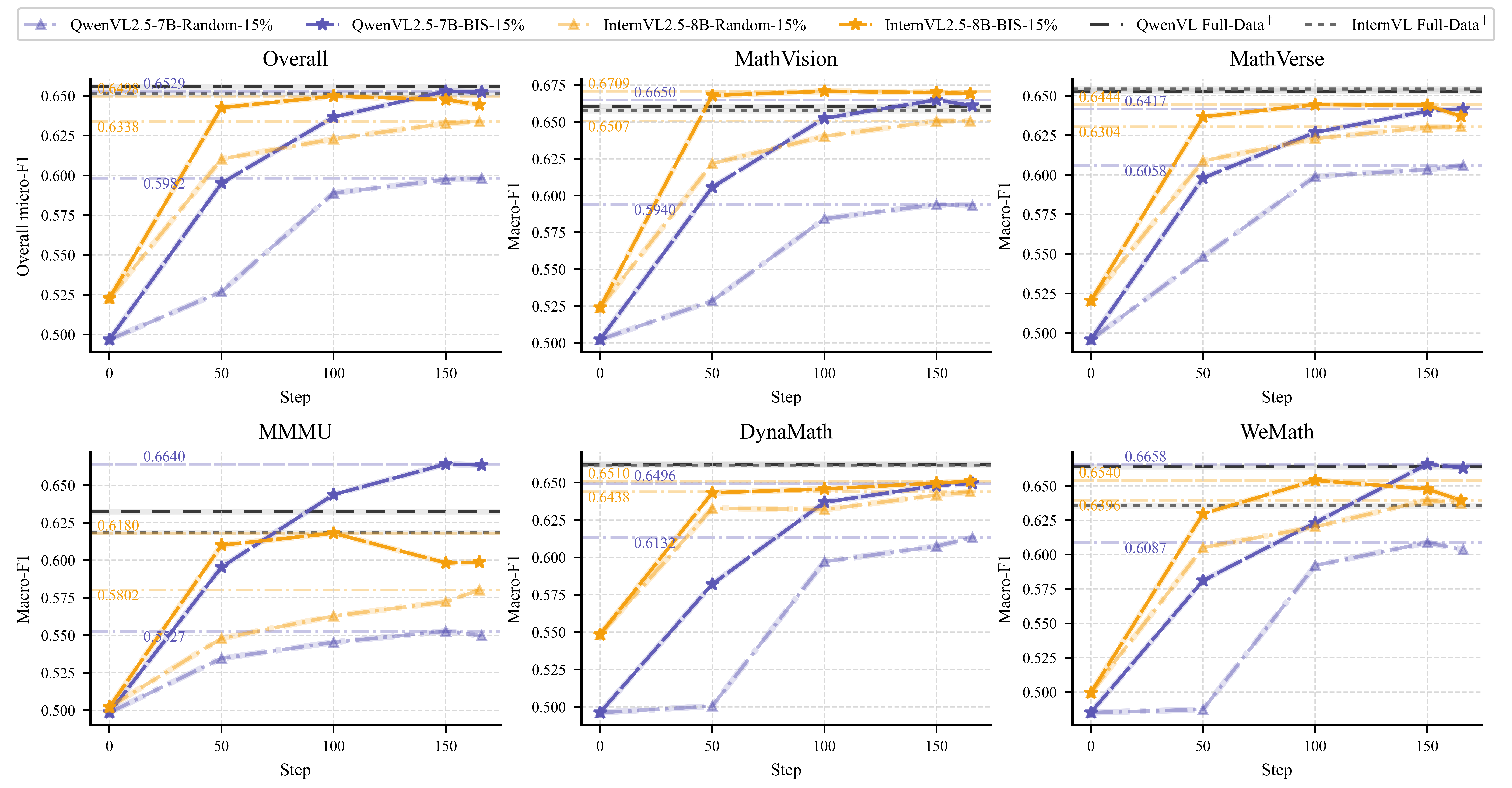}
\includegraphics[width=\textwidth]{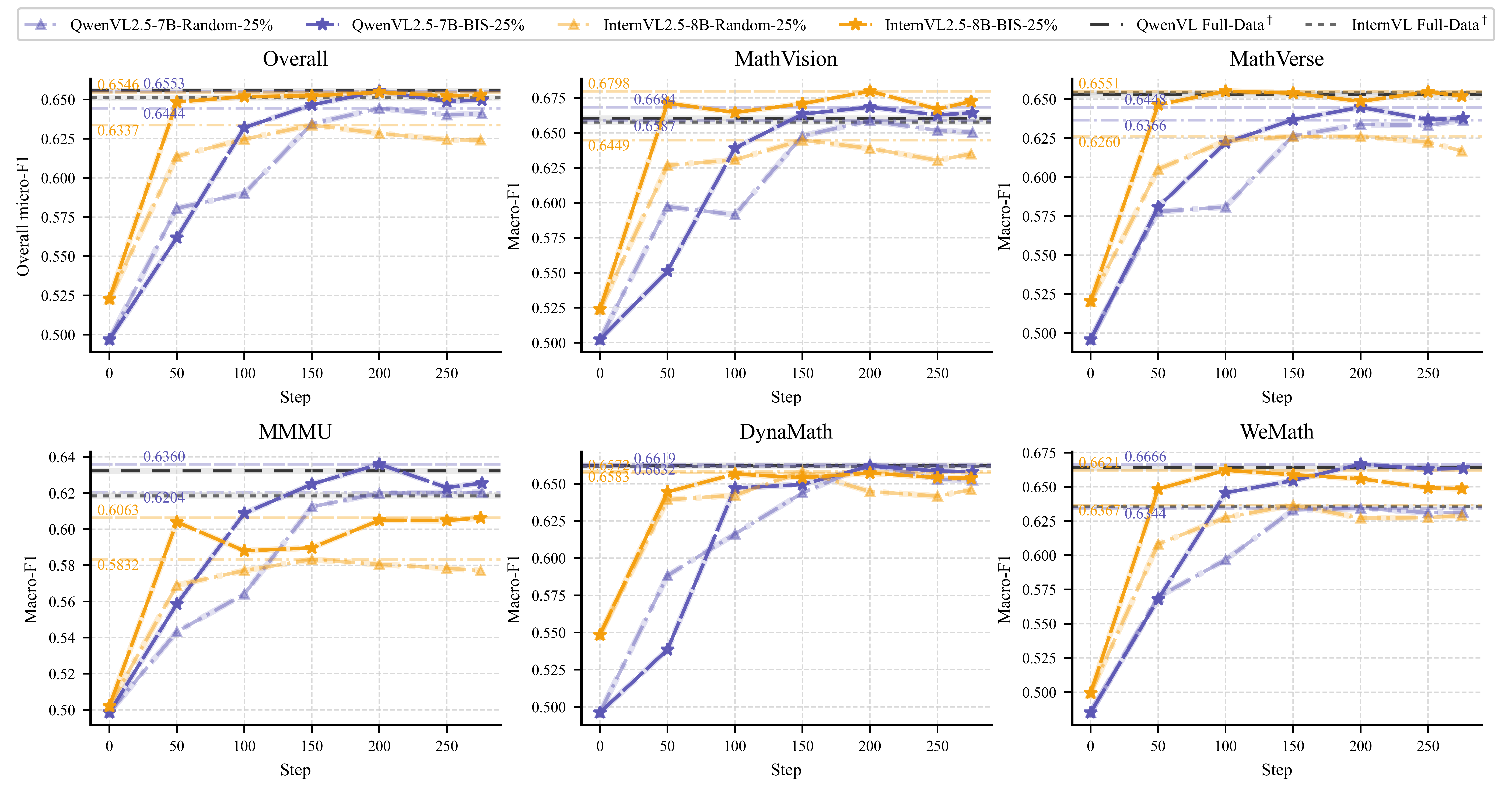}
\caption{Training dynamics on VisualProcessBench for $\rho\in\{15\%,25\%\}$, comparing BIS-$\rho$ and Random-$\rho$ for both InternVL2.5-8B and Qwen2.5-VL-7B (overall micro-F1 and per-source macro-F1).}
\label{fig:Training_Dynamics_15_25}
\end{figure*}

\begin{figure*}[p]
\centering
\includegraphics[width=\textwidth]{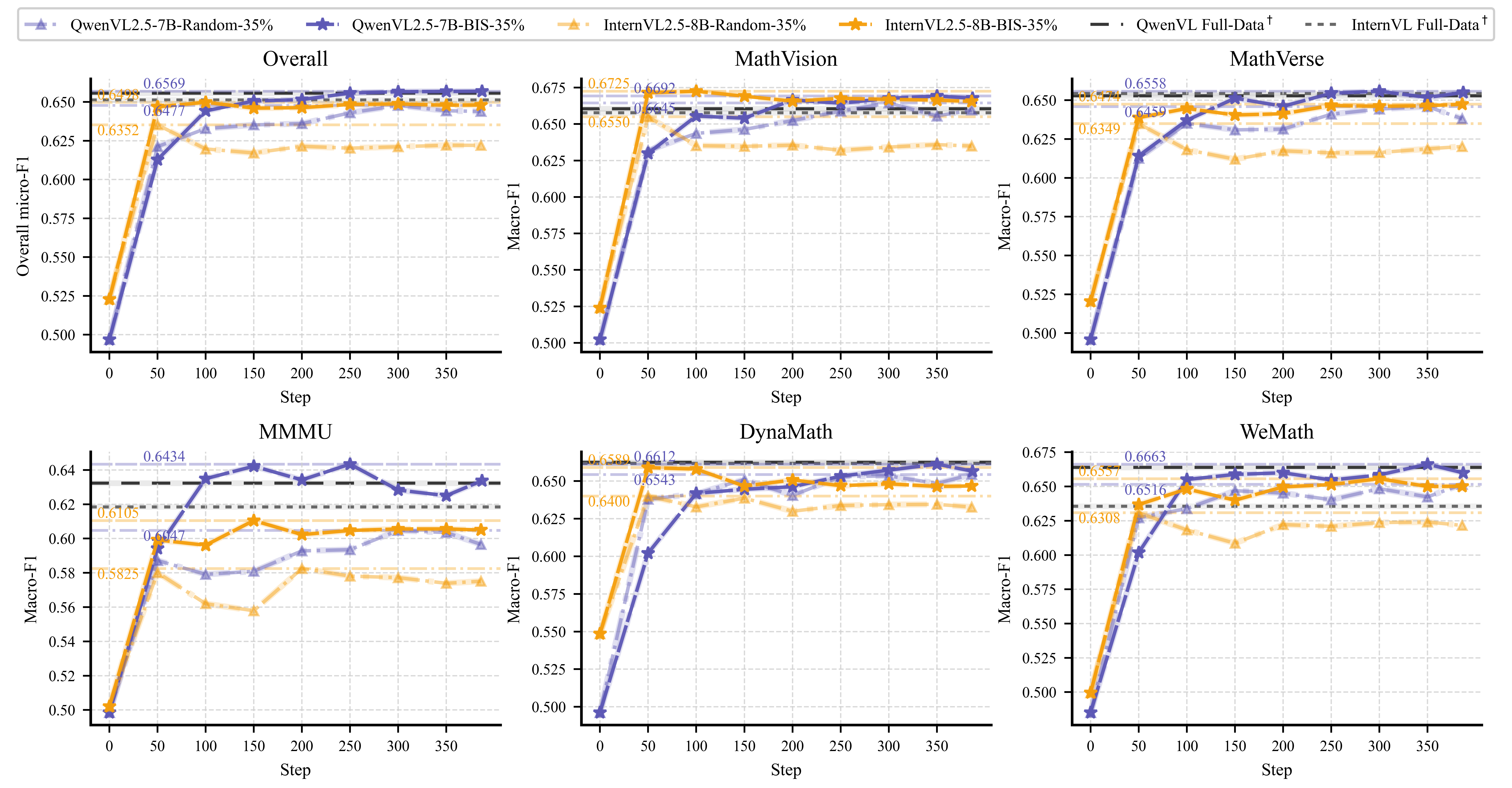}
\includegraphics[width=\textwidth]{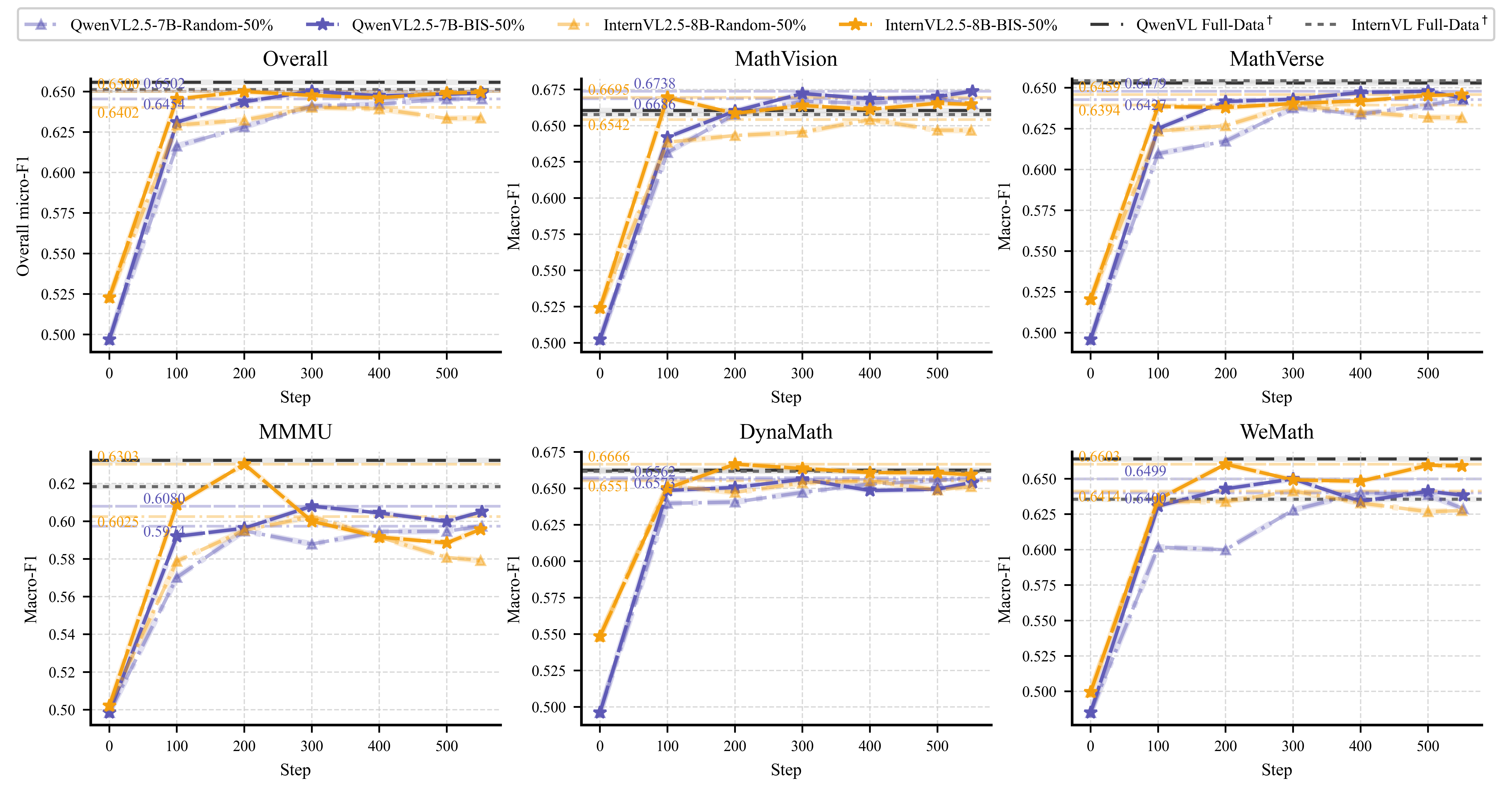}
\caption{Training dynamics on VisualProcessBench for $\rho\in\{35\%,50\%\}$, comparing BIS-$\rho$ and Random-$\rho$ for both InternVL2.5-8B and Qwen2.5-VL-7B (overall micro-F1 and per-source macro-F1).}
\label{fig:Training_Dynamics_35_50}
\end{figure*}

\clearpage
\section{Per-source BIS Histograms}
\label{app:bis-histograms-analysis}

\paragraph{Interpreting low-BIS sources via step-level MC difficulty.}
Figure~\ref{fig:bis-per-source} reports, for each of the 38 source subsets, the BIS score distribution over all rollouts and the per-source top-25\% rollouts selected by BIS. The per-source shapes vary substantially, and interpreting them requires care: because
$\mathrm{BIS}(x)=(p_{\text{pos}}(x)(1-p_{\text{pos}}(x))+\alpha)\,R(x)$,rollouts that are nearly pure (almost all-positive or almost all-negative) receive small mixture weight $p_{\text{pos}}(1-p_{\text{pos}})$ and thus
tend to concentrate at the lower end of the BIS range, even though these two purity regimes correspond to opposite difficulty profiles.

To disambiguate them, we pair the histograms with the step-level average MC score $\bar s$ for each source (Table~\ref{tab:step-mc-by-source-readable}),
where $\bar s$ averages MC scores over all annotated steps and serves as a proxy for how often the MC solver can reach a correct completion from intermediate states. Under this lens, sources whose full-pool histogram is concentrated near low BIS scores split into two qualitatively
different categories:

\textbf{High-$\bar s$ sources (easy for the current MC solver).}
When $\bar s$ is high, low BIS scores mainly reflect trajectories that are dominated by positive steps ($p_{\text{pos}}\approx 1$), hence little label mixture. This is exemplified by \emph{ScienceQA} ($\bar s=0.9723$) and \emph{NLVR2} ($\bar s=0.9672$), as well as other
high-$\bar s$ sources such as \emph{DVQA} ($0.9453$). In these sources, naturally mixed trajectories are rarer,
so the available high-BIS tail is intrinsically thinner.

\textbf{Low-$\bar s$ sources (hard for the current MC solver).}
When $\bar s$ is low, low BIS scores instead reflect trajectories with few positive steps under MC sampling ($p_{\text{pos}}\approx 0$). Representative examples include \emph{GeoQA+ (Open)} ($\bar s=0.5720$)
and \emph{GeomVerse} ($0.6819$). Here the bottleneck is the scarcity of reliable positive anchors: even
when errors are plentiful, many steps rarely lead to successful continuations, so the pool contains fewer rollouts that simultaneously provide informative negatives and trustworthy positives.

Between these extremes, \textbf{medium-$\bar s$ sources} tend to provide the richest substrate for BIS, because they naturally generate rollouts with non-trivial label mixture while still retaining a meaningful fraction of reasonably reliable positives. For example, \emph{FigureQA}
($\bar s=0.7615$) and \emph{VQAv2} ($\bar s=0.8253$) exhibit visibly heavier mass at moderate-to-high BIS scores, indicating a larger supply of mixed-but-reliable trajectories that BIS can exploit.

Across all sources, BIS selection produces a consistent within-source reallocation: the orange histograms shift weight toward higher scores relative to the blue ones, meaning that under the same per-source budget
(top-25\% per file) BIS preferentially keeps the most mixed and reliable rollouts available in that source. This provides a distributional explanation for the main experimental trend in Table~\ref{tab:subset-best}: compared with Random-25\%, which spends budget proportionally to the original pool, and Low-MC-25\%, which can over-focus on unreliable pseudo-positives, BIS-25\% systematically extracts the highest-information tail when such a tail exists
(e.g., medium-$\bar s$ sources), while avoiding wasting budget on uninformative near-pure trajectories in very easy sources or on noise-dominated trajectories in very hard sources. As a result, BIS allocates the fixed update budget to a subset with higher effective signal per update, aligning with its consistent gains over Random / Mixed / Low-MC under the same rollout and optimization budgets.

\paragraph{Consistent selection patterns in mixture and reliability.} 
Figure~\ref{fig:bis-per-source-sepreate-1} and \ref{fig:bis-per-source-sepreate-2} further reports, for each source, the distributions of the reliability term $R(x)$ and the mixture term $p_{\text{pos}}(x)(1-p_{\text{pos}}(x))$ for all rollouts and the selected top-25\% subset, together with the coverage curve (Selected / All).
Across sources, BIS follows two stable behaviors: it strongly suppresses low-$R(x)$ rollouts, and it consistently favors higher mixture by shifting coverage toward moderate-to-high $p_{\text{pos}}(1-p_{\text{pos}})$.
Importantly, the preference over $R(x)$ is not monotonic: coverage often peaks at moderate $R(x)$ and then saturates or drops as $R(x)\!\rightarrow\!1$, since near-pure trajectories can have large $R(x)$ but still receive small mixture weight and thus low BIS scores.
The plots also highlight substantial source-level heterogeneity in the available mixed-and-reliable tail, complementing the step-level MC difficulty analysis in Table~\ref{tab:step-mc-by-source-readable}.

\vspace{+30mm}

\begin{figure*}[t]
\centering
\vspace{-7mm}
\includegraphics[width=\textwidth]{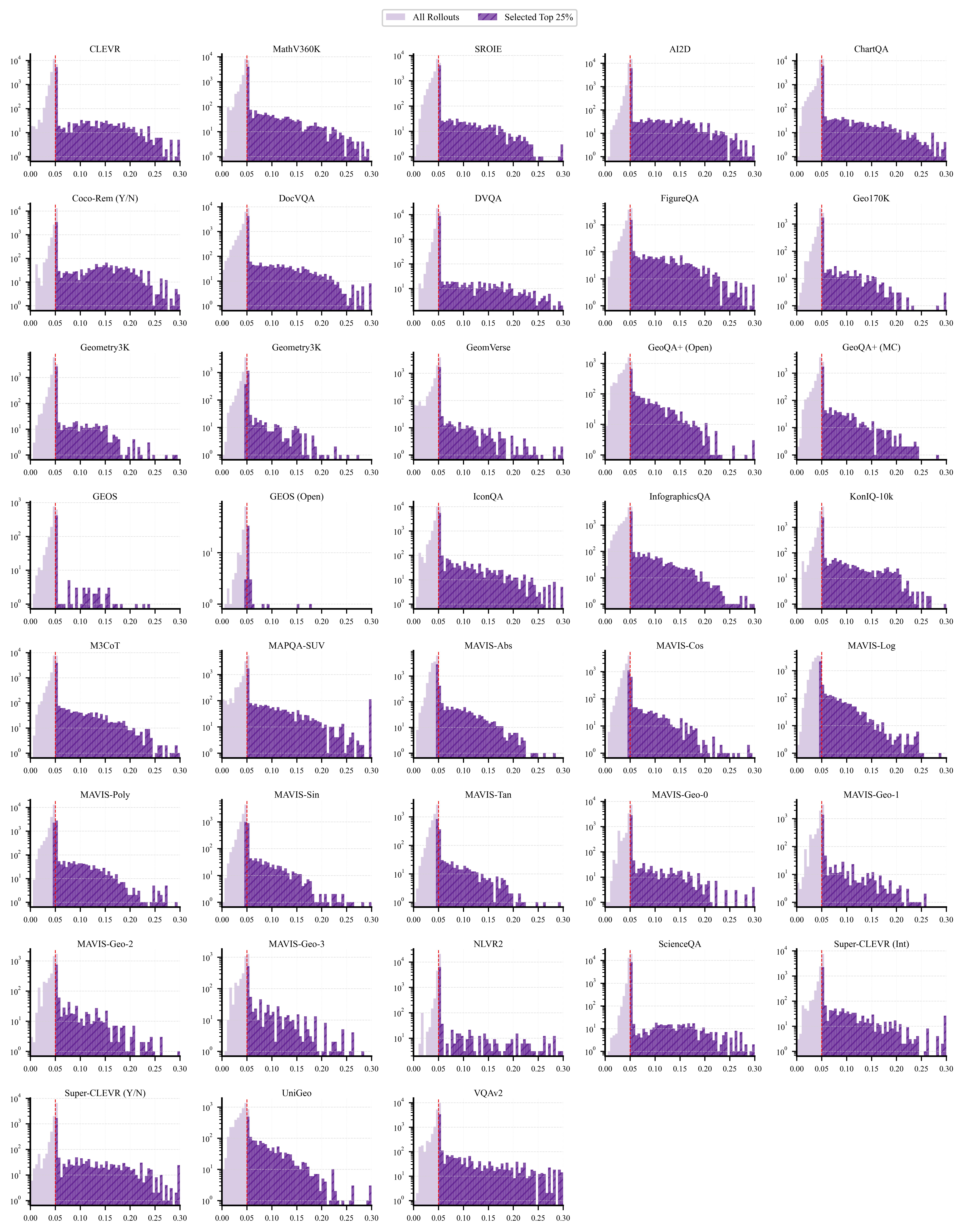}
\vspace{-7mm}
\caption{
Per-source BIS score distributions on VisualPRM400K-v1.1.
Each panel corresponds to one source dataset; light purple bars show all
rollouts from that source and dark purple bars show the per-file top-25\%
rollouts selected by BIS.
Across sources, the full-pool distributions differ markedly---with some
datasets exhibiting sharp spikes near the baseline $\alpha=0.05$ and
others showing heavier high-score tails---but BIS consistently suppresses low-score mass and enriches higher-score regions within each source.
}
\label{fig:bis-per-source}
\vspace{-5mm}
\end{figure*}

\begingroup 
\setlength{\tabcolsep}{18pt}       
\renewcommand{\arraystretch}{1.25} 
\arrayrulecolor{black!45}
\setlength{\arrayrulewidth}{0.4pt}

\begin{longtable}{lrrrr} 
\caption{Per-source step-level MC score statistics on VisualPRM400K-v1.1.
Avg.\ MC is the mean MC score over all annotated steps; Avg./16 equals
$16\times$Avg.\ MC (expected successes out of $N\!=\!16$ continuations).
Step/Rollout counts follow the same validity filtering as in our histogram analysis.}
\label{tab:step-mc-by-source-readable}\\

\toprule
\rowcolor{tblHead}\textbf{Subset} & \textbf{Avg.\ MC} & \textbf{Avg./16} & \textbf{\# Steps} & \textbf{\# Rollouts} \\
\midrule
\endfirsthead

\toprule
\rowcolor{tblHead}\textbf{Subset} & \textbf{Avg.\ MC} & \textbf{Avg./16} & \textbf{\# Steps} & \textbf{\# Rollouts} \\
\midrule
\endhead

\midrule
\multicolumn{5}{r}{\small Continued on next page} \\
\endfoot

\bottomrule
\endlastfoot

\rowcolor{tblOdd}AI2D~\cite{kembhavi2016diagram} & 0.9409 & 15.0539 & 146,419 & 28,147 \\
ChartQA~\cite{masry2022chartqa} & 0.9078 & 14.5245 & 136,655 & 28,049 \\
\rowcolor{tblOdd}CLEVR~\cite{johnson2017clevr} & 0.9106 & 14.5689 & 109,317 & 24,004 \\
COCO-ReM (Y/N)~\cite{singh2024benchmarking} & 0.8933 & 14.2922 & 55,628 & 18,450 \\
\rowcolor{tblOdd}DocVQA~\cite{mathew2021docvqa} & 0.8663 & 13.8604 & 85,589 & 21,049 \\
DVQA~\cite{kafle2018dvqa} & 0.9453 & 15.1255 & 177,352 & 35,367 \\
\rowcolor{tblOdd}FigureQA~\cite{kahou2017figureqa} & 0.7615 & 12.1843 & 57,601 & 12,345 \\
Geo170K~\cite{gao2025gllava} & 0.9023 & 14.4373 & 61,314 & 8,205 \\
\rowcolor{tblOdd}Geometry3K~\cite{lu-etal-2021-inter} & 0.9243 & 14.7880 & 94,624 & 11,756 \\
Geometry3K (Open)~\cite{lu-etal-2021-inter} & 0.8707 & 13.9318 & 58,569 & 7,090 \\
\rowcolor{tblOdd}GeomVerse~\cite{kazemi2024geomverse} & 0.6819 & 10.9106 & 38,156 & 7,765 \\
GeoQA+ (Open)~\cite{cao2022augmented} & 0.5720 & 9.1514 & 45,287 & 7,025 \\
\rowcolor{tblOdd}GeoQA+ (MC)~\cite{cao2022augmented} & 0.8486 & 13.5772 & 67,715 & 9,216 \\
GEOS~\cite{seo2015solving}& 0.9061 & 14.4970 & 16,847 & 1,827 \\
\rowcolor{tblOdd}GEOS (Open)~\cite{seo2015solving} & 0.8217 & 13.1478 & 1,563 & 178 \\
IconQA~\cite{lu2021iconqa} & 0.8908 & 14.2536 & 121,873 & 25,811 \\
\rowcolor{tblOdd}InfographicVQA~\cite{Mathew_2022_WACV} & 0.7645 & 12.2323 & 78,408 & 17,996 \\
KonIQ-10k~\cite{hosu2020koniq} & 0.8822 & 14.1148 & 72,430 & 13,337 \\
\rowcolor{tblOdd}M3CoT~\cite{chen2024m3cot} & 0.8811 & 14.0971 & 115,649 & 19,476 \\
MAPQA-SUV~\cite{chang2022mapqa} & 0.7260 & 11.6155 & 48,368 & 12,366 \\
\rowcolor{tblOdd}MathV360K~\cite{shi2024math} & 0.8759 & 14.0142 & 107,503 & 20,132 \\
MAVIS-Abs~\cite{zhang2025mavis} & 0.8104 & 12.9670 & 166,756 & 16,530 \\
\rowcolor{tblOdd}MAVIS-Cos~\cite{zhang2025mavis} & 0.8057 & 12.8916 & 86,393 & 9,174 \\
MAVIS-Log~\cite{zhang2025mavis} & 0.7357 & 11.7719 & 162,713 & 15,422 \\
\rowcolor{tblOdd}MAVIS-Poly~\cite{zhang2025mavis} & 0.8836 & 14.1375 & 206,293 & 23,461 \\
MAVIS-Sin~\cite{zhang2025mavis} & 0.8231 & 13.1701 & 88,130 & 9,675 \\
\rowcolor{tblOdd}MAVIS-Tan~\cite{zhang2025mavis} & 0.8027 & 12.8428 & 60,391 & 6,180 \\
MAVIS-Geo-0~\cite{zhang2025mavis} & 0.9280 & 14.8473 & 78,764 & 13,149 \\
\rowcolor{tblOdd}MAVIS-Geo-1~\cite{zhang2025mavis} & 0.8765 & 14.0243 & 37,760 & 6,728 \\
MAVIS-Geo-2~\cite{zhang2025mavis} & 0.7990 & 12.7848 & 25,851 & 4,801 \\
\rowcolor{tblOdd}MAVIS-Geo-3~\cite{zhang2025mavis} & 0.7719 & 12.3508 & 19,080 & 3,706 \\
NLVR2~\cite{suhr2019corpus} & 0.9672 & 15.4753 & 96,072 & 26,439 \\
\rowcolor{tblOdd}ScienceQA~\cite{lu2022learn} & 0.9723 & 15.5574 & 186,894 & 34,523 \\
SROIE~\cite{huang2019icdar2019} & 0.8708 & 13.9325 & 83,389 & 18,074 \\
\rowcolor{tblOdd}Super-CLEVR (Int)~\cite{li2023super} & 0.7325 & 11.7205 & 37,279 & 12,070 \\
Super-CLEVR (Y/N)~\cite{li2023super} & 0.7713 & 12.3404 & 33,171 & 10,440 \\
\rowcolor{tblOdd}UniGeo~\cite{chen2022unigeo} & 0.6015 & 9.6239 & 41,523 & 6,299 \\
VQAv2~\cite{goyal2017making} & 0.8253 & 13.2043 & 67,166 & 18,887 \\
\end{longtable}
\endgroup

\begin{figure*}[t]
\centering
\vspace{-7mm}
\includegraphics[width=\textwidth]{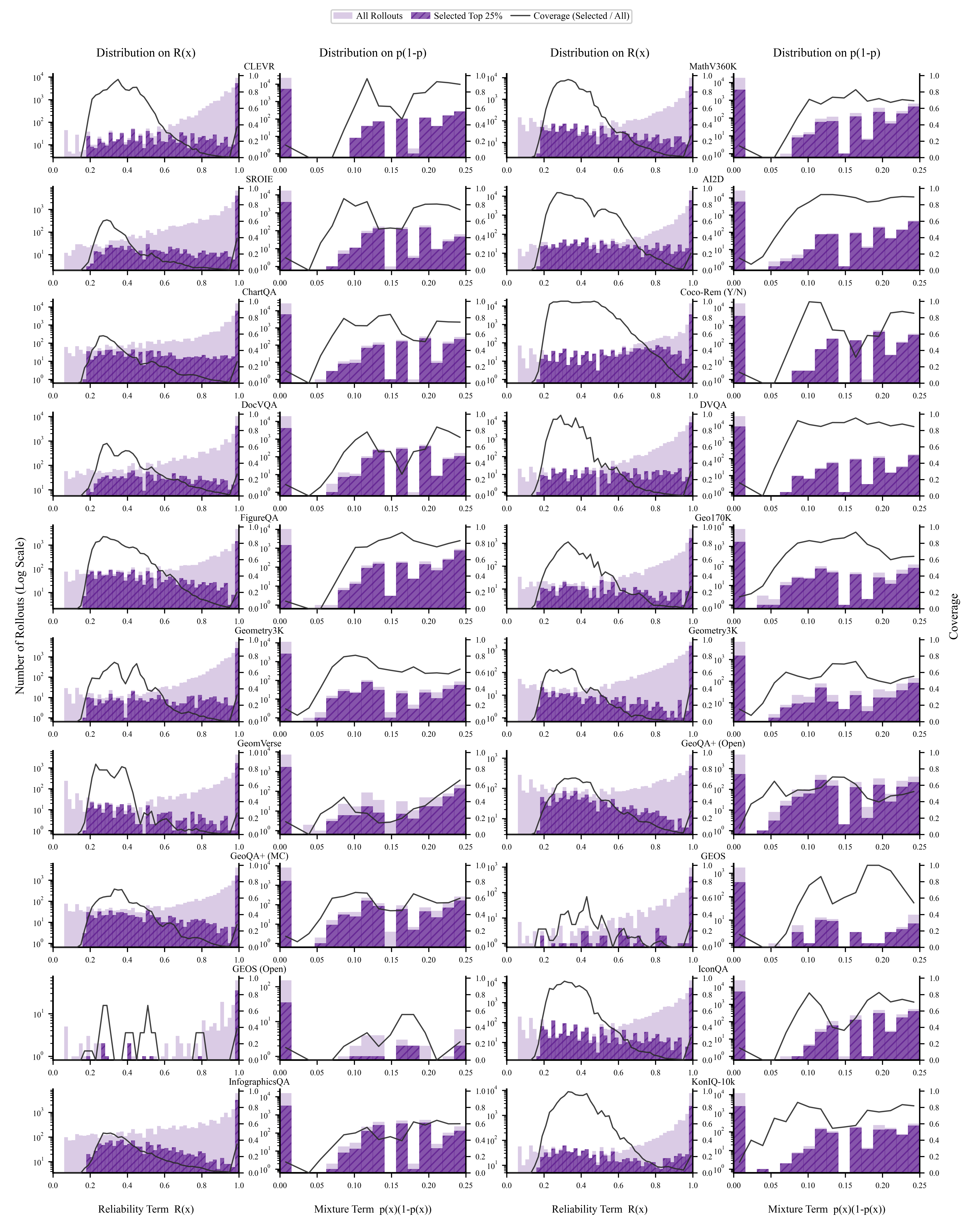}
\vspace{-7mm}
\caption{Per-source component histograms on VisualPRM400K-v1.1 for the reliability term $R(x)$ and mixture term $p_{\text{pos}}(x)(1-p_{\text{pos}}(x))$, comparing all rollouts and BIS top-25\%. Black curve: coverage (Selected / All).}
\label{fig:bis-per-source-sepreate-1}
\vspace{-5mm}
\end{figure*}

\begin{figure*}[t]
\centering
\vspace{-7mm}
\includegraphics[width=\textwidth]{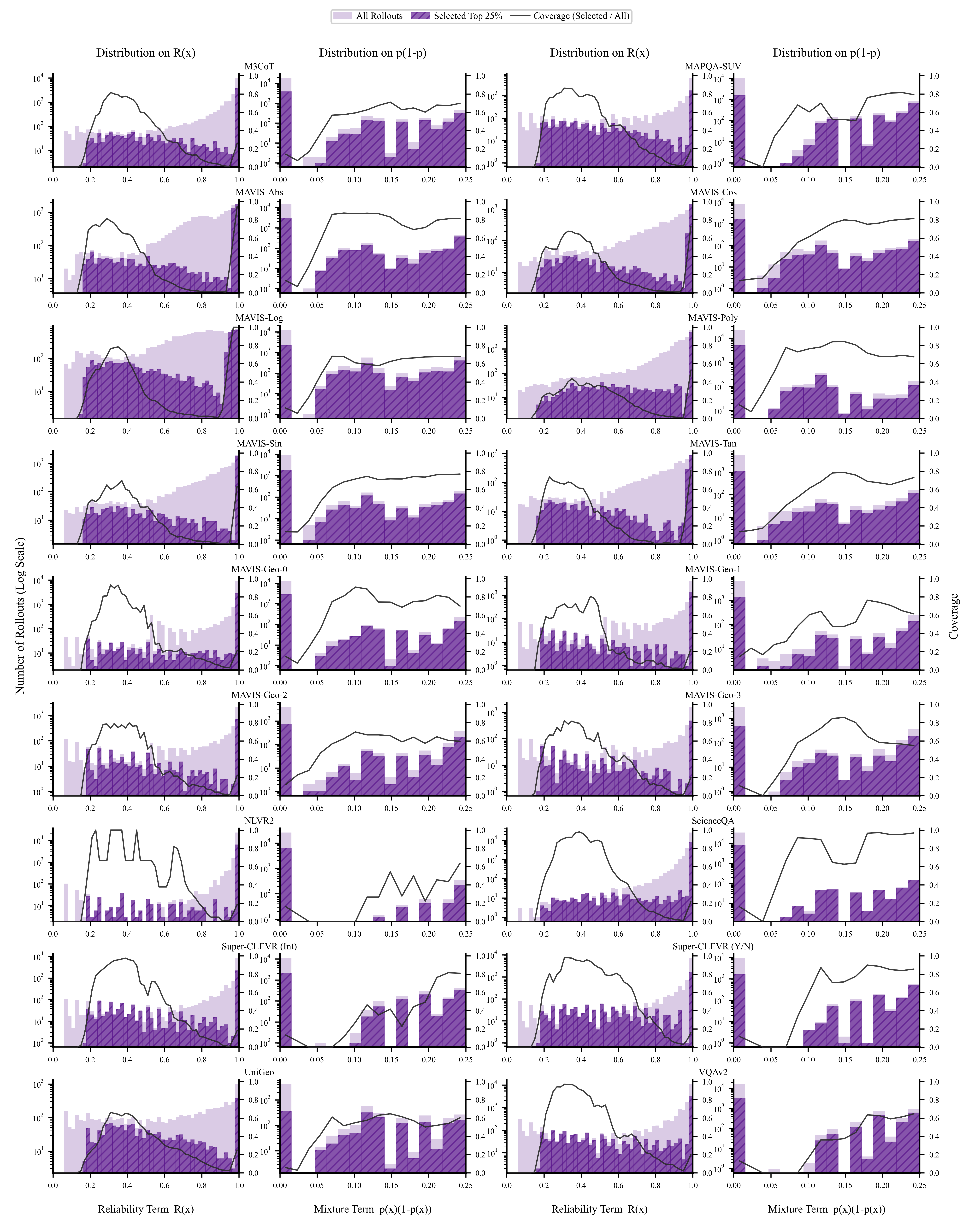}
\vspace{-7mm}
\caption{Per-source component histograms on VisualPRM400K-v1.1 for the reliability term $R(x)$ and mixture term $p_{\text{pos}}(x)(1-p_{\text{pos}}(x))$, comparing all rollouts and BIS top-25\%. Black curve: coverage (Selected / All).}
\label{fig:bis-per-source-sepreate-2}
\vspace{-5mm}
\end{figure*}

\clearpage
\begin{figure}[t]
\centering
\begin{casestudybox}{Case Study 1}

\begin{tcolorbox}[
  enhanced,
  colback=white,
  colframe=black!12,
  boxrule=0.5pt,
  arc=1.6mm,
  left=3mm,right=3mm,top=3mm,bottom=3mm
]

\newlength{\ImgW}\setlength{\ImgW}{0.28\linewidth}
\newlength{\GapW}\setlength{\GapW}{0.03\linewidth}
\newlength{\TxtW}\setlength{\TxtW}{\dimexpr\linewidth-\ImgW-\GapW\relax}

\newsavebox{\QBox}
\sbox{\QBox}{%
  \parbox[t]{\TxtW}{%
    \setlength{\parindent}{0pt}%
    \setlength{\parskip}{4pt}%
    \linespread{1.02}\selectfont
    \sloppy
    \setlength{\emergencystretch}{2em}

    \textbf{\\
    \\
    Question.}\\
    {\small
    The graph of the function $f(x)=a*x+b$ is shown, where the condition that $a$ is non-zero is satisfied.
    The function $f(x)$ takes the values $-10$ and $-6$ at $x=-5$ and $x=-3$, respectively.
    Given the graph and the aforementioned conditions, identify the zeros of the function.
    }

    \vspace{2pt}
    \textbf{Ground truth.} {\small $0$}
  }%
}
\newlength{\QH}\setlength{\QH}{\dimexpr\ht\QBox+\dp\QBox\relax}

\noindent
\begin{tabular}{@{}p{\ImgW}@{\hspace{\GapW}}p{\TxtW}@{}}
\adjustbox{valign=t,center,clip}{%
  \makebox[\linewidth][c]{%
    \scalebox{1.2}{%
      \includegraphics[height=\QH,keepaspectratio]{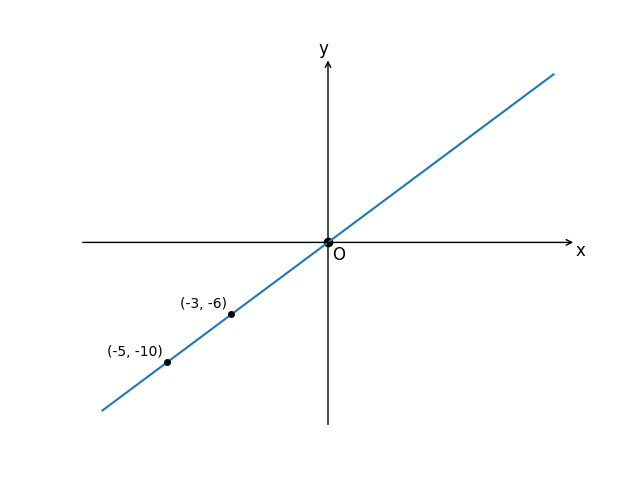}%
    }%
  }%
}
&
\adjustbox{valign=t}{\usebox{\QBox}}
\end{tabular}

\vspace{10pt}

\noindent
\textbf{Step scores} \textit{(hard label $y_j = \mathbb{I}[score_j>0]$).}

{\setlength{\tabcolsep}{6pt}
\renewcommand{\arraystretch}{1.15}
\footnotesize
\renewcommand{\baselinestretch}{1.15}\selectfont
\begin{tabularx}{\linewidth}{@{}p{1.70cm}Y >{\raggedleft\arraybackslash}p{1.15cm} >{\centering\arraybackslash}p{0.55cm}@{}}
\toprule
Step & Content & Score & $y_j$\\
\midrule

\StepRowPretty{0}{To determine the zeros of the function \( f(x) = ax + b \), we need to follow these steps:}{0.875}

\StepRowPretty{1}{1. **Identify the equation of the line:**
   \\- Given that \( f(x) = -10 \) at \( x = -5 \) and \( f(x) = -6 \) at \( x = -3 \), we can use these points to find the slope \( a \) and the y-intercept \( b \).  
   \\- Substitute \( x = -5 \), \( y = -10 \) into the equation:
     \(\begin{aligned}[t]
     -10 = a(-5) + b \quad \text{(1)}
     \end{aligned}\)
   \\- Substitute \( x = -3 \), \( y = -6 \) into the equation:
     \(\begin{aligned}[t]
     -6 = a(-3) + b \quad \text{(2)}
     \end{aligned}\)}{0.875}

\StepRowPretty{2}{2. **Solve the system of equations:**
   \\- From equation (1):
     \(\begin{aligned}[t]
     -10 = -5a + b
     \end{aligned}\)
   \\- From equation (2):
     \(\begin{aligned}[t]
     -6 = -3a + b
     \end{aligned}\)}{0.9375}

\StepRowPretty{3}{3. **Simplify the equations:**
   \\- Equation (1) becomes: 
     \(\begin{aligned}[t]
     b = -10 + 5a
     \end{aligned}\)
   \\- Equation (2) becomes: 
     \(\begin{aligned}[t]
     -6 = -3a + b
     \end{aligned}\)}{0.875}

\StepRowPretty{4}{4. **Substitute \( b \) values from equation (1) into equation (2):**
   \\- Substitute \( b = -10 + 5a \) into \( -6 = -3a + b \):
     \(\begin{aligned}[t]
     -6 = -3a + (-10 + 5a)
     \end{aligned}\)
   \\- Simplify:
     \(\begin{aligned}[t]
     -6 = -3a - 10 + 5a
     \end{aligned}\)
   \\- Combine like terms:
     \(\begin{aligned}[t]
     -6 = 2a - 10
     \end{aligned}\)
   \\- Solve for \( a \):
     \(\begin{aligned}[t]
     6 = 2a \implies a = 3
     \end{aligned}\)}{0.0}

\StepRowPretty{5}{5. **Find \( b \) using the value of \( a \):**
   \\- Substitute \( a = 3 \) back into \( b = -10 + 5a \):
     \(\begin{aligned}[t]
     b = -10 + 5(3) = -10 + 15 = 5
     \end{aligned}\)}{0.0}

\StepRowPretty{6}{6. **Write the final equation of the function:**
   \\- The equation of the function is \( f(x) = 3x + 5 \).}{0.0}

\StepRowPretty{7}{7. **Setting \( f(x) = 0 \) to find the zeros:**
   \\- \( 0 = 3x + 5 \)
   \\- Solve for \( x \):
     \(\begin{aligned}[t]
     3x = -5 \implies x = -\frac{5}{3}
     \end{aligned}\)}{0.0}

\StepRowPretty{8}{Thus, the zeros of the function \( f(x) = 3x + 5 \) are \( x = -\frac{5}{3} \).}{0.0}

\StepRowPretty{9}{Final answer: \(-\frac{5}{3}\)}{0.0}

\bottomrule
\end{tabularx}
}

\vspace{8pt}

\noindent
\textbf{BIS computation} \textit{($\alpha=0.05$).}\\
{\footnotesize
Valid step scores are the 10 entries above. Hard labels: $y_j=1$ iff $\text{score}_j>0$.
}

{\footnotesize
\setlength{\jot}{4pt}
\[
\begin{aligned}
n &= 10,\qquad n_{\text{pos}}=4,\qquad p_{\text{pos}}=\frac{4}{10}=0.4,\\
R &= \frac{0.875+0.875+0.9375+0.875}{4}=0.890625,\\
\mathrm{BIS}(x) &= \bigl(p_{\text{pos}}(1-p_{\text{pos}})+0.05\bigr)\cdot R\\
&= (0.4\cdot 0.6 + 0.05)\cdot 0.890625 = 0.25828125.
\end{aligned}
\]
}

\end{tcolorbox}
\end{casestudybox} 
\vspace{-1mm}
\caption{Case study (MAVIS-Function~\cite{zhang2025mavis}): step-level MC scores, hard labels, and BIS computation for one rollout.}

\label{fig:case-111}
\end{figure}

\begin{figure}[t]
\centering
\begin{casestudybox}{Case Study 2}

\begin{tcolorbox}[
  enhanced,
  colback=white,
  colframe=black!12,
  boxrule=0.5pt,
  arc=1.6mm,
  left=3mm,right=3mm,top=3mm,bottom=3mm
]

\newlength{\ImgWII}\setlength{\ImgWII}{0.28\linewidth}
\newlength{\GapWII}\setlength{\GapWII}{0.03\linewidth}
\newlength{\TxtWII}\setlength{\TxtWII}{\dimexpr\linewidth-\ImgWII-\GapWII\relax}

\newsavebox{\QBoxII}
\sbox{\QBoxII}{%
  \parbox[t]{\TxtWII}{%
    \setlength{\parindent}{0pt}%
    \setlength{\parskip}{4pt}%
    \linespread{1.02}\selectfont
    \sloppy
    \setlength{\emergencystretch}{2em}

    \textbf{\\Question.}\\
    {\small
    Which part of the diagram best depicts mineralisation?\\
    -- D\\
    -- F\\
    -- M\\
    -- A\\
    Please answer the question based on the options mentioned before.
    }

    \vspace{2pt}
    \textbf{Ground truth.} {\small M}
  }%
}
\newlength{\QHII}\setlength{\QHII}{\dimexpr\ht\QBoxII+\dp\QBoxII\relax}

\noindent
\begin{tabular}{@{}p{\ImgWII}@{\hspace{\GapWII}}p{\TxtWII}@{}}
\adjustbox{valign=t,center,clip}{%
  \parbox[t][\QHII][c]{\linewidth}{%
    \centering
    \scalebox{1}{%
      \includegraphics[height=\QHII,keepaspectratio]{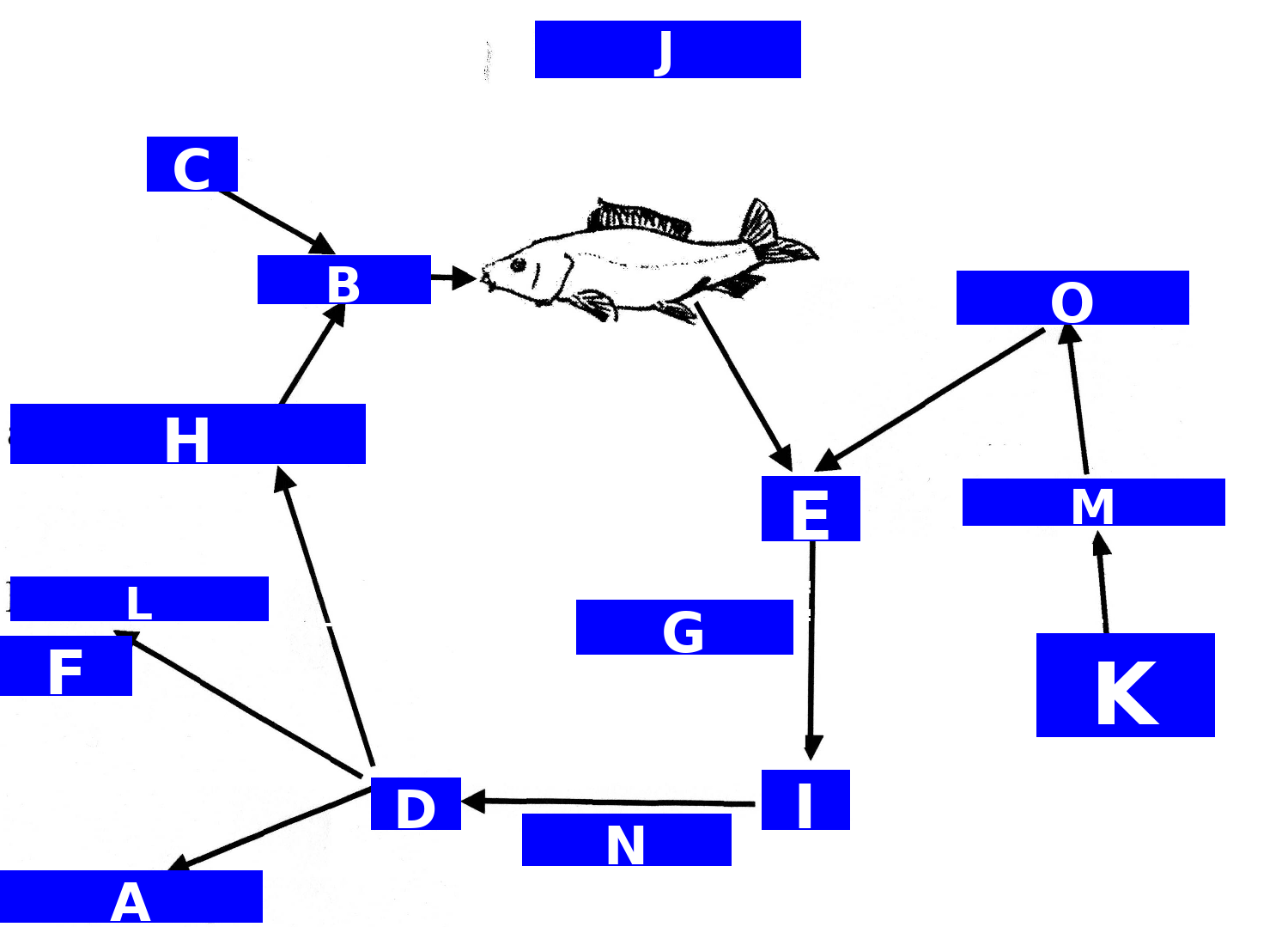}%
    }%
  }%
}
&
\adjustbox{valign=t}{\usebox{\QBoxII}}
\end{tabular}

\vspace{10pt}

\noindent
\textbf{Step scores} \textit{(hard label $y_j = \mathbb{I}[score_j>0]$).}

{\setlength{\tabcolsep}{6pt}
\renewcommand{\arraystretch}{1.15}
\footnotesize
\renewcommand{\baselinestretch}{1.3}\selectfont
\begin{tabularx}{\linewidth}{@{}p{1.70cm}Y >{\raggedleft\arraybackslash}p{1.15cm} >{\centering\arraybackslash}p{0.55cm}@{}}
\toprule
Step & Content & Score & $y_j$\\
\midrule

\StepRowPretty{0}{To determine which part of the diagram depicts mineralization, let's break down the relevant concepts.}{0.5625}

\StepRowPretty{1}{1. **Mineralization**: It refers to the process by which organic matter is converted into minerals. This often involves bacteria breaking down organic materials and releasing inorganic salts such as nitrates and phosphates back into the soil, accessible for plant uptake.}{0.6875}

\StepRowPretty{2}{2. **Flowchart Analysis**:
   \\- Identify the main components in the diagram that are involved with mineralization.
   \\- Typically, mineralization follows the decomposition pathway, so look for arrows leading to outputs that could be soil minerals.}{0.6875}

\StepRowPretty{3}{3. **Options**:
   \\- D: This point may involve decomposers like bacteria and fungi that convert organic matter to minerals.
   \\- F: This appears to be involved in detritus or initial organic matter feed.
   \\- M: This might be part of the nutrient cycle but may not directly depict mineralization.
   \\- A: This point seems more related to primary producers or initial growth stage.}{0.0625}

\StepRowPretty{4}{4. **Evaluation of Options**:
   \\- D shows arrows indicating a transformation from organic to inorganic content, implying the breakdown and conversion into minerals.
   \\- F, M, A do not visibly indicate this mineralization process.}{0.0}

\StepRowPretty{5}{\#\#\# Answer:}{0.0}

\StepRowPretty{6}{Based on these analyses, D is the most consistent signifier of the mineralization process in this diagram. Therefore, the answer is:}{0.0}

\StepRowPretty{7}{Final answer: D}{0.0}

\bottomrule
\end{tabularx}
}

\vspace{12pt}

\noindent
\textbf{BIS computation} \textit{($\alpha=0.05$).}\\
{\footnotesize
Valid step scores are the 8 entries above. Hard labels: $y_j=1$ iff $\text{score}_j>0$.
}

{\footnotesize
\setlength{\jot}{4pt}
\[
\begin{aligned}
n &= 8,\qquad n_{\text{pos}}=4,\qquad p_{\text{pos}}=\frac{4}{8}=0.5,\\
R &= \frac{0.5625+0.6875+0.6875+0.0625}{4}=0.5,\\
\mathrm{BIS}(x) &= \bigl(p_{\text{pos}}(1-p_{\text{pos}})+0.05\bigr)\cdot R\\
&= (0.5\cdot 0.5 + 0.05)\cdot 0.5 = 0.15.
\end{aligned}
\]
}

\end{tcolorbox}
\end{casestudybox}
\vspace{-1mm}
\caption{Case study (AI2D~\cite{kembhavi2016diagram}): step-level MC scores, hard labels, and BIS computation for one rollout.}
\label{fig:case222}
\end{figure}

\begin{figure}[t]
\centering

\begin{casestudybox}{Case Study 3}

\begin{tcolorbox}[
  enhanced,
  colback=white,
  colframe=black!12,
  boxrule=0.5pt,
  arc=1.6mm,
  left=3mm,right=3mm,top=3mm,bottom=3mm
]

\newlength{\ImgWIV}\setlength{\ImgWIV}{0.55\linewidth}
\newlength{\GapWIV}\setlength{\GapWIV}{0.03\linewidth}
\newlength{\TxtWIV}\setlength{\TxtWIV}{\dimexpr\linewidth-\ImgWIV-\GapWIV\relax}
\newlength{\ImgHIV}\setlength{\ImgHIV}{0.23\textheight} 

\newsavebox{\QBoxIV}
\sbox{\QBoxIV}{%
  \parbox[t]{\TxtWIV}{%
    \setlength{\parindent}{0pt}%
    \setlength{\parskip}{4pt}%
    \linespread{1.02}\selectfont
    \sloppy
    \setlength{\emergencystretch}{2em}

    \textbf{Question.}\\
    {\small
    Angle ABC is equivalent to $\pi/2$. Side DE extending into an equilateral triangle shape beyond the rectangle.
    EFG forms a sector. Side HI continues into an equilateral triangle inside the rectangle.
    AB is equal to 72, Angle ACB measures $60^\circ$, GI measures 71.
    What is the total surface area of the shape GFHJI?\\
    A.\ 954*$\sqrt{3}$\\
    B.\ 3816*$\sqrt{3}$/5\\
    C.\ 70\\
    D.\ 1272*$\sqrt{3}$
    }

    \vspace{2pt}
    \textbf{Ground truth.} {\small D}
  }%
}

\noindent
\begin{tabular}{@{}p{\ImgWIV}@{\hspace{\GapWIV}}p{\TxtWIV}@{}}
\adjustbox{valign=t,center}{%
  \parbox[t][\ImgHIV][t]{\linewidth}{%
    \centering

    \includegraphics[width=\linewidth,height=\ImgHIV,keepaspectratio]{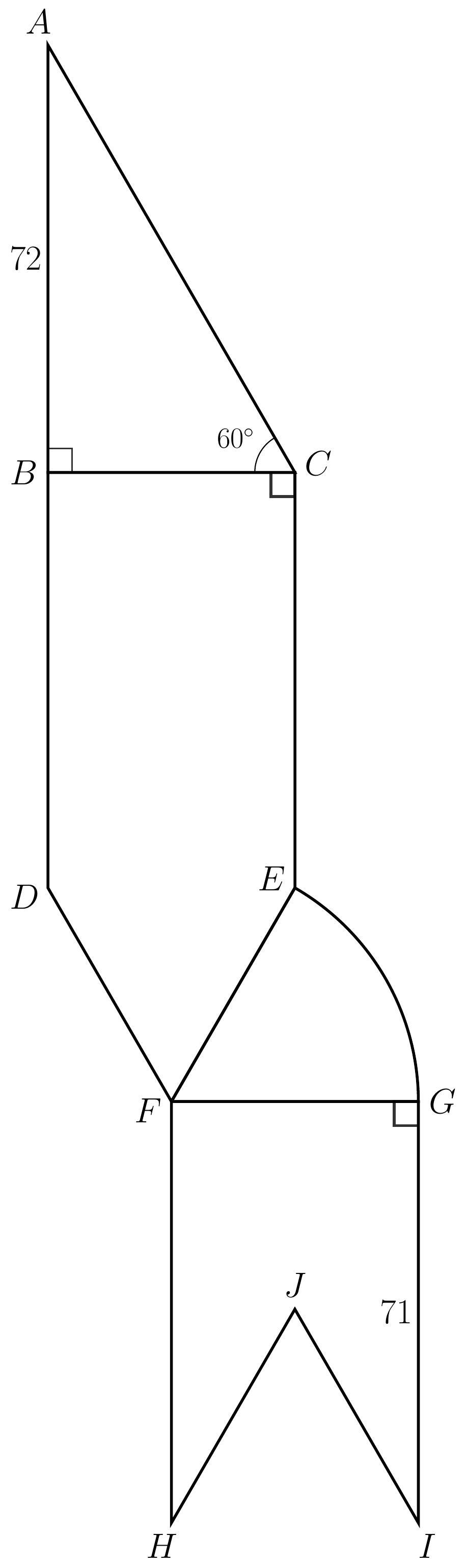}%
  }%
}
&
\adjustbox{valign=t}{%
  \parbox[t][\ImgHIV][t]{\TxtWIV}{\usebox{\QBoxIV}}%
}
\end{tabular}

\vspace{10pt}

\noindent
\textbf{Step scores} \textit{(hard label $y_j=\mathbb{I}[\mathrm{score}_j>0]$).}

{\setlength{\tabcolsep}{6pt}
\renewcommand{\arraystretch}{1.15}
\footnotesize
\renewcommand{\baselinestretch}{1.2}\selectfont
\begin{tabularx}{\linewidth}{@{}p{1.70cm}Y >{\raggedleft\arraybackslash}p{1.15cm} >{\centering\arraybackslash}p{0.55cm}@{}}
\toprule
Step & Content & Score & $y_j$\\
\midrule

\StepRowPretty{1}{To find the total surface area of the complex shape GFHJI, let's break it down step by step.}{0.0625}

\StepRowPretty{2}{1. **Identify the components of the shape GFHJI:**
   - Rectangle GFHI
   - Equilateral triangle HIJ
   - Segments GH and HI, which are parts of the equilateral triangle
   - The sector EFG}{0.0625}

\StepRowPretty{3}{2. **Calculate the area of rectangle GFHI:**
    - Side GF = 71 (given)
    - Determine the width (HI) of the rectangle:
    - HI is part of the equilateral triangle HIJ.
    - The side of the equilateral triangle is GH, which is the same as CI (71 units).
    - The height of the equilateral triangle HIJ can be calculated as \( \frac{\sqrt{3}}{2} \times 71 = 60.98 \).}{0.0625}

\StepRowPretty{4}{3. **Calculate the area of the equilateral triangle HIJ:**
   - Area = \( \frac{\sqrt{3}}{4} \times 71^2 = 865.48 \).}{0.0625}

\StepRowPretty{5}{4. **Calculate the area of the sector EFG:**
   - The radius EF = FG = 71 units.
   - Angle GFE is not given, so assume it's part of the sector attached to the rectangle.
   - Calculate the area of the sector \( \frac{\theta}{360} \pi \times r^2 \).}{0.0}

\StepRowPretty{6}{5. **Combine the areas:**
   - Rectangle GFHI: \( 71 \times 60.98 \approx 4326.68 \)
   - Equilateral triangle HIJ: \( 865.48 \)
   - Sector EFG: Assume \( \frac{\pi}{6} \times 71^2 = 2716.23 \) (half of the sector for simplicity)}{0.0625}

\StepRowPretty{7}{6. **Summarize all areas and subtract the overlapping areas:**
   - Total area = Area of rectangle + Area of equilateral triangle + Area of sector.}{0.0625}

\StepRowPretty{8}{Given calculations involve approximations based on assumed values.}{0.0}

\StepRowPretty{9}{Answer: \(\boxed{B}\) (Based on approximations and calculated components.)}{0.0}

\bottomrule
\end{tabularx}
}

\vspace{12pt}

\noindent
\textbf{BIS computation} \textit{($\alpha=0.05$).}\\
{\footnotesize
Valid step scores are the 9 entries above. Hard labels: $y_j=1$ iff $\mathrm{score}_j>0$.
}

{\footnotesize
\setlength{\jot}{4pt}
\[
\begin{aligned}
n &= 9,\qquad n_{\text{pos}}=6,\qquad p_{\text{pos}}=\frac{6}{9}=\frac{2}{3},\\
R &= \frac{0.0625+0.0625+0.0625+0.0625+0.0625+0.0625}{6}=0.0625,\\
\mathrm{BIS}(x) &= \bigl(p_{\text{pos}}(1-p_{\text{pos}})+0.05\bigr)\cdot R\\
&= \left(\frac{2}{3}\cdot\frac{1}{3}+0.05\right)\cdot 0.0625
= (0.222222\ldots+0.05)\cdot 0.0625 \approx 0.0170138889.
\end{aligned}
\]
}

\end{tcolorbox}
\end{casestudybox}

\caption{Case study (MAVIS-Function~\cite{zhang2025mavis}): step-level MC scores, hard labels, and BIS computation.}
\label{fig:case-geo-gfhji}

\end{figure}

\clearpage
\section{Case Studies} \label{sec:casestudy}
Figures~\ref{fig:case-111}, \ref{fig:case222} and \ref{fig:case-geo-gfhji} illustrate why BIS is a reasonable rollout selection rule when step labels are obtained by thresholding MC scores.

Figure~\ref{fig:case-111} is a representative high-quality rollout.
It contains clear step-level variation and a meaningful mix of positive and negative signals, while the positive steps receive consistently high MC scores.
As a result, BIS assigns a relatively large score, matching our motivation that the most informative rollouts should be both mixed and reliable.

Figure\ref{fig:case222} is a more borderline case.
Although the rollout still contains mixed signals, some ``positive'' steps have low scores and the reasoning later collapses to an incorrect final choice, leading to only moderate $R(x)$.
BIS therefore assigns a moderate score, reflecting that mixture alone is not sufficient if the positive supervision is not reliable.

Finally, Figure~\ref{fig:case-geo-gfhji} shows a more diagnostic failure mode. 
Several steps contain clear, checkable geometry/number mistakes (e.g., mis-identifying given lengths or applying invalid equalities),
yet their MC scores remain non-zero but very small (often at the \(1/16\) level).
Under hard labeling, these low-but-nonzero scores are still binarized to \(y_j{=}1\), creating unreliable ``pseudo-positive'' supervision.
This illustrates that non-zero low MC scores do not necessarily imply correctness, and can introduce noisy supervision during training.
BIS mitigates this issue via the reliability term, which down-weights such low-confidence rollouts and prevents them from dominating the selected subset.

\end{document}